\newcommand{\reals}{\mathbb{R}}
\newcommand{\defas}{\triangleq}
\newcommand{\lpar}{\left(}
\newcommand{\rpar}{\right)}
\newcommand{\lspar}{\left[}
\newcommand{\rspar}{\right]}
\newcommand{\T}{^{\top}} %
\newcommand{\bg}[1]{\boldsymbol{#1}}
\newcommand{\eye}{\mathbf{I}}
\newcommand{\vzero}{\boldsymbol{0}}
\newcommand{\vb}{\boldsymbol{b}}
\newcommand{\vc}{\boldsymbol{c}}
\newcommand{\ve}{\boldsymbol{e}}
\newcommand{\vf}{\boldsymbol{f}}
\newcommand{\vg}{\boldsymbol{g}}
\newcommand{\vh}{\boldsymbol{h}}
\newcommand{\vw}{\boldsymbol{w}}
\newcommand{\vx}{\boldsymbol{x}}
\newcommand{\vz}{\boldsymbol{z}}
\newcommand{\vtheta}{\boldsymbol{\theta}}
\newcommand{\vphi}{\boldsymbol{\phi}}
\newcommand{\vxi}{\boldsymbol{\xi}}
\newcommand{\mH}{\boldsymbol{H}}
\newcommand{\mA}{\boldsymbol{A}}
\newcommand{\mU}{\boldsymbol{U}}
\newcommand{\mPhi}{\boldsymbol{\Phi}}
\newcommand{\mXi}{\boldsymbol{\Xi}}
\newcommand{\Lag}{\mathfrak{L}}
\newcommand{\vlambda}{\boldsymbol{\lambda}}
\newcommand{\vmu}{\boldsymbol{\mu}}
\newcommand{\vcx}{\vc(\vx)}
\newcommand{\vgx}{\vg(\vx)}
\newcommand{\vhx}{\vh(\vx)}
\newcommand{\lrp}{\eta_{\text{primal}}}
\newcommand{\lrd}{\eta_{\text{dual}}}
\newcommand{\Proj}{\Pi}
\newcommand{\X}{\mathcal{X}}
\newcommand{\blobletter}[1]{\raisebox{.5pt}{\textcircled{\raisebox{-.8pt}{{\hspace{-1mm} \small #1}}}}}
\newcommand{\Jac}{\mathcal{J}}
\newcommand{\algo}[1]{{\small {\textsc{#1}}}}
\newcommand{\Polyak}{\algo{Polyak}}
\newcommand{\Nesterov}{\algo{Nesterov}}
\newcommand{\GA}{\algo{GA}}
\newcommand{\Adam}{\algo{Adam}}
\newcommand{\nuPI}{$\bg{\nu}$\algo{PI}\xspace}
\newcommand{\kp}{\kappa_p}
\newcommand{\ki}{\kappa_i}
\newcommand{\kd}{\kappa_d}
\definecolor{lightgray}{RGB}{230, 230, 230}
\definecolor{mathred}{RGB}{204, 69, 90}
\definecolor{mathblue}{RGB}{4, 78, 112}
\definecolor{mathgreen}{RGB}{1, 135, 70}
\DeclareMathOperator{\argminin}{argmin}  
\newcommand{\argmin}[1]{\underset{#1}{\argminin} \;}
\definecolor{linkcolor}{RGB}{0,120,130}
\renewcommand*\backref[1]{\ifx#1\relax \else (Cit. on p. #1) \fi}
\Crefname{algorithm}{Algo.}{Algos.}
\Crefname{theorem}{Thm.}{Thms.}
\Crefname{lemma}{Lemma}{Lems.}
\Crefname{appendix}{Appx.}{Appx.}
\theoremstyle{plain}
\newtheorem{theorem}{Theorem}
\newtheorem*{nonumtheorem}{Theorem}
\newtheorem{lemma}[theorem]{Lemma}
\theoremstyle{remark}
\theoremstyle{problem}
\pgfplotsset{compat=1.17}
\definecolor{captiongray}{RGB}{100,100,100}
\newcommand{\captioncomment}[1]{{\color{captiongray} \footnotesize #1}}
\newcommand{\graytext}[1]{{\color{gray}#1}}
\definecolor{cerulean}{rgb}{0.0, 0.48, 0.65}
\definecolor{mygreen}{RGB}{129, 199, 148}
\definecolor{lightred}{RGB}{247, 171, 171}
\newcounter{algrthm}
\renewcommand{\thealgrthm}{\arabic{algrthm}}
\Crefname{algrthm}{Algorithm}{Algorithms}
\newcounter{thm}
\renewcommand{\thethm}{\arabic{thm}}
\Crefname{thm}{Theorem}{Theorems}
\icmltitlerunning{On PI Controllers for Updating Lagrange Multipliers in Constrained Optimization}
\begin{document}

\twocolumn[

\vspace{-2ex}
\icmltitle{On PI Controllers for Updating\\Lagrange Multipliers in Constrained Optimization}

\icmlsetsymbol{equal}{*}
\icmlsetsymbol{MilaUdeM}{$\dagger$}
\icmlsetsymbol{CIFAR}{$\ddagger$}

\begin{icmlauthorlist}
\icmlauthor{Motahareh Sohrabi}{equal,MilaUdeM}
\icmlauthor{Juan Ramirez}{equal,MilaUdeM}
\icmlauthor{Tianyue H. Zhang}{MilaUdeM}
\icmlauthor{Simon Lacoste-Julien}{MilaUdeM,CIFAR}
\icmlauthor{Jose Gallego-Posada}{MilaUdeM}

\end{icmlauthorlist}

\icmlcorrespondingauthor{Juan Ramirez}{\texttt{juan.ramirez@mila.quebec}}

\icmlkeywords{Machine Learning, ICML, Lagrangian Constrained Optimization, Lagrangian games, PI Controller, damping, sparsity}

\vskip 0.3in
]

\printAffiliationsAndNotice{* Equal contribution. $\dagger$ Mila---Quebec AI Institute and DIRO, Université de Montréal. $\ddagger$ Canada CIFAR AI Chair} %

\begin{abstract}

Constrained optimization offers a powerful framework to prescribe desired behaviors in neural network models. 
Typically, constrained problems are solved via their min-max Lagrangian formulations, which exhibit unstable oscillatory dynamics when optimized using gradient descent-ascent. 
The adoption of constrained optimization techniques in the machine learning community is currently limited by the lack of reliable, general-purpose update schemes for the Lagrange multipliers.
This paper proposes the \nuPI algorithm and contributes an optimization perspective on Lagrange multiplier updates based on PI controllers, extending the work of \citet{stooke2020responsive}.
We provide theoretical and empirical insights explaining the inability of momentum methods to address the shortcomings of gradient descent-ascent, and contrast this with the empirical success of our proposed \nuPI controller. Moreover, we prove that \nuPI generalizes popular momentum methods for single-objective minimization.
Our experiments demonstrate that \nuPI reliably stabilizes the multiplier dynamics and its hyperparameters enjoy robust and predictable behavior.

\end{abstract}

\doparttoc %
\faketableofcontents %
\part{} %

\vspace{-10ex}

\section{Introduction}
\label{sec:introduction}

The need to enforce complex behaviors in neural network models has reinvigorated the interest of the machine learning community in constrained optimization techniques. Recent applications include fairness \cite{cotter2019proxy, zafar2019fairness, fioretto2020LagrangianDuality, hashemizadeh2023balancing}, sparsity \cite{gallego2022controlled}, active learning \cite{elenter2022lagrangian}, reinforcement learning \cite{stooke2020responsive,farahmand2021pid} and model quantization \cite{hounie2023neural}. 

\begin{figure}[h]
    \vspace{-2ex}
    \centering
    \hspace*{-2ex} \includegraphics[scale=0.85]{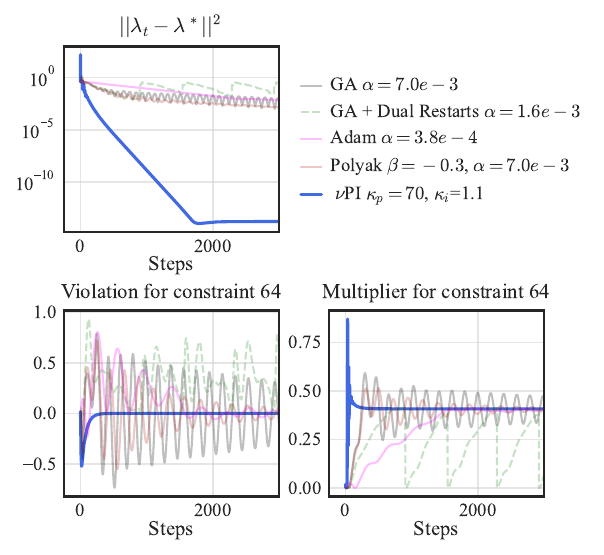}
    \vspace{-5ex}
    \caption[Dynamics for different dual optimizers on a hard-margin SVM problem (\cref{eq:svm_problem}).]{Dynamics for different dual optimizers on a hard-margin SVM problem (\cref{eq:svm_problem}). Amongst the tested methods, \textbf{\nuPI is the only method to successfully converge to the optimal dual variables}.
    \captioncomment{Each optimizer uses the best hyperparameters found after a grid-search aiming to minimize the distance to the optimal $\vlambda^*$ after 5.000 steps. For improved readability, the plot shows the first 3.000 steps.
    Constraint 64 corresponds to a support vector. All methods achieved perfect training accuracy.
    }
    }
    \label{fig:svm_dynamics_plot}
    \vspace*{-5ex}
\end{figure}

Algorithmic approaches based on the Lagrangian min-max representation of the original constrained optimization problem \citep[\S 5]{boyd2004convex} are commonly preferred in the context of neural networks since (i) they are amenable to inexact, gradient-based optimization \citep[\S 5.2]{bertsekas2016nonlinear}, (ii) making it easy to incorporate constraints into existing pipelines for unconstrained optimization \cite{cotter2019proxy, gallegoPosada2022cooper}, and (iii) they do not require special structure in the objective or constraint functions (such as convexity or efficient projection onto the feasible set \cite{nocedal2006numerical}).

Despite their wider applicability, solving Lagrangian problems involving neural networks is challenging as it simultaneously entails the difficulties of nonconvex optimization on large-scale models \cite{bottou2018optimization}, and the potential for instability and oscillations due to the adversarial min-max nature of the Lagrangian \cite{stooke2020responsive}.

Lagrangian problems are commonly optimized using some variant of gradient-descent ascent (GDA) \cite{arrow1958studies}. 
Despite local convergence results in idealized settings \cite{lin2020gradient, zhang2022near}, the optimization dynamics of GDA typically exhibit instabilities, overshoot or oscillations \citep{platt1988constrained,gidel2019negative,stooke2020responsive,gallego2022controlled}.

Alleviating the shortcomings of GDA on Lagrangian problems is an important step towards widespread adoption constrained optimization in deep learning.
Recently, \citet{stooke2020responsive} proposed a solution based on a PID controller \cite{astrom1995pid} for updating the Lagrange multipliers in safety-constrained reinforcement learning problems. Our manuscript expands on their work by providing an optimization-oriented analysis of \nuPI (\cref{algo:nupi}), a related PI controller that incorporates an exponential moving average on the error signal.

\cref{fig:svm_dynamics_plot} illustrates how our proposed \nuPI controller successfully dampens the oscillations on a hard-margin SVM task, achieving fast convergence to the optimal Lagrange multipliers. In contrast, a wide range of popular methods for single-objective minimization exhibit unstable, oscillatory dynamics and fail to converge in this task. See \S\ref{sec:svm} for further details on this experiement.

\textbf{Contributions:} \blobletter{1}
We introduce the \nuPI algorithm (\S \ref{sec:nupi}) and prove that \nuPI generalizes popular momentum methods like \Polyak~and \algo{Nesterov} (\cref{thm:um_as_nupi}), as well as traditional PI controllers. 
\blobletter{2} We provide conceptual insights explaining how \nuPI improves the dynamics of the Lagrange multipliers: \S\ref{sec:one_over_x} presents a qualitative analysis of the updates executed by the \nuPI algorithm in contrast to gradient ascent; in \S\ref{sec:oscillator} we study the spectral properties of the continuous-time system.
\blobletter{3} In \S\ref{sec:nupi_practical}, we provide a heuristic to tune the new hyperparameter $\kp$ of the \nuPI algorithm; we also demonstrate that it has a monotonic effect in the damping of oscillations.
\blobletter{4} Our experiments on hard-margin SVMs, sparsity tasks using ResNets, and algorithmic fairness demonstrate that \nuPI leads to improved stability and convergence.

\textbf{Code:}
\href{https://github.com/motahareh-sohrabi/nuPI}{\texttt{github.com/motahareh-sohrabi/nuPI}}

\textbf{Scope:} Due to the highly specialized techniques used for 
training neural networks \citep{dahl2023benchmarking}, in this work we concentrate on iterative schemes that do \textit{not} modify the optimization protocol used on the model parameters. In other words, we restrict our attention to update schemes on the Lagrange multipliers only, which allows us to reuse the same optimizer choices for the (primal) model parameters as used in the unconstrained setting.

\vspace{-1ex}

\section{Related Works}
\label{sec:related_work}

\textbf{Constrained optimization.} We are interested in Lagrangian methods \citep{arrow1958studies} that allow tackling general (nonconvex) constrained optimization problems with differentiable objective and constraints. Classical constrained optimization \cite{nocedal2006numerical, bertsekas2016nonlinear} techniques include projection methods \citep{bertsekas1976goldstein}, barrier methods \citep{dikin1967iterative}, and methods of feasible directions \citep{frank1956algorithm,zoutendijk1960methods}. 
These approaches usually make assumptions on the structure of the problem, such as convexity of the objective or constraints, the existence of an efficient projection operator onto the feasible set, or access to a linear minimization oracle. Such assumptions restrict their applicability to deep learning tasks.  
Other popular techniques such as penalty methods \citep{nocedal2006numerical} and the method of multipliers \citep{bertsekas1975method}, apply to general nonconvex problems, but are outside the scope of this work.  

\textbf{Min-max optimization.} The Lagrangian formulation of a nonconvex constrained optimization problem leads to a nonconvex concave min-max problem. 
Under idealized assumptions, gradient descent-ascent has local convergence guarantees for said problems \citep{lin2020gradient}, but may exhibit oscillations \citep{platt1988constrained,gidel2018variational}. 
Under stronger assumptions, extragradient \citep{korpelevich1976extragradient} and the optimistic gradient method \citep{popov1980modification} converge at a nearly optimal rate \citep{mokhtari2020convergence}. These methods, as well as \Polyak~with negative momentum \citep{gidel2019negative} and PID controllers \citep{stooke2020responsive}, have been shown to dampen the oscillations of GDA. 
However, negative momentum may be suboptimal for strongly convex-strongly concave min-max problems \citep{zhang2021suboptimality}. 

Our work focuses on the \textit{dynamics} of Lagrangian games. We provide insights on why popular techniques for minimization may exacerbate oscillations and overshoot, and why PI controllers can be effective at damping oscillations. Our proposed method \nuPI is a generalization of both (negative) momentum and the optimistic gradient method.

\textbf{PID controllers and optimization.} 
\citet{an2018pid} studied PID control for training machine learning models by considering the negative loss gradient as the error signal to the controller.
PID controllers have been shown to generalize gradient descent \citep{hu2017control} and momentum \citep{recht2018blog}.
\citet{stooke2020responsive,casti2023control} have highlighted the effectiveness of controllers at optimizing constrained optimization tasks. 

In this work, we propose a PI-like update rule for the dual variables in a Lagrangian min-max game. We prove our algorithm generalizes momentum methods and we provide conceptual insights to support the empirical effectiveness of PI controllers in reducing oscillations and overshoot in the constrained optimization dynamics. In \cref{appx:pid_related_works}, we elaborate on the distinctions between our work and existing research on PID controllers for optimization.

\section{Lagrangian Optimization}
\label{sec:lagrangian_optimization}

Consider a constrained optimization problem with $m$ inequality and $n$ equality constraints, represented by functions $\vg: \X \rightarrow \reals^m $ and $\vh:~\X~\rightarrow ~\reals^n$, respectively:
\begin{align}
    \label{eq:cmp_definition}
    \underset{\vx}{\text{min}} \, f(\vx) \hspace{2mm}
    \text{subject to} \hspace{2mm}  \vgx \le \vzero \hspace{2mm}
    \text{and} \hspace{2mm} \vhx = \vzero.
\end{align}
We do not make any assumptions on the functions $f$, $\vg$, and $\vh$ beyond almost-everywhere differentiability. We refer to the values of $\vg$ and $\vh$ as the \textit{constraint violations}. In particular, we are interested in optimization problems where $\vx$ corresponds to the parameters of a neural network, leading to objective and constraint functions that may be nonconvex. This typically precludes the use of ``classical'' constrained optimization methods, as those discussed in \S\ref{sec:related_work}.

The Lagrangian min-max problem associated with the constrained optimization problem in \cref{eq:cmp_definition} is given by:
\begin{equation}
    \label{eq:lagrangian_x_lambda_mu}
    \underset{\vx}{\text{min}} \underset{\vlambda \ge \vzero, \, \vmu}{\text{max}}  \Lag(\vx, \vlambda, \vmu) \triangleq f(\vx) + \vlambda\T \vgx + \vmu\T \vhx,
\end{equation}
where $\vlambda$ and $\vmu$ are vectors of \textit{Lagrange multipliers} associated with the inequality and equality constraints, respectively. \cref{eq:lagrangian_x_lambda_mu} constitutes a 
nonconvex-concave zero-sum game between $\vx$ (known as the \textit{primal} player) and $\{ \vlambda, \vmu \}$ (known as the \textit{dual} player). We are interested in algorithmic approaches that identify saddle points of the Lagrangian $\Lag(\vx, \vlambda, \vmu)$ as these correspond to constrained optima.

In general, Lagrangian-based approaches do not constitute \textit{feasible methods} (i.e. visiting only feasible iterates). We judge a method's success based on its asymptotic feasibility, or at the end of a pre-determined optimization budget. 

\textbf{Simultaneous updates.} The simplest algorithm to solve the problem in \cref{eq:lagrangian_x_lambda_mu} is simultaneous gradient descent-ascent (GDA) \citep{arrow1958studies}:
\begin{align*}
\label{eq:simultaneous_gda_updates}
    & \hspace{4mm} \vmu_{t+1} \leftarrow  \vmu_{t} +  \lrd \,  \nabla_{\vmu} \Lag(\vx_{t}, \vlambda_{t}, \vmu_{t}) = \vmu_t + \lrd \, \vh(\vx_t)  \\
    &\begin{cases}
    \hat{\vlambda}_{t+1} \leftarrow  \vlambda_{t} +  \lrd \,  \nabla_{\vlambda} \Lag(\vx_{t}, \vlambda_{t}, \vmu_{t}) = \vlambda_t + \lrd \, \vg(\vx_t) \\
    \vlambda_{t+1} \leftarrow \Proj_{\reals^m_+}(\hat{\vlambda}_{t+1}) = \max \left( 0, \hat{\vlambda}_{t+1} \right)
    \end{cases} \\
    & \hspace{4mm} \vx_{t+1} \leftarrow \vx_t - \lrp \,  \nabla_{\vx} \Lag(\vx_{t}, \vlambda_{t}, \vmu_{t}),
\end{align*}
where the middle two equations execute a projected gradient-ascent step enforcing the non-negativity of the multipliers~$\vlambda$.

To simplify notation, we will group the dual variables as $\vtheta~=~\lspar \vlambda, \vmu \rspar\T$ and the constraints $\vc(\vx) = \lspar \vgx, \vhx \rspar\T$ which yields the concise Lagrangian problem:
\begin{equation}
    \label{eq:lagrangian}
    \underset{\vx}{\text{min}} \underset{\vtheta \in \reals^m_+ \times \reals^n}{\text{max}}  \Lag(\vx, \vtheta) \triangleq f(\vx) + \vtheta\T \vcx
\end{equation}
Note that the primal update direction $\nabla_{\vx} \Lag$ is a linear combination of the objective and constraint gradients\textemdash which can be efficiently computed using automatic differentiation, without storing $\nabla f$ and $\Jac \vc$\footnote{$\Jac\vf \defas \begin{bmatrix} \nabla f_1 & \cdots & \nabla f_p \end{bmatrix} \in \reals^{d \times p}$ denotes the (transpose) Jacobian matrix of a function $\vf: \reals^d \rightarrow \reals^p$.} separately. 
On the other hand, $\nabla_{\vtheta} \Lag = \vcx$, and thus the GDA update on the multipliers corresponds to the \textit{integration} (i.e. accumulation) of the constraint violations over time. We highlight that the cost of updating the Lagrange multipliers is typically negligible relative to the cost of computing $f$ and $\vc$.

\textbf{Alternating updates.}
Prior work has demonstrated the advantages of alternating updates in min-max optimization: \citet{zhang2022near} established that alternating GDA achieves a near-optimal local convergence rate for strongly concave-strongly convex problems (strictly better than simultaneous GDA); 
\citet{gidel2018variational} showed that alternating GDA leads to bounded iterates on smooth bilinear games, as opposed to divergence for simultaneous updates.
Besides the improved convergence and stability benefits, alternating updates are particularly suitable for Lagrangian games from a computational standpoint due to the linear structure of the Lagrangian with respect to the dual variables.
Concretely, consider the \textit{alternating} update scheme:
\begin{align}
\label{eq:alternating_gda_updates}
    \begin{split}
    &\hspace{-5mm} \begin{cases}
    \hat{\vtheta}_{t+1} \leftarrow \vtheta_{t} +  \lrd \,  \nabla_{\vtheta} \Lag(\vx_{t}, \vtheta_{t}) = \vtheta_t + \lrd \, \vc(\vx_t) \\
    \vtheta_{t+1} \leftarrow 
    \Proj_{\reals^m_+ \times \reals^n}(\hat{\vtheta}_{t+1})
    \end{cases} \\
    &\vx_{t+1} \leftarrow \vx_t - \lrp \,  \nabla_{\vx} \Lag(\vx_{t}, \vtheta_{t+1}) \\
    & \hspace{9mm} = \vx_t - \lrp \lpar \nabla f(\vx_t)  + \Jac \vc(\vx_t) \, \vtheta_{t} \rpar
    \end{split}
\end{align}

The alternating updates in \cref{eq:alternating_gda_updates}, only require computing $f(\vx_t)$ and $\vc(\vx_t)$ once, just as when performing simultaneous updates. 
In a general zero-sum game, where $\Lag(\vx_t, \vtheta_t)$ does not decouple as in the Lagrangian case, the second part of the alternation might require re-evaluating $\Lag(\vx_t, \vtheta_{t+1})$ entirely. However, note that thanks to the affine structure of $\Lag$ with respect to $\vtheta$, the update on $\vx$ can be calculated efficiently without having to re-evaluate $f$ or $\vc$.

These theoretical and practical advantages motivate our decision to concentrate on alternating update schemes like \cref{eq:alternating_gda_updates} for solving the problem in  \cref{eq:lagrangian} in what follows. 

\textbf{Practical remarks.} In practice, updates on the primal variables require more sophisticated methods (with intricate hyperparameter tuning) than the plain gradient descent update presented in \cref{eq:alternating_gda_updates} to achieve good performance, including any number of highly specialized procedures developed for training neural networks \citep{dahl2023benchmarking}. 

Moreover, for certain applications, a training pipeline designed to minimize a single, \textit{unconstrained} objective might be in place. In these cases, it is desirable to develop update schemes for the Lagrange multipliers that allow for seamlessly incorporating constraints into the model development pipeline \textit{without having to engineer from scratch a new recipe for training the model.} 

In this paper, we concentrate on different update schemes for the Lagrange multipliers and assume that a well-tuned optimizer for the model parameters is available. 

\textbf{Shortcomings of gradient ascent.} As mentioned previously, gradient ascent (\GA) on the Lagrange multipliers corresponds to accumulating the observed constraint violations over time. For simplicity, let us concentrate on a single inequality constraint $c(\vx)$. Whenever the constraint is being violated (resp. satisfied), the violation is positive $c(\vx) > 0$ (resp. negative) and thus the value of the corresponding multiplier is increased (resp. decreased) by $\lrd c(\vx)$. Recall that the projection step ensures that the inequality multipliers remain non-negative. 

Therefore, the value of the multiplier depends on the entire optimization trajectory through the value of the observed violations. In particular, after a long period of infeasibility, the value of the multiplier will be large, biasing the gradient $\nabla_{\vx} \Lag$ towards reducing the violation and thus improving the feasibility of the model.

An insufficient increase of the multiplier will cause the constraint to be \textit{ignored}, while an excessively large value of the multiplier will lead the constraint to be enforced \textit{beyond} the prescribed constraint level. The latter behavior can also occur if the multiplier fails to decrease sufficiently fast once the constraint is satisfied. Repeated cycles of insufficient or excessive change in the multiplier manifest in ignoring or overshooting, thus forming oscillations. See \cref{fig:svm_dynamics_plot,fig:dynamics_ga} for illustrations of these behaviors.

\begin{figure}[h!]
    \centering
    \vspace{-1ex}
    \includegraphics[scale=1]{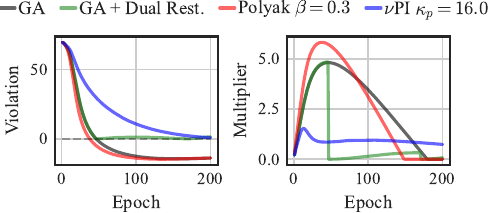}
    \vspace{-3ex}
    \caption[Constraint dynamics for \GA, \Polyak~and \nuPI in a sparsity task (\S\ref{sec:sparsity}).]{Constraint dynamics for \GA, \Polyak~ and \nuPI in a sparsity task (\S\ref{sec:sparsity}). Constrained optimal solutions for this problem lie at the boundary of the feasible set. The excessive growth in the value of the multiplier for GA causes the constraint to overshoot into the interior of the feasible set. \textbf{The improved multiplier updates of the \nuPI algorithm remove the overshoot in the constraint and multiplier.}}
    \label{fig:dynamics_ga}
\end{figure}

\vspace{-3ex}

In short, an ideal update rule for the multiplier would behave \textit{adaptively}, based on the observed violations throughout the execution of the optimization. This begs the question of whether existing adaptive optimization such as \Polyak, \Nesterov~and \Adam~would reliably resolve these issues. Sections \ref{sec:nupi} and \ref{sec:experiments} provide a \textit{negative} answer to this question.

\textbf{Dual restarts.} \citet{gallego2022controlled} proposed an approach to mitigate the overshoot in inequality constraints called \textit{dual restarts}: once a constraint is \textit{strictly} satisfied, its associated dual variable is reset to zero. This corresponds to a best response (in game-theoretic terms) of the dual player. Dual restarts prevent excessive enforcement of constraints, which can degrade the achieved objective function value.

However, dual restarts are not suitable for general constrained optimization problems since they rely on determining the satisfaction of the constraint \textit{exactly}. Constraint estimates may (wrongly) indicate strict feasibility due to (i) stochasticity in their estimation, (ii) numerical precision errors making active constraints appear strictly feasible, or (iii) a ``temporary'' strict satisfaction of the constraint. \cref{fig:svm_dynamics_plot} illustrates the undesirable dynamics caused by dual restarts when applied to an SVM task in which the support vectors correspond to strictly active inequality constraints.

In \S \ref{sec:nupi}, we show that \nuPI mitigates the overshoot of inequality constraints, with additional benefits: (i) controllable degree of overshoot (governed by the $\kp$ hyperparameter), (ii) compatibility with equality (and strictly feasible inequality) constraints, and (iii) damping of multiplier oscillations.

\section{$\nu$PI Control for Constrained Optimization}
\label{sec:nupi}

Following \citet{stooke2020responsive}, we consider the learning of an optimal feasible model solving \cref{eq:cmp_definition} as a dynamical system. 
Thus, we can think of the update rule for the multipliers as a control algorithm that aims to \textit{steer the system toward feasibility}. 
We emphasize that we are not trying to control general dynamical systems, but rather systems that arise from partial, inexact minimization (e.g. gradient-based updates) on a min-max Lagrangian game.
In other words, we assume that $\vx_{t-1} \mapsto \vx_{t}$ is updated so as to minimize the current Lagrangian $\Lag(\,\cdot \,, \vtheta_{t})$. 
\Cref{fig:plant_diagram} illustrates the control pipeline we consider throughout this work. 

\vspace{-4ex}

\begin{figure}[ht]
\centering
\begin{tikzpicture}[auto, node distance=0.9cm,>=latex', scale=0.7, every node/.style={scale=0.7}]

    \tikzstyle{block} = [rectangle, draw, text centered, rounded corners, minimum height=4em]
    \tikzstyle{controller} = [block, text width=13em, fill=LightBlue!40]
    \tikzstyle{plant} = [block, text width=16em, fill=Apricot!25] 
    \tikzstyle{measurement} = [block, text width=7em, fill=Apricot!25] 
    \tikzstyle{projection} = [block, text width=5em, fill=LightBlue!40] 
    
    \tikzstyle{circleblock} = [circle, draw, fill=red!25, minimum size=1.5em]
    \tikzstyle{line} = [draw, -latex', line width=0.6pt]

    \node [plant] (plant) {\textbf{Lagrangian dynamics}\\ $\text{approx. } \argmin \vx \Lag(\vx, \vlambda_{t}, \vmu_{t}) \defas f(\vx) + \vlambda_{t}^{\top} \vg(\vx) + \vmu_{t}^{\top} \vh(\vx)$};
    
    \node [measurement, right=of plant, xshift=-3mm] (measurement) {\textbf{Constraint\\measurement} };
    
    \node [controller, above=of $(plant)!0.5!(measurement)$, align=center] (controller2) {\textbf{Inequality Controller} \\$\hat{\vlambda}_{t+1}$ = \nuPI $( \vlambda_{t}, \vg(\vx_{t}), \vlambda_0)$};
    
    \node [controller, below=of $(plant)!0.5!(measurement)$, align=center] (controller1) {\textbf{Equality Controller} \\$\vmu_{t+1}$ = \nuPI $( \vmu_{t}, h(\vx_{t}), \vmu_0)$};

    \node [projection, left=of controller2, xshift=3mm] (projection) {\textbf{Projection} \\ $\ge 0$};

    \node [circleblock, right=of controller1, xshift=1.3cm] (circle1) {};

    \node [circleblock, right=of controller2, xshift=1.3cm] (circle2) {};

    \path [line] (plant) -- node[midway, above] {$\vx_{t}$} (measurement);

    \path [line] (controller2.west) -- node[near start, above, xshift=-2mm] {$\hat{\vlambda}_{t+1}$} (projection);
    
    \path [line] (projection) -- ++(-2cm,0) node[near start, above, xshift=-2mm] {$\vlambda_{t+1}$} |- ([yshift=0.27cm]plant.west) ;
    
    \path [line] (controller1.west) -- node[near start, above] {$\vmu_{t+1}$} ++(-4cm,0) |- ([yshift=-0.25cm]plant.west);
    
    \path[line] ([yshift=-0.25cm]measurement.east) -| node[below, near start] {$\vh(\vx_{t})$}  node[above, near end, xshift=-2mm, yshift=-6mm] {$+$} (circle1);

    \path[line]([yshift=+0.25cm]measurement.east) -| node[above, near start] {$\vg(\vx_{t})$} node[above, near end, xshift=-2mm, yshift=2mm] {$+$}  (circle2);

    \path[line] (circle1)-- node[above, midway] {$\underset{\displaystyle \vh(\vx_{t})-\textbf{0}}{\textbf{Error}}$} (controller1.east);

    \path [line] ([yshift=-5mm]circle1.south) -- node[below, near start] {$\underset{\displaystyle\textbf{level:}\, \mathbf{0}}{\textbf{Target}}$} node[above, near end, xshift= 3mm, yshift=-2mm] {$-$}  (circle1);

    \path[line](circle2)-- node[above, midway] {$\underset{\displaystyle \vg(\vx_{t})-\textbf{0}}{\textbf{Error}}$}(controller2.east);

    \path [line] ([yshift=5mm]circle2.north) -- node[above, near start] {$\underset{\displaystyle\textbf{level:}\, \mathbf{0}}{\textbf{Target}}$} node[below, near end, xshift= 3mm, yshift=2mm] {$-$}  (circle2);

\end{tikzpicture}
\vspace{-4ex}
    \caption[\nuPI control pipeline for updating the Lagrange multipliers in a constrained optimization problem.]{
    \nuPI control pipeline for updating the Lagrange multipliers in a constrained optimization problem.
    \captioncomment{We consider the update on the primal variables as a black-box procedure that receives the multipliers $\vlambda_t$ and primal variables $\vx_{t-1}$ as input, and returns an updated $\vx_{t}.$ The multiplier update is executed by the controller, using the constraint violations as the error signal.}
}
\label{fig:plant_diagram}
\end{figure}
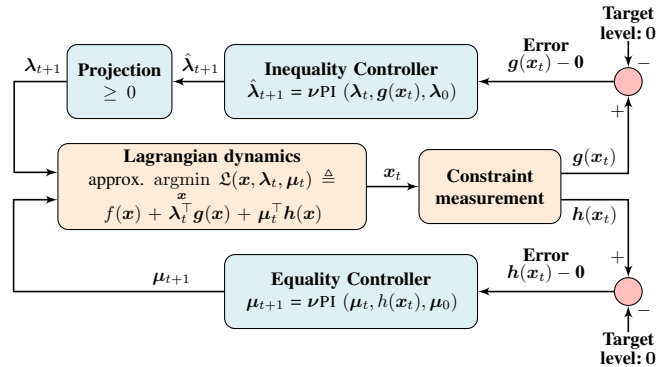

An assumption entailed by the control perspective
in \cref{fig:plant_diagram} is that an increase (resp. decrease) in the control variable (the Lagrange multipliers $\vtheta_t$) leads to a decrease (resp. increase) in the controlled quantity (the constraint violations $\vc(\vx_t)$). 
This assumption holds for constrained optimization problems since an increase in the multipliers leads the primal minimization of the Lagrangian to focus on reducing the value of the constraints 
(as mentioned during the discussion of gradient descent-ascent dynamics in \S \ref{sec:lagrangian_optimization} ).

Note that our black-box assumption on the nature of the primal update allows for an arbitrary choice of optimizer for minimizing $\Lag(\, \cdot \,, \vtheta_{t})$. After obtaining an updated primal iterate $\vx_{t}$, the new constraint violations $\vg(\vx_{t})$ and $\vh(\vx_{t})$ are measured and used as the error signals for the inequality- and equality-constraint controllers, yielding updated multipliers $\vtheta_{t+1}$. The projection block ensures the non-negativity of the multipliers for inequality constraints.

\subsection{$\nu$PI algorithm}

Our main algorithmic contribution is the multiplier update scheme presented in \cref{algo:nupi}. This is a simple generalization of a PI controller (i.e. a PID controller \cite{astrom1995pid} with $\kd = 0$) by including an exponential moving average (of the error signal) in the proportional term. Indeed, the traditional PI controller is recovered when $\nu = 0$.

\begin{algorithm}[h]
   \caption{\nuPI update}
   \label{algo:nupi}
    \begin{algorithmic}
       \STATE {\bfseries Args:} EMA coefficient $\nu$, proportional ($\kp$) and integral ($\ki$) gains; initial conditions $\vxi_{0}$ and $\vtheta_{0}$.
       \STATE \graytext{\small 1:} Measure current system error $\ve_t$
       \STATE \graytext{\small 2:} $\vxi_{t} \leftarrow \nu \vxi_{t-1} + (1 - \nu) \ve_t$ \hfill \graytext{$\triangleright$ {\footnotesize for $t \ge 1$}}
       \STATE \graytext{\small 3:} $\vtheta_{t+1} \leftarrow \vtheta_0 + \kp \vxi_{t} + \ki \sum_{\tau=0}^{t} \ve_{\tau}$
    \end{algorithmic}
\end{algorithm}

The \nuPI update can be equivalently expressed in terms of a recursive update (see \cref{thm:nupi_update_recursive} in \cref{appx:nupi_momentum_connections}) as:
\begin{align}
    \vtheta_1 &= \vtheta_0 + \ki \ve_0 + \kp \vxi_0 \\
    \label{eq:nupi_recursive}
    \vtheta_{t+1} &= \vtheta_{t} +  \ki \ve_t + \kp \left( \vxi_t - \vxi_{t-1}\right) \hspace{2mm} \text{for} \,\, t \ge 1.
\end{align}

\subsection{Connections to optimization methods}
When the error signal corresponds to the negative gradient of a cost function $\ve_t = - \nabla f_t$, \cref{algo:nupi} has straightforward equivalences with common minimization methods (see~\cref{appx:nupi_momentum_connections}). For example, \nuPI$(\nu=0, \kp=0, \ki)$ is equivalent to \algo{GD}~$(\alpha = \ki)$ \citep{stooke2020responsive, lessard2016analysis, an2018pid}. When $\nu = 0$ and $\kp = \ki = \alpha$, \nuPI recovers a single-player version of the \algo{OptimisticGradient} (\algo{OG}) method \citep{popov1980modification}, with step-size $\alpha$.
When $\nu = 0$, but $\kp$ and $\ki$ are allowed to differ, \nuPI coincides with the generalized \algo{OG} studied by \citet{mokhtari2020unified}.
Since we use \nuPI for updating the multipliers, we phrase the updates in \cref{algo:nupi} based on a maximization convention.

Moreover, our proposed algorithm \textbf{\nuPI generalizes popular momentum methods} such as \algo{Polyak}---also known as \algo{HeavyBall}---\citep{polyak1964some} and \algo{Nesterov} \citep{nesterov1983method}.\footnote{We consider a variant of the Nesterov method that uses a constant momentum coefficient.} This connection, stated formally in \cref{thm:um_as_nupi}, will allow us to understand (\S\ref{sec:one_over_x}) why traditional momentum methods are \textit{insufficient} to address the shortcomings of gradient ascent for Lagrangian optimization.

We take advantage of the \algo{UnifiedMomentum} $(\alpha, \beta, \gamma)$ framework introduced by \citet{shen2018unified} to concisely develop a joint analysis of $\algo{Polyak}(\alpha, \beta)=\algo{UM}(\alpha, \beta, \gamma=0)$ and $\algo{Nesterov}(\alpha, \beta)=\algo{UM}(\alpha, \beta, \gamma=1)$.

\begin{algorithm}[h]
   \caption{\algo{UnifiedMomentum} update \citep{shen2018unified}}
   \label{algo:um}
    \begin{algorithmic}
       \STATE {\bfseries Args:} step-size $\alpha$, momentum coefficient $\beta$, interpolation factor $\gamma \in \lspar 0, \frac{1}{1-\beta} \rspar$; initial conditions $\vphi_{0} = \vzero$ and $\vtheta_0$.
       \STATE \graytext{\small 1:} Measure current system error $\ve_t$
       \STATE \graytext{\small 2:} $\vphi_{t+1} \leftarrow \beta \vphi_{t} + \alpha \ve_t$
       \STATE \graytext{\small 3:} $\vtheta_{t+1} \leftarrow \vtheta_{t} + \vphi_{t+1}  + \beta \gamma \left(\vphi_{t+1} - \vphi_{t} \right)$
    \end{algorithmic}
\end{algorithm}

\begin{theorem}{}
\label{thm:um_as_nupi}

    {\normalfont [Proof in \cref{appx:nupi_momentum_connections}.]}
    Under the same initialization $\vtheta_0$, \algo{UnifiedMomentum}$(\alpha, \beta\neq 1, \gamma)$ is a special case of the \nuPI algorithm with the hyperparameter choices:
    \vspace{-1ex}
    \begin{align}
        &\hspace{1mm}\nu \leftarrow \beta 
        \hspace{11mm} \vxi_{0} \leftarrow (1 - \beta) \ve_0 \\
        &\ki \leftarrow \frac{\alpha}{1- \beta} 
        \hspace{4mm} 
        \kp \leftarrow - \frac{\alpha \beta}{(1 - \beta)^2}  \lspar 1 - \gamma ( 1 - \beta) \rspar .
    \end{align}
\end{theorem}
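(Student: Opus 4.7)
The plan is to prove the equivalence by exhibiting a closed-form relationship between the UM auxiliary state $\vphi_t$ and the $\nu$PI auxiliary state $\vxi_t$, then matching the two update rules term-by-term. I would work directly with the recursive form of $\nu$PI in \cref{eq:nupi_recursive} rather than the summation form in \cref{algo:nupi}, since UM is also stated recursively.

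First, I would unroll both auxiliary states from their recursions. Since $\vphi_0 = \vzero$, a quick induction on the UM recursion gives $\vphi_{t+1} = \alpha \sum_{\tau=0}^{t} \beta^{t-\tau} \ve_\tau$ for all $t \ge 0$. For $\nu$PI, with $\nu = \beta$ and the prescribed initialization $\vxi_0 = (1-\beta) \ve_0$, the same type of induction on $\vxi_t = \beta \vxi_{t-1} + (1-\beta) \ve_t$ yields $\vxi_t = (1-\beta) \sum_{\tau=0}^{t} \beta^{t-\tau} \ve_\tau$. Comparing these two expressions immediately gives the key identity
\begin{equation*}
\vphi_{t+1} \;=\; \frac{\alpha}{1-\beta}\, \vxi_t \quad \text{for all } t \ge 0,
\end{equation*}
which reduces the problem to matching scalar coefficients.

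Next, I would verify that the two updates on $\vtheta$ agree, handling the step $t = 0$ separately because $\vphi_0 = \vzero$ while the generic $\nu$PI recursion only starts at $t \ge 1$. For $t = 0$, UM gives $\vtheta_1 = \vtheta_0 + (1 + \beta \gamma)\vphi_1 = \vtheta_0 + (1 + \beta\gamma)\alpha \ve_0$, and the initial $\nu$PI update yields $\vtheta_1 = \vtheta_0 + \ki \ve_0 + \kp (1-\beta)\ve_0$; substituting the claimed $\ki$ and $\kp$ and using the algebraic simplification $(1 + \beta\gamma)(1-\beta) - 1 = -\beta\,[1 - \gamma(1-\beta)]$ confirms equality. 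For $t \ge 1$, I would rewrite the UM increment using the identity above as $\vphi_{t+1} + \beta \gamma(\vphi_{t+1} - \vphi_t) = \tfrac{\alpha}{1-\beta} \vxi_t + \beta\gamma \tfrac{\alpha}{1-\beta}(\vxi_t - \vxi_{t-1})$, and rewrite the $\nu$PI increment using $\ve_t = (\vxi_t - \beta \vxi_{t-1})/(1-\beta)$ (from the $\vxi$ recursion), expressing both increments as linear combinations $A\,\vxi_t + B\,\vxi_{t-1}$ and matching the coefficients of $\vxi_t$ and $\vxi_{t-1}$. Each of the two resulting scalar equations produces the same value of $\kp$ via the simplification mentioned above, confirming consistency.

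The main obstacle, and the only place where anything nontrivial happens, is pinning down the exact closed form of $\vxi_0$: the claim is sensitive to this initialization because any other choice breaks the clean identity $\vphi_{t+1} = \frac{\alpha}{1-\beta} \vxi_t$ at $t = 0$, forcing a separate discrepancy term in the first update. Once $\vxi_0 = (1-\beta)\ve_0$ is selected so that the identity holds for all $t \ge 0$, the rest reduces to the algebraic simplification $(1+\beta\gamma)(1-\beta) - 1 = -\beta\,[1 - \gamma(1-\beta)]$, which yields the stated formula for $\kp$, while $\ki = \alpha/(1-\beta)$ drops out directly from matching the coefficient of $\ve_t$. The assumption $\beta \neq 1$ is needed precisely to make these ratios well-defined.
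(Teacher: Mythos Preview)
Your proof is correct but follows a different route from the paper. The paper first rewrites \algo{UnifiedMomentum} as a single-variable recurrence $\vtheta_{t+1} = \vtheta_t + \alpha \ve_t + \beta(\vtheta_t - \vtheta_{t-1}) + \alpha\beta\gamma(\ve_t - \ve_{t-1})$ (their Lemma on the UM recursion), then equates it to the \nuPI recursion and collects coefficients of $\ve_t$, $\ve_{t-1}$, and $\vxi_{t-2}$, obtaining a $3{\times}3$ linear system whose solution \emph{derives} the stated $\nu$, $\ki$, $\kp$; the initial steps $t=0,1$ are then checked separately to pin down $\vxi_0$. You instead establish the structural identity $\vphi_{t+1} = \tfrac{\alpha}{1-\beta}\,\vxi_t$ by unrolling both auxiliary states, and then \emph{verify} the given coefficients by matching the $\vtheta$-increments as linear combinations of $\vxi_t$ and $\vxi_{t-1}$. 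Your approach is arguably cleaner and more explanatory---the identity between $\vphi_{t+1}$ and $\vxi_t$ is the ``reason'' the equivalence holds, and it makes the role of the initialization $\vxi_0 = (1-\beta)\ve_0$ transparent (it is exactly what extends the identity down to $t=0$). The paper's approach, on the other hand, does not assume the answer in advance and shows how one would discover the coefficients; it also makes visible that the third equation $(1-\nu)(\beta-\nu)\kp = 0$ forces $\nu=\beta$ rather than this being a free choice.
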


\vspace{-2ex}

\cref{tab:momentum_as_nupi_summary} in \cref{appx:nupi_momentum_connections} summarizes the connections we have established between \nuPI and existing methods. We emphasize that the exponential moving average in \nuPI is a crucial component to obtain the generalization of momentum methods. 

\vspace{-1ex}

\begin{figure}[h]
    \centering
    \hspace{-2mm}
    \includegraphics[trim={1mm 2mm 2mm 0mm}, clip,scale=0.75]{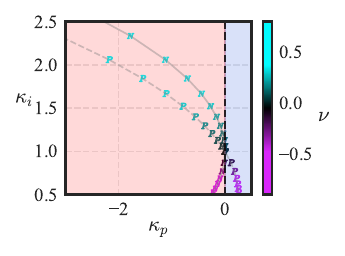}
    \includegraphics[trim={1mm 3mm 5mm 10mm}, clip,scale=0.85]{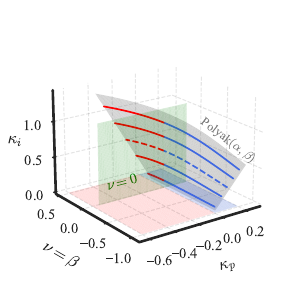}
    \vspace{-2ex}
    \caption[\textbf{Left:} Hyperparameter choices from \cref{thm:um_as_nupi} for which \nuPI$(\nu, \kp, \ki)$ realizes \algo{Polyak}$(\alpha=\frac{1}{2}, \beta)$ and \algo{Nesterov}$(\alpha=\frac{1}{2}, \beta)$. \textbf{Right:} The right plot zooms on the range $-1 \le \beta \le 0.25$.]{
    \textbf{Left:} Hyperparameter choices from \cref{thm:um_as_nupi} for which \nuPI$(\nu, \kp, \ki)$ realizes \algo{Polyak}$(\alpha=\frac{1}{2}, \beta)$ and \algo{Nesterov}$(\alpha=\frac{1}{2}, \beta)$. \textbf{Right:} The right plot zooms on the range $-1 \le \beta \le 0.25$. \textbf{\algo{Polyak} comprises a limited surface in the $(\nu, \kp, \ki)$ space, leaving configurations outside this surface unexplored}. Note how positive (resp. negative) values of $\beta$ result in negative (resp. positive) values of $\kp$, colored in red (resp. blue). 
    \captioncomment{Colored paths correspond to different values of $\alpha$. The dashed curves match between both plots.}
    }
    \label{fig:nupi_sheet}
\end{figure}

\vspace{-1ex}

\cref{fig:nupi_sheet} visually emphasizes the greater generality of \nuPI~compared to \algo{Polyak} and \algo{Nesterov}, presented in \cref{thm:um_as_nupi}. Note that the $\kp$ coefficient changes between \algo{Polyak} and \algo{Nesterov}, while the $\ki$ coefficient coincides. Formally,
\vspace{-1ex}
\begin{equation}
    \label{eq:momentum_betas}
    \hspace{-2mm}
    \kp^\algo{Polyak} = - \frac{\alpha \beta}{(1 - \beta)^2}, \hspace{2mm} \kp^\algo{Nesterov} = - \frac{\alpha \beta^2}{(1 - \beta)^2} \le 0.
\end{equation}

\vspace{-2ex}

Moreover, $\kp^\algo{Nesterov}$ is \textit{non-positive}, regardless of $\beta$. In contrast, a negative momentum value $\beta < 0$ induces a \textit{positive} $\kp^\algo{Polyak}$. This observation is in line with the benefits of using a negative \algo{Polyak} momentum coefficient (for both players) in adversarial games presented by \citet{gidel2019negative}.

\subsection{Interpreting the updates of $\nu$PI}
\label{sec:one_over_x}

Consider the execution of \nuPI$(\nu, \kp, \ki)$ and \GA$(\alpha=\ki)$ at time $t$.\footnote{It is sufficient to consider a single scalar multiplier since the updates of both algorithms decouple across constraints/multipliers.} The relative size between these updates is:
\begin{equation}
    \frac{\Delta \text{\nuPI}}{\Delta \text{\GA}} \defas \frac{\theta^{\nu\text{PI}}_{t+1} -  \theta_t}{\theta^{\text{GA}}_{t+1} -  \theta_t} = \frac{1}{1 - \psi} \lspar 1 -  \frac{\psi \xi_{t-1}}{e_t} \rspar, 
\end{equation}
where $\psi \defas \frac{\kp(1 - \nu)}{\ki + \kp (1 - \nu)}$. \cref{fig:nupi_one_over_x} illustrates the behavior of the relative size of updates of \nuPI compared to \GA. The left plot displays \nuPI with $\kp>0$ and $\nu=0$. The right plot shows the \nuPI-equivalent of \algo{Polyak} with \textit{positive} momentum.\footnote{The case of \algo{Polyak} with negative momentum resembles the left plot of \cref{fig:nupi_one_over_x} See \cref{sec:one_over_x_appx} for further details.}.

Consider the colored regions present in the \textit{left} plot of \cref{fig:nupi_one_over_x}:

{\color[RGB]{180,200,254} \textbf{Mode A}} When $\xi_{t-1} < e_t$, the current violation is greater than the historical violation average (right region). \nuPI algorithm increases the multiplier \textit{faster} than \GA. When $e_t < 0$ (left region), the primal iterate is feasible and the \nuPI algorithm agrees with \GA~in decreasing the multiplier, but does so \textit{much faster} (with a factor above $\frac{1}{1 - \psi}$).

{\color[RGB]{234,215,80} \textbf{Mode B}} When $e_t \in [\psi \xi_{t-1}, \xi_{t-1}]$, the constraint violation has improved compared to the historical average but is still infeasible. In this case, \nuPI increases the multiplier \textit{more slowly} than \GA, consistent with the perceived improvement in the violation.

{\color[RGB]{255,180,180} \textbf{Mode C}} 
When $e_t \in [0, \kappa \xi_{t-1}]$, the primal iterate is still infeasible. However, the \nuPI algorithm determines that the constraint improvement is large enough to warrant a \textit{decrease} in the multiplier. Note that in this case, \GA~would have continued increasing the multiplier.

In all of these cases, the \nuPI optimizer can be seen as executing proactively by considering how the current constraint violation compares to the historical estimates. This proactive behavior allows the method to \textit{increase the multiplier faster than \GA} when the constraint satisfaction is degrading, \textit{and reduce the multiplier faster than \GA} whenever sufficient improvement has been made.

\newpage

\vspace{-1ex}

\begin{figure}[h]
    \centering
    \includegraphics[trim={0.2cm 0 0.2cm 0}, clip, scale=1.05]{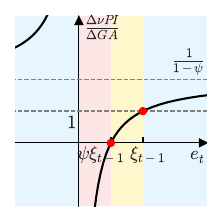}
    \hspace{4mm}
    \includegraphics[trim={0.2cm 0 0.2cm 0}, clip, scale=1.05]{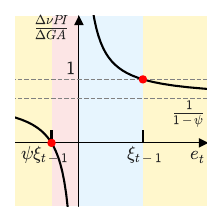}
    \hspace{1mm}
    \vspace{-2ex}
    
    { \scriptsize \hspace*{6mm} $\nu$\textsc{PI}$(\ki, \kp>0, \nu=0)$ \hspace{3mm} $\nu$\textsc{PI}$(\ki, \kp<0, \nu)$=\textsc{Polyak}$(\alpha, \beta>0)$}
    \caption[Comparing the update of \nuPI relative to \GA.]{Comparing the update of \nuPI relative to \GA. \textbf{\nuPI increases the multipliers faster than \GA~when the constraint violation is large, enhancing convergence speed; and proactively decreases them near the feasible set, preventing overshoot.} \captioncomment{The blue, yellow, and red regions correspond to cases in which the updates performed by the \nuPI algorithm are faster, slower, or in the opposite direction than those of \GA, respectively. This plot illustrates the case $\xi_{t-1} > 0$.}
    }
    \label{fig:nupi_one_over_x}
\end{figure}
\vspace{-1ex}

In stark contrast, \cref{fig:nupi_one_over_x} (right) shows a setting in which $\ki$ and $\kp$ have been chosen according to \cref{thm:um_as_nupi} for $\beta=\nu = 0.3$, i.e. using \textit{positive} Polyak momentum. In this case, the algorithm would produce \textit{stronger increases} of the multiplier whenever feasibility is improved, while \textit{weaker increases} are executed whenever feasibility worsens. This counter-intuitive behavior may be the cause of oscillations and overshoot underlying the failure of positive momentum methods in Lagrangian games.  

\subsection{Oscillator dynamics}
\label{sec:oscillator}

The continuous-time dynamics of gradient-descent/$\nu$\algo{PI}-ascent on an equality-constrained problem can be characterized by the second-order differential equations (see \cref{thm:oscillator_flow}):
\vspace{-1ex}
\begin{subnumcases}{}
    & $\ddot{\vx} = \displaystyle - \left(\nabla^2f + \sum_{c'} \mu_{c'} \nabla^2 \vh_{c'} \right) \dot{\vx} - \Jac\vh \dot{\vmu}$  \\
   & $\ddot{\vmu} = \ki \Jac\vh\T \dot{\vx} + \kp \Jac\vh\T \ddot{\vx} + \kp \mXi,$ 
\end{subnumcases}
where $\mXi = \lspar \dot{\vx}\T \nabla^2 h_1 \dot{\vx}, \, \ldots, \, \dot{\vx}\T \nabla^2 h_c \dot{\vx} \rspar\T \in \mathbb{R}^c.$

In \cref{app:oscillator} we present the spectral analysis for the Lagrangian system associated with an equality-constrained quadratic program. In particular, we demonstrate how the continous-time \nuPI algorithm can modify the eigenvalues of the system and transition between divergent, oscillatory, \textit{critically damped} and overdamped behaviors. We show how these regime changes are controlled by the $\kp$ hyperparameter. Moreover, critical damping may require a non-zero value of $\kp$, and is thus not achievable by \GA.

\begin{figure*}[h]
    \vspace{-2ex}
    \centering
    \begin{subfigure}[b]{0.55\textwidth}
        \centering
        \includegraphics[scale=0.9]{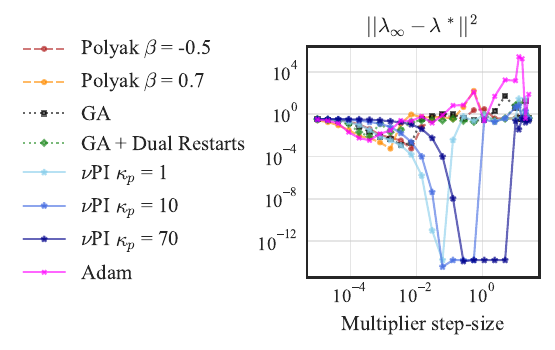}
    \end{subfigure}
    \begin{subfigure}[b]{0.4\textwidth}
        \centering
        \includegraphics[scale=0.9]{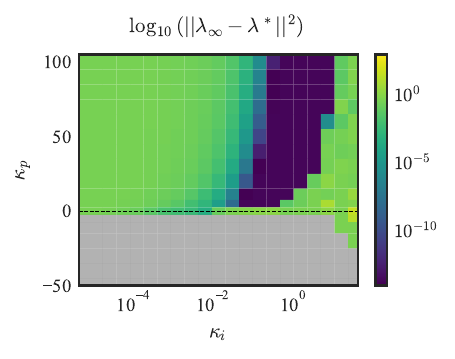}
    \end{subfigure}
    \vspace{-2ex}
    \caption[SVM experiment (\S\ref{sec:svm}). \textbf{Left:} Step-size sensitivity plot. \textbf{Right:} $\kp$ and $\ki$ sensitivity plot for \nuPI.]{SVM experiment (\S\ref{sec:svm}). \textbf{Left:} Step-size sensitivity plot. \textbf{Right:} $\kp$ and $\ki$ sensitivity plot for \nuPI. \textbf{Without a~$\kp$ of at least one, none of the methods converge to the optimal dual variable}. \textbf{Higher $\kp$ values allow for choosing higher and broader range of $\ki$'s.}   \captioncomment{The x-axis of the left plot represents $\ki$ for the \nuPI parameter and $\alpha$, the step-size, for the other optimizers. In the right plot, the gray color shows the runs exceeding a distance of $10^3$ to $\lambda^*$.}}
    \label{fig:svm_sensitivity_analysis}
    \vspace{-2ex}
\end{figure*}

\subsection{Practical remarks}
\label{sec:nupi_practical}

In practice, we suggest the initial condition $\vxi_0 = \ve_0$, as it ensures that the first step of \nuPI matches that of gradient ascent. In cases where the constraints can be evaluated without noise, we suggest a default value of $\nu = 0$. This leaves only the additional hyperparameter $\kp$ to be tuned (besides the ``step-size'' $\ki$). We highlight that the main benefits of the \nuPI algorithm remain even when $\nu = 0$. However, $\nu$ can be useful for filtering noise in the constraint measurement, as shown in our fairness experiments in \S\ref{sec:fairness}.

There is a predictable monotonic behavior of the damping of the system as the $\kp$ coefficient increases. This is illustrated in  \cref{fig:main_ablation} in~\S\ref{sec:sparsity} for a sparsity task. As a side effect, higher values of $\kp$ make the tuning of the $\ki$ coefficient easier, as seen in \cref{fig:svm_sensitivity_analysis} in~\S\ref{sec:svm}.
As a heuristic to tune \nuPI, we suggest considering a large initial $\kp$ value (so that its influence on the optimization dynamics is significant), and then try a grid of $\ki$ values. A good starting place is a grid of $\ki$ values around a suitable step-size for gradient ascent.

\section{Experiments}
\label{sec:experiments}

In this section, we present an empirical comparison between \nuPI and a series of baseline optimization methods popular for minimization. We consider gradient ascent, gradient ascent with positive \citep{polyak1964some,nesterov1983method} and negative \citep{gidel2019negative} momentum, and \algo{Adam} \citep{kingma2014adam}. The goal of our experiments is to highlight the flexibility of \nuPI and its ability to mitigate oscillations and overshoot when used to optimize Lagrange multipliers. 

Our implementations use PyTorch \citep{pytorch} and the Cooper library for Lagrangian constrained optimization \citep{gallegoPosada2022cooper}.

\subsection{Hard-margin SVMs}
\label{sec:svm}

We consider solving a \textit{hard-margin} linear SVM problem via its associated Lagrangian formulation. While specialized QP solvers exist to find solutions for this task, we consider the Lagrangian formulation in order to illustrate the dynamics of the multipliers in a simple machine learning task. These experiments show how standard methods for minimization produce oscillations on the multipliers, which have detrimental effects on convergence. Consider
\begin{equation}
    \label{eq:svm_problem}
    \underset{\vw}{\text{min}} \, \|\vw\|^2/2  \hspace{2mm}
    \text{s.t.}  \hspace{2mm}  y_i  (\vw^\top \vx_i + b) \geq 1 \hspace{2mm} \text{for } i\in [m],     
\end{equation}
where $\{(\vx_i, y_i)\}_{i=1}^m$ are labeled training datapoints, and $\vw$ and $b$ are the parameters of the SVM classifier. 

We perform binary classification on two linearly separable classes from the Iris dataset \cite{misc_iris_53}. We apply alternating GDA updates on the Lagrangian associated with \cref{eq:svm_problem}, with a fixed optimizer for the primal variables.
For details on our SVM experiments, see \cref{app:svm}.

\textbf{Multiplier dynamics.} \cref{fig:svm_dynamics_plot} shows the oscillations on the multiplier in all of the baselines. In these tasks, all of the methods that do not diverge achieve perfect training and validation accuracy. However, among the methods we experiment with, the only method capable of achieving zero constraint violation is the \nuPI algorithm. In contrast to all baselines, \nuPI dampens the oscillations and converges to the optimal multiplier value. 

\textbf{Sensitivity analysis.} \cref{fig:svm_sensitivity_analysis} (left) illustrates the robustness of \nuPI to the choice of $\ki$. The considered baselines fail to converge to the ground truth multiplier value, across a wide range of step-sizes. For these baselines, small step-size choices avoid divergence but do not lead to recovering the optimal Lagrange multipliers, while large step-sizes increase the oscillations.
In contrast, introducing a $\kp$ term of more than 1 results in convergence for some step-sizes within the selected range (see the \nuPI curves). Moreover, increasing $\kp$ to a higher value broadens the range of step-sizes that lead to convergence, and enables the use of bigger step-size values that converge. This behavior can be observed more extensively in the heatmap of \cref{fig:svm_sensitivity_analysis} (right).

\subsection{Fairness}
\label{sec:fairness}

We consider a classification task under \textit{statistical parity} constraints, as described in \citet{cotter2019proxy}.
This leads to the following constrained optimization problem:
\begin{equation}
    \label{eq:fairness_problem}
    \underset{\vw}{\text{min}} \, L(\vw)  \hspace{2mm}
    \text{s.t.}  \hspace{2mm}  \text{P}(\hat{y} = 1 \, | \, g) = \text{P}(\hat{y} = 1), \quad \forall g  \in G  
\end{equation}
where $L(\vw)$ is the loss of model $\vw$, $\hat{y}$ is the model prediction, and $G$ represents the set of protected groups in the dataset. The constraints require the probability of positive prediction to be equal across all groups. 

\textbf{Model and data.} We train binary classifiers on the Adult dataset \cite{misc_adult_2}. Groups correspond to the intersection of race (2 values) and gender (5 values), leading to 10 constraints. We use an MLP with two 100-dimensional hidden layers.
Our experimental setup is similar to those of \citet{zafar2019fairness} and \citet{cotter2019proxy}.
However, in our setting, non-convexity precludes the use of specialized solvers (as done by \citet{zafar2019fairness}) and requires iterative optimization approaches.

\textbf{Optimization configuration.} We train the model using \Adam~($\alpha = 10^{-2}$) with a batch size of 512. To mitigate the noise in the estimation of the constraint satisfaction, we update the multipliers once every epoch, using the exact constraint measurement over the entire training set.

\textbf{Results.} \Cref{fig:fairness_plot} includes training curves for experiments with \GA~and \nuPI applied to the Lagrange multipliers. We report two of the multipliers, the model accuracy, and the maximum constraint violation (in absolute value). 

For this task, gradient ascent is a strong baseline as it successfully reduces the violation of the constraints. Both \GA~and \nuPI($\nu=0.99$) significantly improve compared to an \textit{unconstrained} baseline which achieves a maximum violation of 20\% (not shown in \cref{fig:fairness_plot} for readability). 

\nuPI($\nu=0$) runs exhibit unstable multiplier dynamics as the noise of the constraints is amplified by the proportional term. 
During our experiments, we observed that when $\nu=0$, larger $\kp$ values lead to noisier multipliers and unstable optimization.
In contrast, \textbf{\nuPI($\nu=0.99$) reduces the maximum violation faster and achieves better training accuracy (92.4\% vs 89\%).}  

All experiments reach a final maximum violation of around 1.7\%. We hypothesize that it is not possible to decrease this value further (while carrying out stochastic updates on the primal variables) since the constraint gradients may be misaligned across mini-batches. 

\textbf{Multiplier dynamics.} As can be seen in the evolution of multipliers 2 and 7 shown in \cref{fig:fairness_plot}, \nuPI yields multipliers that stabilize at their limiting values faster than those of \GA. 

\begin{figure}[h]
    \vspace{1ex}
    \centering
    \hspace*{-2ex} \includegraphics[scale=0.88]{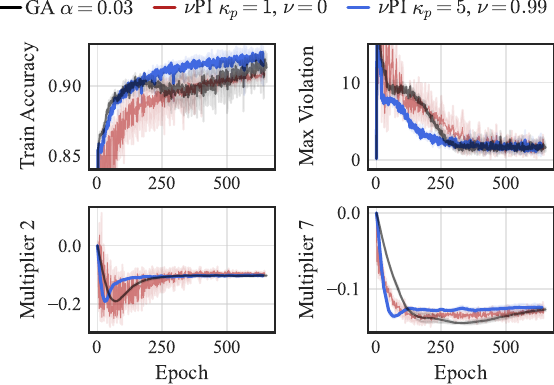}
    \caption[Dynamics of \nuPI compared to \GA~for the fairness task in\cref{eq:fairness_problem}.]{Dynamics of \nuPI compared to \GA~for the fairness task in \cref{eq:fairness_problem}. \textbf{\nuPI has faster convergence in the multiplier value and achieves a better training accuracy than \GA.}
    \captioncomment{All dual optimizers use a step-size ($\ki$ for \nuPI) of 0.03.
    Results are aggregated across five seeds.}
    }
    \label{fig:fairness_plot}
    \vspace*{-2ex}
\end{figure}

\subsection{Sparsity}
\label{sec:sparsity}

We consider the problem of learning models under inequality $L_0$-sparsity constraints \citep{louizos2017learning,gallego2022controlled}.
See \cref{app:details:sparsity} for further background.
\begin{equation}
    \label{eq:sparsity_cmp}
    \min_{\vw, \vphi \in \mathbb{R}^d} \mathbb{E}_{\vz | \vphi} \left[ L(\vw \odot \vz \,|\, \mathcal{D}) \right] \quad \text{s.t. } \frac{\mathbb{E}_{\vz | \vphi} [\|\vz\|_0]}{\#(\vw)} \leq \epsilon
\end{equation}

When using \GA~updates for the multipliers, \citet{gallego2022controlled} observe a tendency of the model to ``overshoot'' into the feasible region and become significantly less dense than the prescribed level. Since a reduction in model density corresponds to a reduction in capacity, this overshoot may have a detrimental effect on the performance of the model.

Our experiments explore the effect of \nuPI on the sparsity-constrained task, and compare it with dual restarts \citep[\S \ref{sec:lagrangian_optimization}]{gallego2022controlled}. Our results show that \nuPI allows for fine-grained control over overshoot, thus enabling the sparse model to retain as much performance as possible.

\textbf{Experiment configuration and hyperparameters.} We consider classifying CIFAR-10 \citep{cifar} images with ResNet-18 \citep{he2016deep} models. To highlight the ease-of-use of \nuPI, our setup remains as close as possible to \citet{gallego2022controlled}: we apply output channel sparsity on the first layer of each residual block in the model, and re-use the authors' choice of optimizer and step-size for $\vphi$. Our sparsity experiments consider $\nu = 0$. 

\textbf{Global and layer-wise settings.} We present sparsity experiments with either \blobletter{1} one global constraint on the sparsity of the entire model, or \blobletter{2} multiple constraints, each prescribing a maximum density per layer. 

The metrics reported in this section are aggregated across 5 seeds. Experimental details for this task can be found in \cref{app:details:sparsity}.
For comprehensive experimental results across multiple sparsity levels, and ablations on the use of momentum and \Adam~for updating the multipliers, see \cref{app:experiments:sparsity}.

\textbf{Results.} \cref{fig:main_tradeoff} shows how gradient ascent and positive and negative momentum values consistently yield runs that overshoot into becoming overly sparse. The extra reduction in capacity results in a loss in performance. In contrast, \nuPI consistently recovers feasible solutions, with minimal overshoot. While dual restarts do not incur in overshoot, they produce slightly infeasible solutions.

\begin{figure}[h]
    \centering
    \includegraphics[scale=1]{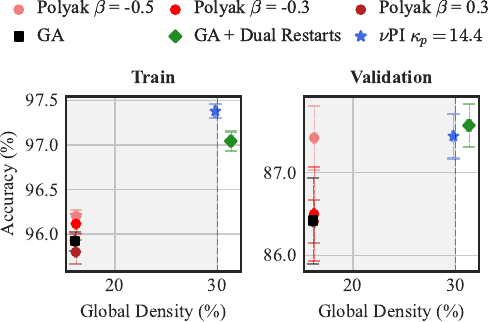}
    \caption[CIFAR10 trade-off plot for \textit{global} sparsity under a 30\% density target.]{CIFAR10 trade-off plot for \textit{global} sparsity under a 30\% density target. \textbf{\nuPI successfully achieves the desired sparsity while achieving the highest train accuracy.}
    \captioncomment{The shaded region is the feasible set. As higher density correlates to higher train accuracy, overshooting to a lower density is undesirable. All optimizers use the same step-size.}
    }
    \label{fig:main_tradeoff}
    \vspace{-1ex}
\end{figure}

\cref{fig:main_ablation} consists on an ablation on the $\kp$ value. We observe that by increasing the hyper-parameter, overshoot is reduced, eventually turning into undershoot (which leads to infeasible solutions). Since the density of the model is monotonically tied to the choice of $\kp$, tuning \nuPI for this task can be done via bisection search, without the need to consider a grid (which is usually required for tuning the step-size). 

\cref{table*:layerwise_sparsity} shows sparsity experiments with layer-wise sparsity targets. Gradient ascent and momentum methods overshoot and the degree of overshoot differs significantly across layers. In contrast, \GA~with dual restarts and \nuPI mitigate overshoot and produce constraints spanning a narrow range of values. 
This highlights the robustness of \nuPI as the $\kp$ coefficient did not need to be tuned separately per constraint.

\textbf{Multiplier dynamics.} \Cref{fig:dynamics_ga} shows the training dynamics for a global sparsity constraint and its multiplier under a 30\% density target. 
We observe that \GA~and \Polyak~quickly lead to overshoot into the feasible set, but manage to regain some model capacity as training progresses. 
\GA~with dual restarts sets the value of the multiplier to zero as soon as feasibility is achieved, thus preventing an incursion of the constraint into the feasible set. \nuPI produces well-behaved multipliers and successfully avoids constraint overshoot.

\begin{figure}[t]
    \centering
    \begin{subfigure}[b]{0.4\textwidth}
        \centering
        \includegraphics[width=1\textwidth]{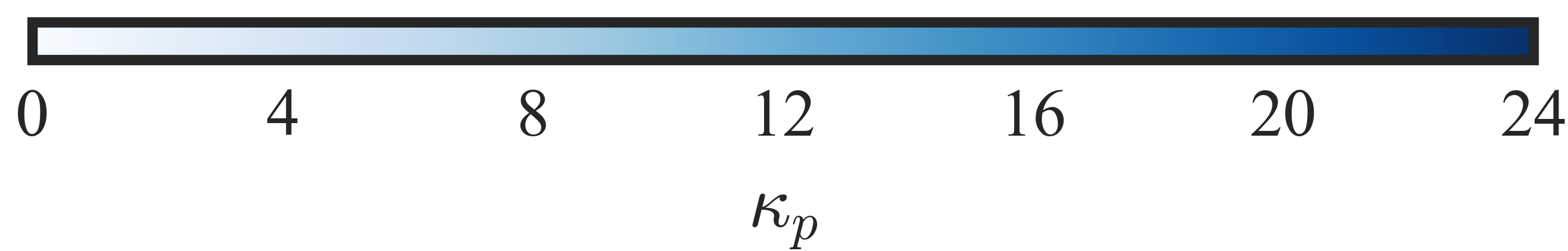}
    \end{subfigure}
    \begin{subfigure}[b]{0.48\textwidth}
        \centering
        \includegraphics[scale=1]{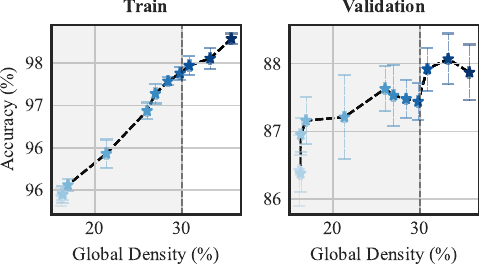}
    \end{subfigure}
    \caption[Ablation of $\kp$ values for \nuPI on CIFAR10.]{Ablation of $\kp$ values for \nuPI on CIFAR10. \textbf{An increasing $\kp$ leads to more damping and less overshoot.}
    \captioncomment{Target density is 30\%. The shaded region is the feasible set.}
    }
    \label{fig:main_ablation}
\end{figure}
\vspace{-1ex}

\begin{table}[h]
\centering
\caption[CIFAR10 results for \textit{layer-wise} sparsity under a $30\%$ density target.]{CIFAR10 results for \textit{layer-wise} sparsity under a $30\%$ density target. \algo{GA} and momentum methods overshoot to \textit{different} values for each constraint. 
\textbf{\nuPI achieves the desired sparsity \textit{on all layers} while achieving the highest train accuracy.} \captioncomment{All dual optimizers use the same step-size.}}
\label{table*:layerwise_sparsity}
\begin{adjustbox}{max width=0.45\textwidth}
\begin{tabular}{l|cc|ccc}
\toprule
\multirow{2}{*}{Method} & \multicolumn{2}{|c}{Accuracy} & \multicolumn{3}{|c}{Violation} \\
 & \multicolumn{1}{|c}{Train} & \multicolumn{1}{c|}{Test} & \multicolumn{1}{|c}{Min} & \multicolumn{1}{c}{Max} & \multicolumn{1}{c}{Range} \\
\midrule
Polyak $\beta = -0.5$ & 91.9 & 83.6 & -26.5 & -7.9 & 18.9 \\
Polyak $\beta = -0.3$ & 92.1 & 83.4 & -27.1 & -6.7 & 20.6 \\
Polyak $\beta = 0.3$ & 91.9 & 82.5 & -26.3 & -2.3 & 24.0 \\
GA & 92.0 & 84.1 & -27.8 & -5.2 & 22.0 \\
GA + Dual Restarts & 95.0 & 85.3 & -0.0 & 1.2 & 1.2 \\
\textit{Ours} - $\nu$PI $\kappa_p = 8.0$ & 95.1 & 86.2 & -1.7 & 0.1 & 1.8 \\
\bottomrule
\end{tabular}
\end{adjustbox}
\end{table}

\section{Conclusion}

Previous work has highlighted that employing PID controllers on the multipliers in Lagrangian constrained optimization problems reduces oscillation and overshoot. 
In this paper, we consider \nuPI, a variant of a PI controller that generalizes various popular methods for optimization. 
We complement previous work by providing insights justifying why PI controllers are desirable for Lagrangian optimization. Moreover, we highlight some intuitions as to why momentum methods fail in this context. 
While we focus our efforts on constrained optimization, our results indicate that \nuPI may improve the dynamics of linear players in general min-max games. 
Investigating the behavior of \nuPI on non-linear players is left as a direction of future work.

\newpage

\section*{Impact Statement}

Constrained optimization offers tools for reliably enforcing properties on machine learning models. It is, therefore, applicable for enhancing safety, robustness, and fairness in AI models.
By integrating constraints into the model development process, rather than retrofitting safety measures as afterthoughts, we advocate for a paradigm shift towards building models that are inherently secure ``by design." 
We intend our fairness experiments as a conceptual illustration of the potential for positive impact of constrained approaches in the development of machine learning models.

Our paper presents insights into the robustness of algorithms for constrained optimization, and highlights \nuPI as a reliable tool for training models with constraints. Thus, our work lays the groundwork for practitioners to adopt and implement constrained approaches confidently in diverse real-world applications.

\section*{Acknowledgements}

This research was partially supported by an IVADO PhD Excellence Scholarship, the Canada CIFAR AI Chair program (Mila), the NSERC Discovery Grant RGPIN2017-06936 ahd by Samsung Electronics Co., Ldt. Simon Lacoste-Julien is a CIFAR Associate Fellow in the Learning in Machines \& Brains program.

This research was enabled in part by compute resources, software, and technical help provided by Mila. %

We would like to thank Ioannis Mitliagkas for useful discussions during the development of this work.

\section*{Reproducibility Statement}

We provide our code,\footnote{\href{https://github.com/motahareh-sohrabi/nuPI}{\texttt{https://github.com/motahareh-sohrabi/nuPI}}} including scripts to replicate the experiments in this paper.
\S \ref{sec:nupi_practical} presents some considerations when using the \nuPI algorithm in practice. 
Experimental details, as well as the hyper-parameters used in our experiments, are included in \cref{app:exp_details}. 
Our implementations use the open-source libraries PyTorch~\citep{pytorch} and Cooper~\citep{gallegoPosada2022cooper}.

\bibliography{references}
\bibliographystyle{icml2024_template/icml2024}

\newpage
\appendix
\onecolumn

\addcontentsline{toc}{section}{Appendix}
\part{Appendix} 
\parttoc
\newpage

\section{Further discussion on prior works using PID controls in optimization}
\label{appx:pid_related_works}

\begin{itemize}
    \item In \citet{stooke2020responsive}, the authors focus almost exclusively on applying PID control to constrained reinforcement learning. The authors do not explore the optimization aspects of PID-based updates for Lagrange multipliers, which are the main focus of our work. Our key theoretical contribution (\cref{thm:um_as_nupi}) shows that \nuPI provides a generalization of momentum-based optimization techniques. Note that the controller considered by \citet{stooke2020responsive} is unable to generalize momentum methods. Thanks to the unifying framework provided by \cref{thm:um_as_nupi}, we provide insights to understand why momentum fails at Lagrangian optimization tasks (\cref{fig:nupi_one_over_x}). Moreover, our experiments encompass linear SVMs, sparsity, and fairness tasks, and are not restricted to reinforcement learning.
    \item \citet{an2018pid} propose directly updating the parameters of a neural network using a PID controller (for unconstrained minimization only). Their approach has not been widely adopted by the deep learning community, possibly due to the highly specialized training procedures that have been developed for training neural networks. Although connected due to their use of PID control, this paper is not directly relevant to our work as we limit our scope to not modifying the optimization protocol for the (primal) model parameters.
    \item The work of \citet{casti2023control} focuses on the theoretical aspects of using PID control for problems with linear constraints. Their analysis is not directly applicable to our setting since we are interested in general machine learning applications involving nonconvex constraints.
    \item \citet{hu2017control} present control interpretations of first-order optimization methods and show how worst-case convergence rates of optimization algorithms can be derived from a control theoretical perspective. The idea of examining a possible connection between our \nuPI~algorithm and other momentum methods was inspired by this work.    
\end{itemize}

\section{Connections between $\nu$PI and momentum methods}
\label{appx:nupi_momentum_connections}

\begin{table}[h]
\centering
\caption{Classical optimization methods as 
instances of \nuPI$(\nu,\kp, \ki; \vxi_{0})$.}
\label{tab:momentum_as_nupi_summary}
\renewcommand{\arraystretch}{1.3}
\begin{tabular}{c|c|ccc}
\toprule
\textbf{Algorithm}  & $\vxi_{0}$ & $\kp$ & $\ki$ & $\nu$  \\
\hline
\algo{UnifiedMomentum}$(\alpha, \beta, \gamma)$ & $\displaystyle (1 - \beta) \ve_0$  &$\displaystyle  - \frac{\alpha \beta}{(1 - \beta)^2}  \lspar 1 - \gamma ( 1 - \beta) \rspar$ & $\displaystyle \frac{\alpha}{1- \beta} $ & $\beta$  \\
\hline

\algo{Polyak}$(\alpha, \beta)$ & $\displaystyle (1 - \beta) \ve_0$ & $\displaystyle - \frac{\alpha \beta}{(1 - \beta)^2} $ & $\displaystyle  \frac{\alpha}{1 - \beta}$& $\beta$ \\
\hline

\algo{Nesterov}$(\alpha, \beta)$ & $\displaystyle (1 - \beta) \ve_0$  & $\displaystyle - \frac{\alpha \beta^2}{(1 - \beta)^2}$ & $\displaystyle  \frac{\alpha}{1 - \beta}$& $\beta$  \\
\hline

\algo{PI} & $\displaystyle \ve_0$ & $\displaystyle \kp $ & $\displaystyle  \ki $& 0 \\
\hline

\algo{OptimisticGradientAscent}$(\alpha)$ & $\displaystyle \ve_0$ & $\displaystyle \alpha$ & $\displaystyle \alpha $ & 0 \\
\hline

\algo{\nuPI}$(\nu,\kp, \ki)$ in practice & $\vzero$ & $\ki$ & $\kp$ & $\nu$  \\

\hline

\algo{GradientAscent}$(\alpha)$ & -- & 0 & $\alpha$ & 0  \\

\bottomrule
\end{tabular}
\end{table}

\begin{lemma}{}
\label{thm:nupi_update_recursive}

    The \nuPI$(\nu, \kp, \ki; \vxi_{0})$ algorithm can be equivalently expressed as the recursion:
    \begin{subequations}
        \begin{align}
            \vtheta_1 &= \vtheta_0 + \ki \ve_0 + \kp \vxi_0 \label{eq:nupi_recursive_first} \\
            \vxi_t &= \nu \vxi_{t-1} + (1 - \nu) \ve_t \label{eq:nupi_ksi} \\
            \vtheta_{t+1} &= \vtheta_{t} +  \ki \ve_t + \kp (1-\nu) \left( \ve_t - \vxi_{t-1}\right)   \,\, \text{for } t \ge 1 \label{eq:nupi_recursive_inductive}
        \end{align}
    \end{subequations}
\end{lemma}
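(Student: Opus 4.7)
The plan is to derive both pieces of the recursion directly from the closed-form update in line 3 of \cref{algo:nupi}, namely $\vtheta_{t+1} = \vtheta_0 + \kp \vxi_t + \ki \sum_{\tau = 0}^{t} \ve_\tau$, rather than by induction. The equality $\vxi_t = \nu \vxi_{t-1} + (1-\nu) \ve_t$ in \cref{eq:nupi_ksi} is just a restatement of line 2 of the algorithm, so only \cref{eq:nupi_recursive_first} and \cref{eq:nupi_recursive_inductive} require work.

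For \cref{eq:nupi_recursive_first}, I would simply evaluate the closed-form expression at $t=0$: the sum $\sum_{\tau=0}^{0} \ve_\tau$ reduces to $\ve_0$, and $\vxi_0$ is the user-supplied initial condition (not produced by the EMA recursion, which is only applied for $t \ge 1$). This yields $\vtheta_1 = \vtheta_0 + \ki \ve_0 + \kp \vxi_0$ immediately. It is worth noting that the closed-form is well defined at the boundary precisely because $\vxi_0$ is prescribed as an argument.

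For \cref{eq:nupi_recursive_inductive} at a generic $t \ge 1$, I would subtract the closed-form expressions for $\vtheta_{t+1}$ and $\vtheta_t$. The $\vtheta_0$ terms cancel, the integral sums telescope to $\ki \ve_t$, and the proportional terms contribute $\kp(\vxi_t - \vxi_{t-1})$, giving the intermediate identity $\vtheta_{t+1} = \vtheta_t + \ki \ve_t + \kp (\vxi_t - \vxi_{t-1})$ (this coincides with \cref{eq:nupi_recursive} in the main text). Substituting the EMA update $\vxi_t = \nu \vxi_{t-1} + (1-\nu) \ve_t$ gives $\vxi_t - \vxi_{t-1} = (1 - \nu)(\ve_t - \vxi_{t-1})$, and plugging this in produces exactly \cref{eq:nupi_recursive_inductive}.

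I do not anticipate a real obstacle here: the result is essentially a telescoping identity combined with an algebraic rearrangement of the EMA. The only point that requires mild care is the base case $t=1$, where the telescoping subtraction invokes $\vtheta_1$ through its closed-form expression (as established in the previous paragraph) rather than through the recursion itself; once this is noted, the argument applies uniformly for all $t \ge 1$.
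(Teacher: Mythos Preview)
Your proposal is correct and matches the paper's own proof essentially line for line: the paper also subtracts the closed-form expressions for $\vtheta_{t+1}$ and $\vtheta_t$ to obtain $\ki \ve_t + \kp(\vxi_t - \vxi_{t-1})$ and then substitutes the EMA update to reach $\kp(1-\nu)(\ve_t - \vxi_{t-1})$. If anything, you are slightly more careful than the paper, which omits the explicit treatment of the base case $t=0$ that you spell out.
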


\begin{proof}[\textbf{Proof of \cref{thm:nupi_update_recursive}}]
    For $t \ge 1$, we have:
    \begin{equation}
        \vtheta_{t+1} - \vtheta_t \overset{(\nu \text{PI} 3)}{=} \kp \vxi_{t}  + \ki \sum_{\tau=0}^{t} \ve_{\tau} - \kp \vxi_{t-1} - \ki \sum_{\tau=0}^{t-1} \ve_{\tau}  = \ki \ve_t + \kp ( \vxi_t - \vxi_{t-1}) \overset{(\nu \text{PI} 2
        )}{=} \ki \ve_t + \kp (1 - \nu) (\ve_{t} - \vxi_{t-1})
    \end{equation}
\end{proof}

\begin{lemma}{}
\label{thm:um_update_recursive}

    The \algo{UnifiedMomentum}$(\alpha, \beta, \gamma; \vphi_0 = \vzero)$ algorithm can be expressed as the single-parameter recurrence:
    \begin{subequations}
        \begin{align}
            \vtheta_1 &= \vtheta_0 + \alpha (1 + \beta \gamma) \ve_0 
            \label{eq:um_recursive_first} \\
            \vtheta_{t+1} &= \vtheta_t + \alpha \ve_t + \beta (\vtheta_t - \vtheta_{t-1}) + \alpha \beta \gamma \left(\ve_t - \ve_{t-1} \right) \,\, \text{for } t \ge 1. \label{eq:um_recursive_inductive} 
        \end{align}
    \end{subequations}
\end{lemma}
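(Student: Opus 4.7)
The plan is to eliminate the auxiliary variable $\vphi_t$ from the \algo{UM} update by forming a second-order recurrence in $\vtheta_t$. The basic strategy: use the $\vtheta$-update to express $\vtheta_{t+1}-\vtheta_t$ and $\vtheta_t-\vtheta_{t-1}$ as linear combinations of $\vphi_{t+1}, \vphi_t, \vphi_{t-1}$, then use the $\vphi$-update to collapse everything down to terms in $\ve_t, \ve_{t-1}$.

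First, I would dispatch the base case ($t=0$). Since $\vphi_0 = \vzero$, line 2 of \cref{algo:um} gives $\vphi_1 = \alpha \ve_0$, and line 3 then yields $\vtheta_1 = \vtheta_0 + \vphi_1 + \beta\gamma(\vphi_1 - \vphi_0) = \vtheta_0 + \alpha(1+\beta\gamma)\ve_0$, matching \cref{eq:um_recursive_first}.

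For the inductive step $t \ge 1$, I would rewrite line 3 of \cref{algo:um} as
\begin{equation*}
    \vtheta_{t+1} - \vtheta_t = (1+\beta\gamma)\vphi_{t+1} - \beta\gamma \vphi_t,
\end{equation*}
and analogously for $\vtheta_t - \vtheta_{t-1}$. Forming $\vtheta_{t+1} - \vtheta_t - \beta(\vtheta_t - \vtheta_{t-1})$ gives a linear combination of $\vphi_{t+1}, \vphi_t, \vphi_{t-1}$. Substituting $\vphi_{t+1} = \beta\vphi_t + \alpha\ve_t$ eliminates $\vphi_{t+1}$ and (after routine algebraic cancellation on the $\vphi_t$ coefficient) reduces the expression to
\begin{equation*}
    \alpha(1+\beta\gamma)\ve_t - \beta\gamma\bigl[\vphi_t - \beta\vphi_{t-1}\bigr].
\end{equation*}
The final step is to recognize $\vphi_t - \beta\vphi_{t-1} = \alpha\ve_{t-1}$ (valid since $t \ge 1$) directly from line 2 of \cref{algo:um}, which collapses the bracket and yields $\alpha\ve_t + \alpha\beta\gamma(\ve_t - \ve_{t-1})$. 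Rearranging gives \cref{eq:um_recursive_inductive}.

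The main obstacle is purely bookkeeping: correctly tracking the coefficients of $\vphi_t$ through the algebraic cancellation, and in particular identifying that the right second-order combination to form is $\vtheta_{t+1} - \vtheta_t - \beta(\vtheta_t - \vtheta_{t-1})$ (rather than, say, $\vtheta_{t+1} - 2\vtheta_t + \vtheta_{t-1}$) so that the $\vphi_t$ coefficient telescopes cleanly into a copy of the $\vphi$-update itself. No analytical machinery beyond the two algorithm lines is needed.
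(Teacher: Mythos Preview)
Your proposal is correct and essentially mirrors the paper's argument: both the base case and the inductive step use the same two substitutions (lines 2 and 3 of \cref{algo:um}) and the same algebraic cancellation. The only cosmetic difference is that the paper expands $\vtheta_{t+1}$ forward and then \emph{recognizes} $\vphi_t + \beta\gamma(\vphi_t - \vphi_{t-1}) = \vtheta_t - \vtheta_{t-1}$ at the end, whereas you form the combination $\vtheta_{t+1} - \vtheta_t - \beta(\vtheta_t - \vtheta_{t-1})$ up front and simplify it down; the underlying manipulation is identical.
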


\begin{proof}[\textbf{Proof of \cref{thm:um_update_recursive}}]

    \begin{subequations}
        \begin{align}
            \vphi_1 &\overset{(\textsc{UM}_2)}{=} \beta \cancel{\vphi_0} + \alpha \ve_0 = \alpha \ve_0 \\ 
            \vtheta_{1} &\overset{(\textsc{UM}_3)}{=} \vtheta_{0} + \vphi_{1}  + \beta \gamma \left(\vphi_{1} - \cancel{\vphi_{0}} \right)  = \vtheta_0 + \alpha ( 1 + \beta \gamma) \ve_0.
        \end{align}
    \end{subequations}
    
    \vspace{-4ex}
    
    \begin{subequations}
        \begin{align}
            \vtheta_{t+1} &\overset{(\textsc{UM}_3)}{=} \vtheta_{t} + \vphi_{t+1}  + \beta \gamma \left(\vphi_{t+1} - \vphi_{t} \right) \\
            &\overset{(\textsc{UM}_2)}{=} \vtheta_{t} + \beta \vphi_t + \alpha \ve_t + \beta \gamma \left( \beta \vphi_t + \alpha \ve_t - \beta \vphi_{t-1} - \alpha \ve_{t-1} \right) \\
            &\,\,\,\,=\,\,\,\, \vtheta_{t} + \beta \vphi_t + \alpha \ve_t + \beta \gamma \left( \beta (\vphi_t - \vphi_{t-1}) + \alpha (\ve_t - \ve_{t-1}) \right) \\
            &\,\,\,\,=\,\,\,\, \vtheta_{t} + \alpha \ve_t + \beta \left[ \vphi_t + \gamma \beta (\vphi_t - \vphi_{t-1}) \right] + \alpha \beta \gamma (\ve_t - \ve_{t-1}) \\
            &\overset{(\textsc{UM}_3)}{=} \vtheta_{t} + \alpha \ve_t + \beta \left( \vtheta_t - \vtheta_{t-1} \right) + \alpha \beta \gamma (\ve_t - \ve_{t-1}).
        \end{align}
    \end{subequations}
\end{proof}

\begin{nonumtheorem}{\normalfont \textbf{\ref{thm:um_as_nupi}}}
    Under the same initialization $\vtheta_0$, \algo{UnifiedMomentum}$(\alpha, \beta \neq 1, \gamma; \vphi_{0} = \vzero)$ is a special case of \nuPI$(\nu, \kp, \ki; \vxi_{0})$ with the following hyperparameter choices:
    \begin{equation}
        \nu= \beta 
        \hspace{7mm}
        \kp = - \frac{\alpha \beta}{(1 - \beta)^2}  \lspar 1 - \gamma ( 1 - \beta) \rspar 
        \hspace{7mm}
        \ki = \frac{\alpha}{1- \beta} 
        \hspace{7mm}
        \vxi_{0} = (1 - \beta) \ve_0
    \end{equation}
\end{nonumtheorem}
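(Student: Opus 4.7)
The plan is to leverage the two recursive reformulations provided by \cref{thm:nupi_update_recursive} and \cref{thm:um_update_recursive}. Since both algorithms are deterministic and share the same $\vtheta_0$, it suffices to verify that the \nuPI recurrence, under the proposed hyperparameter choices, produces the same $\vtheta_1$ and satisfies the same two-step inductive relation as \algo{UM}. Matching iterate-by-iterate then follows by a straightforward induction on $t$.

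First, I would dispatch the base case $t=1$. Substituting $\vxi_0 = (1-\beta)\ve_0$ into \cref{eq:nupi_recursive_first} gives $\vtheta_1 - \vtheta_0 = [\ki + \kp(1-\beta)]\,\ve_0$. Plugging in the proposed $\ki = \alpha/(1-\beta)$ and $\kp = -\alpha\beta[1-\gamma(1-\beta)]/(1-\beta)^2$ reduces the bracket to $\alpha\bigl(1 - \beta[1 - \gamma(1-\beta)]\bigr)/(1-\beta)$, which simplifies to $\alpha(1+\beta\gamma)$ via the key algebraic identity $1 - \beta[1-\gamma(1-\beta)] = (1-\beta)(1+\beta\gamma)$. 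This matches \cref{eq:um_recursive_first}.

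The main step is to show that for every $t \geq 1$ the \nuPI iterates satisfy the two-step relation
\[
\vtheta_{t+1} - \vtheta_t - \beta(\vtheta_t - \vtheta_{t-1}) = \alpha\,\ve_t + \alpha\beta\gamma(\ve_t - \ve_{t-1}),
\]
which is exactly \cref{eq:um_recursive_inductive} with $\nu = \beta$. For $t \geq 2$, I would apply \cref{eq:nupi_recursive_inductive} to both $\vtheta_{t+1} - \vtheta_t$ and $\vtheta_t - \vtheta_{t-1}$ and regroup to obtain $\ki(\ve_t - \beta\ve_{t-1}) + \kp(1-\beta)\bigl[(\ve_t - \beta\ve_{t-1}) - (\vxi_{t-1} - \beta\vxi_{t-2})\bigr]$. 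The EMA identity $\vxi_{t-1} - \beta\vxi_{t-2} = (1-\beta)\ve_{t-1}$, read off directly from \cref{eq:nupi_ksi} with $\nu = \beta$, collapses the bracket to $\ve_t - \ve_{t-1}$, leaving $\ki(\ve_t - \beta\ve_{t-1}) + \kp(1-\beta)(\ve_t - \ve_{t-1})$. The case $t=1$ is handled analogously, using \cref{eq:nupi_recursive_first} in place of one copy of \cref{eq:nupi_recursive_inductive} and substituting $\vxi_0 = (1-\beta)\ve_0$ to reach the same canonical expression.

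What remains is the final algebraic simplification, which I view as the only genuine (but still routine) obstacle: substitute the proposed $\ki$ and $\kp$ into $\ki(\ve_t - \beta\ve_{t-1}) + \kp(1-\beta)(\ve_t - \ve_{t-1})$, factor $\alpha/(1-\beta)$, and apply the same identity $1 - \beta[1-\gamma(1-\beta)] = (1-\beta)(1+\beta\gamma)$ to recover $\alpha\,\ve_t + \alpha\beta\gamma(\ve_t - \ve_{t-1})$. The hypothesis $\beta \neq 1$ is used throughout so that $1/(1-\beta)$ is well-defined. With the base case verified and the two-step recurrence matched, induction on $t$ yields $\vtheta_t^{\nu\text{PI}} = \vtheta_t^{\text{UM}}$ for all $t \geq 0$, completing the proof.
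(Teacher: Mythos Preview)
Your proposal is correct and uses the same core ingredients as the paper (the recursive reformulations in \cref{thm:nupi_update_recursive} and \cref{thm:um_update_recursive}), but the organization differs in a meaningful way. The paper treats the hyperparameters as unknowns: it expands $\vxi_t - \vxi_{t-1} - \beta(\vxi_{t-1} - \vxi_{t-2})$ fully into a combination of $\ve_t$, $\ve_{t-1}$, and $\vxi_{t-2}$, then sets the three resulting coefficients to zero to obtain a linear system whose solution yields $\nu=\beta$, $\ki$, and $\kp$; only afterwards does it verify the first two iterates. You instead take $\nu=\beta$ as given from the statement and immediately invoke the one-line EMA identity $\vxi_{t-1}-\beta\vxi_{t-2}=(1-\beta)\ve_{t-1}$ to collapse the bracket, reducing everything to a direct algebraic check of the stated $\ki,\kp$. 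Your route is a pure verification and is slightly cleaner for proving the theorem as stated; the paper's route is more of a derivation and explains \emph{where} the hyperparameters come from. Both are valid and of comparable length.
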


\begin{proof}[\textbf{Proof of \cref{thm:um_as_nupi}}]
    We want to find values of $\nu$, $\kp$, $\ki$ and $\vxi_{0}$ such that the sequence of iterates produced by \nuPI$(\nu, \kp, \ki; \vxi_{0})$ satisfies \cref{eq:um_recursive_inductive}. For $t \ge 2$ we have:
    \begin{subequations}
        \begin{align}
            \vtheta_{t+1} - \vtheta_t & \overset{  (\ref{eq:um_recursive_inductive})}{=} \alpha \ve_t + \beta (\vtheta_{t} - \vtheta_{t-1}) + \alpha \beta \gamma \left( \ve_t - \ve_{t-1} \right) \\
            \ki \ve_t + \kp \left( \vxi_t - \vxi_{t-1}\right) & \overset{(\ref{eq:nupi_recursive_inductive})}{=} \alpha \ve_t + \beta  \lpar \ki \ve_{t-1} + \kp \left( \vxi_{t-1} - \vxi_{t-2}\right)  \rpar + \alpha \beta \gamma \left( \ve_t - \ve_{t-1} \right)
        \end{align}
    \end{subequations}

\begin{equation}
    \ve_t \lpar \ki - \alpha - \alpha \beta \gamma \rpar 
    + 
    \ve_{t-1} \lpar -\beta \ki + \alpha \beta \gamma \rpar
    +  
    \kp \lspar \vxi_t - \vxi_{t-1} - \beta  \lpar \vxi_{t-1} - \vxi_{t-2} \rpar \rspar
    = 0
\end{equation}

Several applications of the definition of $\vxi_t$ (line 2 in \cref{algo:nupi}) give:
\begin{align}
    \vxi_t - \vxi_{t-1} - \beta  \lpar \vxi_{t-1} - \vxi_{t-2}\rpar &= (1 - \nu) \lspar \ve_t - \vxi_{t-1} \rspar - \beta \vxi_{t-1} + \beta \vxi_{t-2} \\
    &= (1-\nu) \ve_t - (1 + \beta - \nu) \vxi_{t-1} + \beta \vxi_{t-2} \\
    &= (1-\nu) \ve_t - (1 + \beta - \nu) \lspar \nu \vxi_{t-2} + (1 - \nu) \ve_{t-1} \rspar + \beta \vxi_{t-2} \\
    &= (1-\nu) \ve_t - (1 - \nu) (1 + \beta - \nu) \ve_{t-1} + (1 - \nu) (\beta - \nu) \vxi_{t-2} 
\end{align}

Thus we can re-arrange to get:
    \begin{equation}
        \begin{bmatrix} \ve_t & \ve_{t-1} & \vxi_{t-2} \end{bmatrix}
        \begin{bmatrix}
            \ki + (1-\nu) \kp - \alpha (1 + \beta \gamma) \\
            -\beta \ki - (1 - \nu)(1 + \beta - \nu) \kp  + \alpha \beta \gamma \\
            (1 - \nu)(\beta - \nu) \kp 
        \end{bmatrix} = 0
    \end{equation}
    
    Therefore, from 
    both algorithms coincide when the following system of equations is satisfied:
    \begin{subequations}
        \begin{align}
            \ki + (1-\nu) \kp &= \alpha (1 + \beta \gamma) \label{eq:nupi_um_system} \\
            \beta \ki + (1 - \nu)(1 - \nu + \beta) \kp  &= \alpha \beta \gamma \\
            (1 - \nu) (\beta - \nu) \kp &= 0
        \end{align}
    \end{subequations}

    For $\beta \neq 1$, the solution to this system is given by:
    \begin{equation}
        \textcolor{mathgreen}{
            \nu \leftarrow \beta 
            \hspace{9mm}
            \ki \leftarrow \frac{\alpha}{1- \beta} 
            \hspace{9mm}
            \kp \leftarrow - \frac{\alpha \beta}{(1 - \beta)^2}  \lspar 1 - \gamma ( 1 - \beta) \rspar
        }
        \label{eq:um_nupi_equivalence_hyperparameters}
    \end{equation}
    Finally, we choose the initial condition $\vxi_{0}$ that ensures that the first two steps of the algorithms match (at $t=0$ and $t=1$). The first iterate of \nuPI is given by $\vtheta_1 = \vtheta_0 + \ki \ve_0 + \kp \vxi_0 $ as per \cref{eq:nupi_recursive_first}. Meanwhile, the first iterate of \algo{UnifiedMomentum} is given by:
    \begin{equation}
        \vtheta_1 \overset{(\ref{eq:um_recursive_first})}{=} \vtheta_0 + \alpha (1 + \beta \gamma) \ve_0 \overset{(\ref{eq:um_nupi_equivalence_hyperparameters})}{=} \vtheta_0 + (\ki + (1 - \beta) \kp) \ve_0  = \vtheta_0 + \ki \ve_0 + (1 - \beta) \kp \ve_0
    \end{equation}
    Therefore, setting ${ \color{mathgreen} \vxi_{0} \leftarrow (1 - \beta ) \ve_0 }$ makes both algorithms match in their first step at $t=0$.  

    The second iterate from \algo{UnifiedMomentum} is $\vtheta_2 = \vtheta_1 + \alpha \beta \lspar 1 - \gamma (1 - \beta) \rspar \ve_0 + \alpha \lspar 1 - \gamma (1 - \beta) \rspar \ve_1$. On the other hand, the second iterate of \nuPI is $\vtheta_2 = \vtheta_1 + (\ki + \kp (1-\nu)) \ve_1 - \kp ( 1- \beta) \vxi_0$. It is easy to see that, given the hyperparameter choices outlined above, both algorithms match at $t=1$.

    An induction argument yields the equivalence between the algorithms.    
\end{proof}

\section{Interpreting the updates of $\nu$PI}
\label{sec:one_over_x_appx}

Consider the execution of the algorithms \nuPI$(\nu, \kp, \ki)$ and \algo{GA}$(\alpha = \ki)$ at time $t$, with updates given by: 
\begin{align}
    \theta^{\nu\text{PI}}_{t+1} &=  \theta_t + \ki e_t + \kp ( 1 - \nu) (e_t - \xi_{t-1}) \\   
    \theta^{\text{GA}}_{t+1} &=  \theta_t + \ki e_t
\end{align}

Let $\psi = \frac{\kp(1 - \nu)}{\ki + \kp (1 - \nu)}$. Note that whenever $\kp$ and $\ki$ are non-negative, $\psi \in [0,1]$. The ratio between these updates is:
\begin{equation}
    \label{eq:relative_update}
    \frac{\Delta \text{\nuPI}}{\Delta \text{GA}} = \frac{\theta^{\nu\text{PI}}_{t+1} -  \theta_t}{\theta^{\text{GA}}_{t+1} -  \theta_t} 
    = \frac{\ki e_t + \kp ( 1 - \nu) (e_t - \xi_{t-1})}{\ki e_t} 
    =  1 + \frac{\kp(1 - \nu)}{\ki}
    - \frac{\kp (1- \nu)}{\ki} \frac{\xi_{t-1}}{e_t} = \frac{1}{1 - \psi} \lspar 1 -  \frac{\psi \xi_{t-1}}{e_t} \rspar.
\end{equation}

\begin{figure}[h]
    \centering
    \includegraphics[scale=1.1]{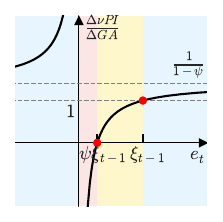}
    \includegraphics[scale=1.1]{figures/one_over_x/one_over_x_kp_1_ki_1_nu_0.pdf}
    \includegraphics[scale=1.1]{figures/one_over_x/one_over_x_positive_momentum_0.3.pdf}
    \caption[Comparing the update of \nuPI~relative to \GA.]{Comparing the update of \nuPI relative to \GA, for different hyper-parameter configurations of \nuPI. 
    \textbf{Left:} \nuPI is configured to recover \algo{Polyak}$(\beta=-0.4$). Updates exhibit dampening similar to that of \nuPI$(\nu=0, \kp=1, \ki=1)$. 
    \textbf{Middle:} \nuPI$(\nu=0, \kp=1, \ki=1)$ corresponding to a \algo{PI} controller. 
    \textbf{\nuPI~increases the multipliers faster than \GA~when the constraint violation is large, enhancing convergence speed; and proactively decreases them near the feasible set, preventing overshoot.}
    \textbf{Right:} \nuPI is configured to recover \algo{Polyak}$(\beta=0.3$). We observe an increased eagerness to increase the multipliers \textit{as progress toward feasibility occurs}. This increases the chances of overshoot and subsequent oscillations.  
    \captioncomment{The blue, yellow, and red regions correspond to cases in which the updates performed by the \nuPI algorithm are faster, slower, or in the opposite direction than those of \GA, respectively. This plot illustrates the case $\xi_{t-1} > 0$. The middle and right figures presented here are the same as those in \cref{fig:nupi_one_over_x}. We include them here for the reader's convenience.}}
    \label{fig:nupi_one_over_x_appx}
\end{figure}

\textbf{PI (\cref{fig:nupi_one_over_x_appx}, middle).} We consider \nuPI$(\nu=0, \kp=1, \ki=1)$, which recovers \algo{PI}$(\kp=1, \ki=1)$. The update of the \algo{PI} optimizer relative to \GA~is as follows: 
\begin{enumerate}
    \item {\color[RGB]{180,200,254} \textbf{Mode A}} When either $e_t \geq \xi_{t-1}$ or $e_t\leq 0$, the relative update exceeds one and thus the \algo{PI} controller update can be seen as \textit{eager} compared to gradient ascent. 
    \begin{enumerate}
        \item When $e_t \geq \xi_{t-1}$, the constraint has historically been infeasible and the current violation indicates an \textit{increase in infeasibility}. In this case, \algo{PI} not only increases the value of the multiplier but does so more strongly than \GA. This proactive behavior serves to counteract the infeasibility increase. 
        \item When $e_t\leq 0$, the constraint has been satisfied despite historical infeasibility ($\xi_{t-1} > 0$). Here, the \algo{PI} controller decreases the multiplier much more than \GA. This serves to prevent overshoot into the feasible region.
    \end{enumerate}

    \item {\color[RGB]{234,215,80} \textbf{Mode B}} In the range $0 < \psi\xi_{t-1} \le e_t\le \xi_{t-1}$, the constraint at step $t$ (i) is not satisfied, (ii) it is smaller than the historical EMA of violations $\xi_{t-1}$, but not significantly (not beyond a factor of $\psi$). In this case, the \algo{PI} controller proactively exerts \textit{friction} by having a smaller update than \GA. This reduces the risk of overshoot under the assumption that the primal variables continue to make progress toward feasibility. 
    
    \item {\color[RGB]{255,180,180} \textbf{Mode C}} In the \textit{optimistic} phase, where $0 \le e_t \le \kappa \xi_{t-1}$, the \algo{PI} controller's update goes in the opposite direction to that of \GA: $\frac{\Delta \text{\nuPI}}{\Delta \text{GA}} \le 0$.
    This corresponds to a scenario where the constraint made significant progress toward feasibility relative to the historic violation EMA. 
    While \GA~would increase the multiplier in this case (since $g_t >0$), \algo{PI} \textit{decreases} the value of the multiplier. This is useful to prevent overshoot since significant progress toward feasibility is an indicator that the multiplier is already exerting sufficient pressure for the constraints to be satisfied.  
\end{enumerate}

\textbf{Negative momentum (\cref{fig:nupi_one_over_x_appx}, left).} We consider \Polyak~($\beta= -0.4$) as a realization of \nuPI, following \cref{thm:um_as_nupi}. We observe similar behavior to that of \nuPI$(\nu=0, \kp=1, \ki=1)$, in the middle figure of \cref{fig:nupi_one_over_x_appx}. Note that the current illustration assumes an equal value of the ``optimizer state'' $\xi_{t-1}$ between the momentum and non-momentum cases. However, the value of $\xi_{t}$ will be different depending on the momentum coefficient as $\beta = \nu$ also influences the update of $\xi$ (see \cref{algo:nupi}).

\textbf{Positive momentum (\cref{fig:nupi_one_over_x_appx}, right).} The right plot of \cref{fig:nupi_one_over_x_appx} considers \algo{Polyak} ($\beta= 0.3$) as a realization of \nuPI, following \cref{thm:um_as_nupi}. We observe significantly different behavior compared to the left and middle plots.
\begin{enumerate}
    \item {\color[RGB]{180,200,254} \textbf{Mode A}} When infeasibility is reduced, the algorithm is \textit{eager} to increase the multiplier more than \GA. This is a counter-intuitive operation of the algorithm considering that the current value of the multiplier can apply sufficient pressure to improve the constraint satisfaction. Increasing the multiplier further can lead to a higher risk of overshoot.
    
    \item {\color[RGB]{234,215,80} \textbf{Mode B}} Consider the cases in which infeasibility increases ($e_t \ge \xi_{t-1}$), or the constraints suddenly become (sufficiently) strictly feasible $e_t \le \psi \xi_{t-1} \le 0$.
    These cases induce \textit{frictioned} updates with the same sign as \GA, but of smaller magnitude.

    \item {\color[RGB]{255,180,180} \textbf{Mode C}} When the primal player is feasible, positive momentum would result in an \textit{increase} of the multiplier; going against the update of \GA, which would \textit{decrease} the multiplier. In this context, increasing the multiplier is unreasonable since the current value of the multiplier is already sufficient to achieve feasibility.
\end{enumerate}

\begin{figure}[h]
    \centering
    \includegraphics[scale=1.1]{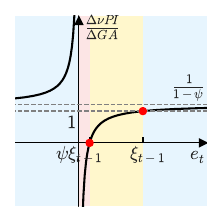}
    \includegraphics[scale=1.1]{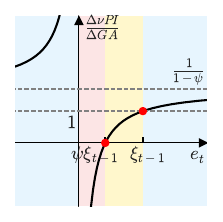}
    \includegraphics[scale=1.1]{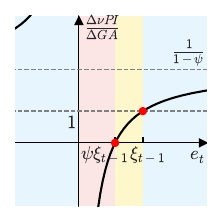}
    \caption[Effect of $\kp$ in the update of \nuPI relative to \GA.]{Effect of $\kp$ in the update of \nuPI relative to \GA. When $\kp$ approaches 0, \nuPI recovers \GA~(a constant function $y=1$ for the relative update). \textbf{A larger $\kp$ leads to a wider ``optimistic" region (in red) where \nuPI decreases the multiplier to prevent overshooting despite the constraint being violated.} \captioncomment{We use $\ki = 1$ and $\nu$ = 0 and $\kp$ of 0.2, 0.7 and 1.3, respectively.}}
    \label{fig:nupi_one_over_x_kp_appx}
\end{figure}

\textbf{Ablation on the influence of $\kp$}. \Cref{fig:nupi_one_over_x_kp_appx} presents three configurations of $\kp$ for a \nuPI$(\nu=0,\kp,\ki=1)$ optimizer. We display $\kp$ at 0.2, 0.7 and 1.3, respectively.
As $\kp \rightarrow 0$, \nuPI is equivalent to \GA. This is confirmed by the relative updates between \nuPI and \GA~converging to a constant function $y=1$.
As $\kp$ increases in the middle and right plots of \cref{fig:nupi_one_over_x_kp_appx}, the asymptote at $1/(1-\psi)$ moves further away from 1, and the width of the ``optimistic" region ({\color[RGB]{255,180,180} \textbf{Mode C}}) increases. 
In other words, as $\kp$ grows, the threshold for ``sufficient improvement'' is relaxed and the optimizer is more prone to decrease the multipliers upon improvements in constraint violation. 
This leads to a more ``cautious" behavior from the algorithm: the multiplier is decreased earlier when the problem approaches the feasible region, which prevents overshooting but with potentially slower convergence. 
One can monotonically control for the convergence and overshoot behaviors by adjusting the $\kp$ value, see \cref{fig:main_ablation} in \S \ref{sec:sparsity}.

\section{Analysis of continuous-time $\nu$PI dynamics as an oscillator}
\label{app:oscillator}

In this section, we examine the spectral properties of the gradient-descent/\nuPI flow dynamics presented in \cref{algo:continous_gd_nupi}. We extend the analysis of \citet{stooke2020responsive} (which only considers the dynamics of $\vx$) to also consider $\vlambda$. 

Consider a constrained optimization problem with equality constraints $\vh$. 
The GD/$\nu$PI flow corresponds to a \textit{continuous-time} dynamical system in which the primal player implements gradient descent on the Lagrangian, and the dual player implements $\nu$PI ascent. This is formalized in \cref{algo:continous_gd_nupi}.

\begin{algorithm}[h]
   \caption{Continuous-time gradient descent/\nuPI}
   \label{algo:continous_gd_nupi}
    \begin{algorithmic}
       \STATE {\bfseries Args:} proportional ($\kp$) and integral ($\ki$) gains for \nuPI flow
       \STATE \graytext{\small 1:} $\dot{\vx} = -\nabla f(\vx)- \Jac \vh(\vx) \vmu$
       \STATE \graytext{\small 2:} $\dot{\vmu} = \ki \vh(\vx) + \kp \dot{\vh}(\vx)$
    \end{algorithmic}
\end{algorithm}

\Cref{thm:oscillator_flow} characterizes the GD/$\nu$PI flow in \cref{algo:continous_gd_nupi} in terms of a second-order dynamical system. Note that this relationship holds for any constrained problem where the objective and constraints have second derivatives.
\Cref{app:qp} analyzes the resulting dynamical system for a quadratic program with linear equality constraints.

\subsection{Oscillator dynamics of GD/$\nu$PI flow}

\begin{theorem}
\label{thm:oscillator_flow}

The dynamics of \cref{algo:continous_gd_nupi} can be characterized by the following system of second-order differential equations, with initial conditions $\vx(0) = \vx_0$, $\vmu(0) = \vmu_0$, $\dot{\vx}(0) = -\nabla f(\vx_0)- \Jac \vh(\vx_0) \vmu_0$, and $\dot{\vmu}(0) = \ki \vh(\vx_0) + \Jac \vh(\vx_0) \dot{\vx}(0)$:
\begin{subnumcases}{}
    & $\ddot{\vx} = \displaystyle - \underbrace{\left(\nabla^2f + \sum_{c'} \mu_{c'} \nabla^2 \vh_{c'} \right)}_{\mPhi} \dot{\vx} - \Jac\vh \dot{\vmu}$ \label{eq:ode2_x} \\
   & $\ddot{\vmu} = \ki \Jac\vh\T \dot{\vx} + \kp \Jac\vh\T \ddot{\vx} + \kp \mXi$ \label{eq:ode2_lambda}
\end{subnumcases}
where $\mXi = \lspar \dot{\vx}\T \nabla^2 h_1 \dot{\vx}, \, \ldots, \, \dot{\vx}\T \nabla^2 h_c \dot{\vx} \rspar\T \in \mathbb{R}^c$.

This can be concisely represented in matrix form as:
\begin{align}
    \begin{bmatrix} \eye_{n\times n} & \vzero_{n\times c} \\  - \kp \Jac\vh\T & \eye_{c\times c}\end{bmatrix}  
    \begin{bmatrix} \ddot{\vx} \\ \ddot{\vmu} \end{bmatrix} 
    + 
    \begin{bmatrix} \mPhi & \Jac\vh \\  -\ki \Jac\vh\T & \vzero_{c\times c}\end{bmatrix}  
    \begin{bmatrix} \dot{\vx} \\ \dot{\vmu} \end{bmatrix} 
    + 
    \begin{bmatrix} \vzero  \\ - \beta \mXi \end{bmatrix} 
    = \vzero.
\end{align}

Or, equivalently:
\begin{align}
     \begin{bmatrix} \ddot{\vx} \\ \ddot{\vmu} \end{bmatrix} 
     + 
     \begin{bmatrix} \mPhi & \Jac\vh \\  \Jac\vh\T \lpar \kp \mPhi - \ki \eye \rpar & \kp \Jac\vh\T \Jac\vh \end{bmatrix}
     \begin{bmatrix} \dot{\vx} \\ \dot{\vmu} \end{bmatrix} 
     + 
     \begin{bmatrix} \vzero  \\ - \beta \mXi \end{bmatrix} 
     = \vzero.
\end{align}
    
\end{theorem}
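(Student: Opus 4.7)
The proposal is to obtain the second-order system by direct time-differentiation of each equation in \cref{algo:continous_gd_nupi}, treating $\vx$ and $\vmu$ as functions of $t$ and applying the chain rule. All terms are well-defined under the stated (twice-differentiability) assumption on $f$ and $\vh$, so this is essentially a careful bookkeeping exercise; the only delicate part is making the tensor $\mXi$ appear correctly from second derivatives of $\vh$.

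First I would differentiate $\dot{\vx} = -\nabla f(\vx) - \Jac\vh(\vx)\vmu$ in time. Writing $\Jac\vh(\vx)\vmu = \sum_{c'} \mu_{c'}\nabla h_{c'}(\vx)$, the product rule gives
\begin{align*}
\ddot{\vx} = -\nabla^2 f(\vx)\dot{\vx} - \sum_{c'} \dot{\mu}_{c'}\nabla h_{c'}(\vx) - \sum_{c'} \mu_{c'}\nabla^2 h_{c'}(\vx)\dot{\vx}.
\end{align*}
The first and third terms combine into $-\mPhi \dot{\vx}$ with $\mPhi = \nabla^2 f + \sum_{c'} \mu_{c'}\nabla^2 h_{c'}$, and the middle term is exactly $-\Jac\vh\,\dot{\vmu}$, yielding \cref{eq:ode2_x}.

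Next I would differentiate $\dot{\vmu} = \ki\,\vh(\vx) + \kp\,\dot{\vh}(\vx)$. Using $\dot{\vh} = \Jac\vh\T \dot{\vx}$ (this is the ordinary Jacobian of $\vh$ in the paper's transposed notation), the first piece yields $\ki\,\Jac\vh\T \dot{\vx}$, and the second requires
\begin{align*}
\ddot{\vh} \;=\; \frac{d}{dt}\lpar \Jac\vh\T \dot{\vx}\rpar \;=\; \frac{d}{dt}\lpar\Jac\vh\T\rpar\dot{\vx} + \Jac\vh\T\ddot{\vx}.
\end{align*}
The key observation is that the $i$-th row of $\Jac\vh\T$ is $(\nabla h_i)\T$, so its time derivative is $(\nabla^2 h_i \dot{\vx})\T$, and therefore the $i$-th entry of $\tfrac{d}{dt}(\Jac\vh\T)\dot{\vx}$ equals $\dot{\vx}\T \nabla^2 h_i \,\dot{\vx}$. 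This is precisely the $i$-th entry of $\mXi$. Combining everything gives \cref{eq:ode2_lambda}.

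For the matrix form, I would simply stack the two equations: the off-diagonal $-\kp\,\Jac\vh\T$ in the left-hand mass-like matrix arises from isolating $\ddot{\vmu}$ (which still contains $\ddot{\vx}$), while the equivalent ``solved'' form follows from substituting \cref{eq:ode2_x} into \cref{eq:ode2_lambda} to eliminate $\ddot{\vx}$, producing the block $(\Jac\vh\T(\kp\mPhi - \ki\eye),\; \kp\Jac\vh\T\Jac\vh)$ in the damping matrix and preserving the forcing term $-\kp\mXi$. The stated initial conditions are obtained directly by evaluating the first-order equations of \cref{algo:continous_gd_nupi} at $t=0$ with $\vx(0)=\vx_0$ and $\vmu(0)=\vmu_0$. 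The main obstacle I anticipate is purely notational: keeping the $\Jac$ convention of the footnote consistent with the standard Jacobian, and unpacking $\tfrac{d}{dt}(\Jac\vh\T)\dot{\vx}$ componentwise so that the quadratic forms $\dot{\vx}\T\nabla^2 h_i\,\dot{\vx}$ are unambiguously identified with $\mXi$.
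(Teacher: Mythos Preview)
Your proposal is correct and mirrors the paper's proof essentially step for step: both differentiate the two first-order equations of \cref{algo:continous_gd_nupi} directly via the chain/product rule, identify $\frac{d}{dt}(\Jac\vh\T)\dot{\vx}$ componentwise with $\mXi$, and then stack/substitute to obtain the matrix forms. The only cosmetic difference is that you expand $\Jac\vh\,\vmu = \sum_{c'}\mu_{c'}\nabla h_{c'}$ before differentiating, whereas the paper differentiates $\Jac\vh$ as a matrix first and then contracts with $\vmu$.
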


\begin{proof}[\textbf{Proof of \cref{thm:oscillator_flow}}]

We start by computing the time derivatives of the objective gradient and constraint Jacobian: 
\begin{align}
\frac{d}{d t} \left[\nabla f\right] = \left[\sum_j \frac{\partial (\nabla f)}{\partial x_j} \frac{d x_j}{d t} \right]_{i}= \nabla^2f\dot{\vx} \hspace{9mm}
\frac{d}{d t} \left[\Jac \vh \right]  = \begin{bmatrix} \nabla^2 \vh_1 \dot{\vx} & \nabla^2 \vh_2 \dot{\vx} & \ldots & \nabla^2 \vh_C \dot{\vx} \end{bmatrix} \label{eq:jacobian_g}
\end{align}
Therefore, the second order dynamics for $\vx$ are given by:
\begin{subequations}
\begin{align}
    \ddot{\vx} &= \frac{d}{d t} \lspar -\nabla f(\vx) - \Jac \vhx \vmu \rspar = -\frac{d}{d t} \left[\nabla f\right] -\Jac \vh \dot{\vmu} - \frac{d}{d t} \left[\Jac \vh \right] \vmu 
    \label{eq:x_ddot_aux}
    \\
    &= -\nabla^2f\dot{\vx} -\Jac \vh \dot{\vmu} - \sum_{c'}\mu_{c'}\nabla^2 \vh_{c'} \dot{\vx} \\
    &=  - \underbrace{\left(\nabla^2f + \sum_{c'}\mu_{c'}\nabla^2 \vh_{c'} \right)}_{\mPhi} \dot{\vx} - \Jac \vh \dot{\vmu} \label{eq:pi_ode_x}
\end{align}
\end{subequations}
The second order dynamics for $\vmu$ are given by:
\begin{align}
    \ddot{\vmu} &= \frac{d}{d t} \left[\ki \vh + \kp \Jac \vh\T \dot{\vx} \right] = \alpha \dot{\vh} + \kp \mXi \dot{\vx} + \kp \Jac \vh\T \ddot{\vx},
\end{align}
where $\mXi$ is defined as:
\begin{align}
\mXi \defas \frac{d}{d t} \left[\Jac \vh\T \right]\dot{\vx} & = \begin{bmatrix} \dot{\vx}\T \nabla^2 \vh_1\dot{\vx}  & \dot{\vx}\T\nabla^2 \vh_2\dot{\vx}   & \ldots & \dot{\vx}\nabla^2 \vh_c\dot{\vx}   \end{bmatrix}\T .
\end{align}
\end{proof}

\vspace{-2ex}
\subsection{Dynamics of GD/$\nu$PI flow for a constrained quadratic program}
\label{app:qp}

Let $\mH \in \mathbb{R}^{n \times n}$ be positive semi-definite and consider the convex quadratic program with $c$ linear constraints:
\begin{equation}
    \label{eq:qp_problem}
    \min_{\vx} \frac{1}{2}\vx\T\mH\vx + \vc\T\vx \hspace{3mm} \text{subject to}  \hspace{3mm} \mA\vx - \vb = 0.
\end{equation}
The Lagrangian min-max game associated with the problem in  \cref{eq:qp_problem} is given by:
\begin{equation}
    \Lag(\vx, \vmu) = \frac{1}{2}\vx\T\mH\vx + \vc\T\vx + \vmu\T (\mA \vx - \vb) = \frac{1}{2}\vx\T\mH\vx + \vc\T\vx + \vmu\T \mA \vx - \vmu\T \vb.
\end{equation}
The linearity of the constraints in \cref{eq:qp_problem} implies $\Jac \vh = \mA\T$ and $\nabla^2 g_{c'} = \vzero$ for $c' \in [c]$, thus $\mPhi = \mH$ and $\mXi = \vzero$. Therefore, we obtain a homogeneous system of second-order differential equations with constant coefficients:
\begin{align}
     \begin{bmatrix} \ddot{\vx} \\ \ddot{\vmu} \end{bmatrix} 
     + 
     \underbrace{\begin{bmatrix} \mH & \mA\T \\  \mA \lpar \kp \mH - \ki \eye \rpar & \kp \mA \mA\T \end{bmatrix}}_{\mU}
     \begin{bmatrix} \dot{\vx} \\ \dot{\vmu} \end{bmatrix} 
     = \vzero.
\end{align}
A simple state transformation $\vz = \lspar \vx, \vmu, \dot{\vx}, \dot{\vmu} \rspar\T$ and $\dot{\vz} = \lspar \dot{\vx}, \dot{\vmu}, \ddot{\vx}, \ddot{\vmu} \rspar\T$ yields:
\begin{align}
     \dot{\vz} = - \begin{bmatrix} \vzero & \eye \\  
     \vzero & \mU
     \end{bmatrix} \vz = - \begin{bmatrix} \vzero_{(n+c)\times(n+c)} & \eye_{(n+c)\times(n+c)} \\  
     \vzero_{(n+c)\times(n+c)} & 
     \begin{bmatrix} \mH & \mA\T \\  \mA \lpar \kp \mH - \ki \eye \rpar & \kp \mA \mA\T \end{bmatrix}
     \end{bmatrix} \vz
\end{align}
Therefore, this $2(n+c)$-dimensional linear system has zero as an eigenvalue with algebraic multiplicity $n+c$, and the remaining eigenvalues correspond to the spectrum of $- \mU$.

When the matrix $\mH$ is zero, we recover the smooth bilinear games considered by \citet[Eq. 18]{gidel2019negative} in their study of negative momentum. In this case, the matrix $\mU$ looks like:
\begin{align}
    - \mU =  - \begin{bmatrix} \vzero & \mA\T \\ -\ki\mA & \kp \mA \mA\T \end{bmatrix}
\end{align}
It is easy to see that large enough values of $\kp$ cause the eigenvalues of the matrix to have negative real parts, and thus make the system converge. However, if $\kp = 0$ (i.e. gradient descent-ascent), the eigenvalues of this matrix are either 0 or pure imaginary. This fact is in line with existing results in the literature on the lack of convergence  gradient descent-ascent on bilinear games \cite{gidel2019negative}.

\textbf{Case of one-variable and one constraint.}
It is instructive to  analyze the spectrum of $\mU$ in the case of a problem with a one-dimensional primal variable and a single constraint (and thus one multiplier). In this case, $\mU$ and its eigenvalues take the form:
\begin{align}
    - \mU = - \begin{bmatrix} h & a \\  a \lpar \kp h - \ki \rpar & \kp a^2 \end{bmatrix} \hspace{10mm} \lambda = \frac{-\lpar h + \kp a^2 \rpar \pm \sqrt{\lpar h + \kp a^2 \rpar^2 - 4a^2\ki}}{2} 
    \label{eq:critical_damping}
\end{align}
As before, the eigenvalues of this matrix depend on the choice of $\kp$. This is illustrated in \cref{fig:2d_eigenvalues}.

\begin{figure}[h]
    \centering    \includegraphics[width=0.7\textwidth]{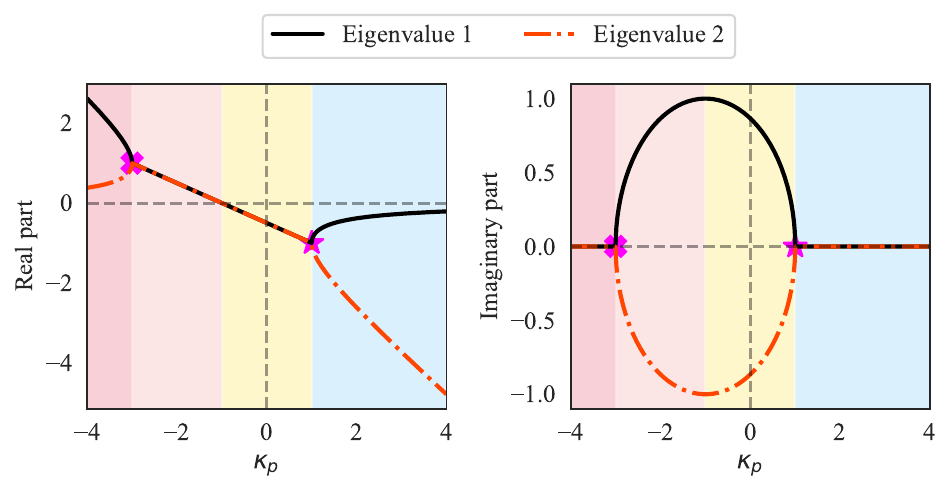}
    \caption[Eigenvalues for \cref{eq:qp_problem} as a function of $\kp$ in the one-dimensional case.]{ Eigenvalues for \cref{eq:qp_problem} as a function of $\kp$ in the one-dimensional case. \textbf{A positive value of $\kp$ (denoted by { $\star$}) achieves \textit{critical damping} (i.e. equal convergence rate for both dimensions).} \captioncomment{This plot uses $h=1$, $a=-1$ and $\ki=1$.}}
    \label{fig:2d_eigenvalues}
\end{figure}

Note that when the discriminant of \cref{eq:critical_damping} is zero, both eigenvalues match (and must thus be real). When this occurs and both eigenvalues are negative, the system converges and does so at the same rate in both dimensions. This is akin to the notion of \textit{critical damping} from the control theory literature.

The values of $\kp$ that make the discriminant zero are $\kp^* = \frac{- h \pm 2|a| \sqrt{\ki}}{a^2}$, leading to the eigenvalues $\lambda(\kp^*) = \frac{-\lpar h + \kp^* a^2 \rpar}{2} = \mp a\sqrt{\ki}$. These values of $\kp^*$ are marked with $\star$ and $\times$ in \cref{fig:2d_eigenvalues}. Note that out of the two values of $\kp$ producing matching eigenvalues, only the choice $\kp^* > 0 $ yields a convergent system.

More generally, depending on $\kp$, the system exhibits different behaviors:
\begin{itemize}
    \item \textbf{Divergence.} In the red regions, the system \textit{diverges}; in light red region, this happens together with oscillations. Note how all the divergent configurations use a negative value of $\kp$. The fuchsia cross ($\mathbf{\times}$) denotes the value of $\kp$ for which both dimensions diverge at the same rate.
    \item \textbf{Underdamping.} In the yellow region, the system is \textit{underdamped} and \textit{converges with oscillations}. Interestingly, this system admits some negative values of $\kp$ (of sufficiently small magnitude) while remaining convergent.
    \item \textbf{Critical damping.} The fuchsia star ($\star$) shows the $\kp$ value that makes both dimensions of the system converge \textit{at the same rate}. Note that this \textit{critical damping} regime is achieved at a strictly positive value of $\kp$, and thus is not achievable by gradient ascent.
    \item \textbf{Overdamping.} In the blue region, the system is \textit{convergent without oscillations} but \textit{overdamped} since the dimension corresponding to the black eigenvalue converges more slowly. 
\end{itemize}

\section{Illustrative 2D nonconvex problem} 

We demonstrate the behavior of \nuPI on the two-dimensional, nonconvex, equality-constrained problem in \cref{eq:2d_problem}. 
This problem was proposed by \citet{Boyd_2021}. 
The setting is simple enough to allow for visualizing the optimization paths of each optimization variable and multiplier, while also being challenging due to nonconvexity. 
\begin{equation}
    \label{eq:2d_problem}
    \underset{\vx = (x_1, x_2)}{\text{min}} \, f(\vx) \defas \left\Vert \begin{bmatrix}
           x_1 + e^{-x_2}\\
           x_1^2 + 2x_2 + 1
         \end{bmatrix} \right\Vert_2^2 
    \hspace{2mm}
    \text{subject to}
    \hspace{2mm}  
    h(\vx) \defas x_1 + x_1^3 + x_2 + x_2^2 = 2.
\end{equation}

\begin{figure*}[h]
    \centering
    \includegraphics{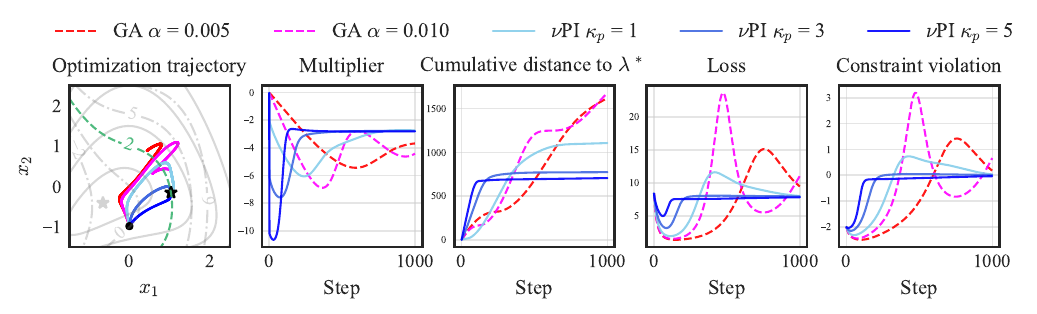}
    \vspace{-4ex}
    \caption[Optimization trajectories for different algorithms on a 2D nonconvex equality-constrained minimization problem.]{Optimization trajectories for different algorithms on a 2D nonconvex equality-constrained minimization problem. \captioncomment{\nuPI runs use $\nu = 0$ and $\ki = 0.01$. The light gray $\star$ marks the \textit{unconstrained} optimum, while the black $\star$ marks the \textit{constrained} optimum. Level sets correspond to the objective function (solid) and constraint (dashed).}}
    \label{fig:2d_kp_ablation}
\end{figure*}

\textbf{\GA~trajectories.} In \cref{fig:2d_kp_ablation}, \GA~trajectories are initially drawn toward the direction of the unconstrained optimum since multipliers grow slowly at first.
As training progresses, the constraint plays a more significant role. 
With a step-size that is too small ($\alpha = 0.005$), the trajectory does not converge to the global optimum. 
In contrast, the system reaches the global constrained optimum point when employing a larger step-size ($\alpha = 0.01$).
This is achieved while incurring in \textit{oscillations}. 
The phase change from not converging with a small step-size, to converging with oscillations indicates that \GA~is not suitable for obtaining critical damping when solving the problem. 

\textbf{\nuPI~trajectories.} The three blue trajectories in \cref{fig:2d_kp_ablation} show different behaviors of \nuPI: underdamping (light blue, $\kp=1$), almost-critical damping ($\kp=3$) and overdamping (dark blue, $\kp=5$). 
Note the monotonic effect of $\kp$ on the damping of the system.
\nuPI provides the flexibility to obtain different levels of constraint overshoot, and can achieve feasibility and convergence at different speeds. 
This added flexibility leads to enhanced control over the dynamics of the system relative to \GA, thus enabling applications of \nuPI to safety-sensitive tasks.

\begin{figure*}[h]
    \centering
    \includegraphics{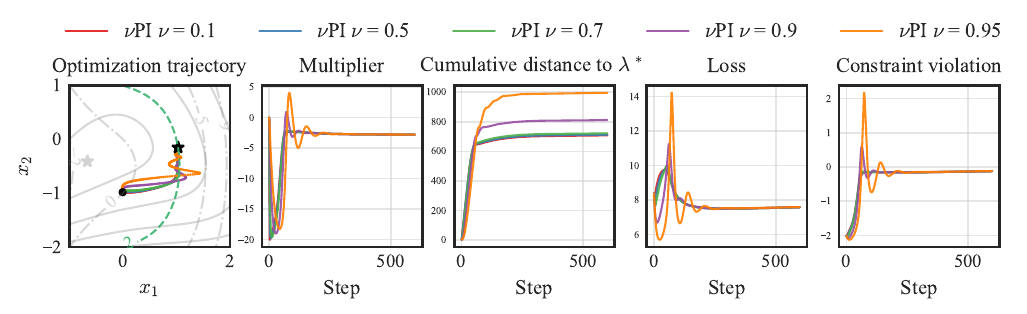}
    \vspace{-4ex}
    \caption[Optimization trajectories for the \nuPI algorithm under different choices of $\nu$.]{Optimization trajectories for the \nuPI algorithm under different choices of $\nu$. \captioncomment{\nuPI runs use $\ki = 0.01$ and $\kp = 10$.}
    }
    \label{fig:2d_nu_ablation}
\end{figure*}

\textbf{Ablation on $\nu$.} In \cref{fig:2d_nu_ablation}, we zoom in on the effect of $\nu$ for fixed choices of $\kp$ and $\ki$.
A $\nu$ closer to 0 tends towards \algo{PI}, whereas a $\nu$ closer to one gives more importance to historical constraint violations.
We observe that a larger $\nu$ behaves qualitatively similar to positive momentum: the multiplier tends to increase faster if the constraint is not satisfied for a period of time. 
In this example, this leads to oscillations as shown for $\nu=0.95$. 
Since this problem is deterministic, using a non-zero $\nu$ does not show any advantage. 
Our fairness experiments showcase an application where $\nu>0$ is beneficial.

\section{Experimental details}
\label{app:exp_details}

Our implementation use PyTorch 2.0.0 \citep{pytorch} and the Cooper library for Lagrangian constrained optimization \citep{gallegoPosada2022cooper}.
Our code is available at: \href{https://github.com/motahareh-sohrabi/nuPI}{\texttt{https://github.com/motahareh-sohrabi/nuPI}}.

\subsection{Linear SVM experiments}
\label{app:svm}

In our experiment with linear SVMs, we focus on two linearly separable classes from the Iris dataset \cite{misc_iris_53}. We select 100 instances from the Iris setosa and Iris versicolor species, which are two linearly separable classes. Each data point in this dataset has four features. We selected 70\% of data for training and the rest for validation. This gives the algorithm 70 Lagrange multipliers to learn. 

We know that a unique $\lambda^*$ exists in our experiments. The linearly independent constraint qualification (LICQ) holds for the selected data, so the Karush-Kuhn-Tucker (KKT) conditions imply the existence and uniqueness of $\lambda^*$ at the constrained optimum $x^*$. All of the methods that do not diverge achieve perfect training and validation accuracy in this task.

\textbf{Experiment configuration and hyperparameters.} Throughout all of the experiments, we fixed the primal optimizer and only changed the dual optimizer. The primal optimizer is gradient descent with momentum, with a step size of $10^{-3}$ and momentum of $0.9$. 

\begin{figure*}[h]
    \centering
    \includegraphics[scale=1.0]{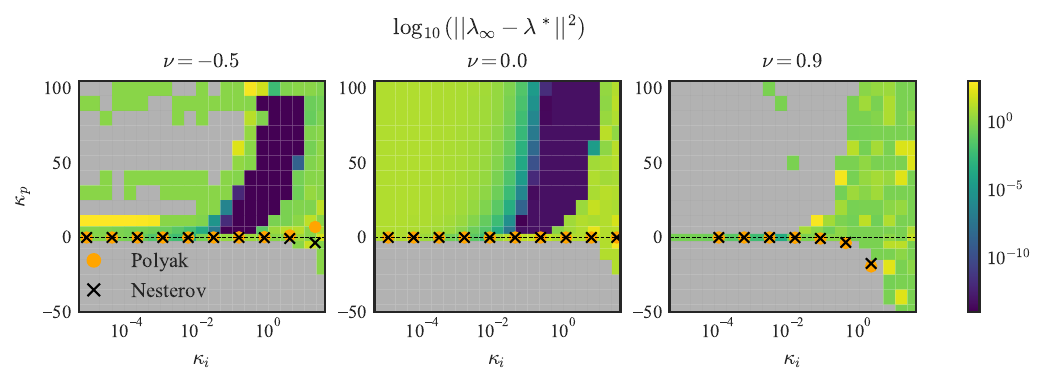}
    \caption[Distance to optimal Lagrange multipliers for different selections of parameter $\nu$ in $\nu$PI algorithm.]{Distance to optimal Lagrange multipliers for different selections of parameter $\nu$ in $\nu$PI algorithm. We also show where the equivalent $\kp$ and $\ki$ parameters for \algo{Nesterov}$(\alpha, \beta=\nu)$ and \algo{Polyak}$(\alpha, \beta=\nu)$ lie according to \cref{thm:um_as_nupi} for different values of $\alpha$. \textbf{The \nuPI algorithm with $\nu = 0$ can give the highest number of converging step-sizes. While negative $\nu = 0.5$ induces a range of converging step-sizes as well, there is no value of $\ki$ that the algorithm converges for $\nu = 0.9$.}\captioncomment{ The gray color shows the runs exceeding a distance of $10^3$ to $\lambda^*$.}}
    \label{fig:svm_heatmap_varying_u}
\end{figure*}

\textbf{Different values of the parameter $\nu$ in \nuPI algorithm.}
We examine how changing the parameter $\nu$ in the \cref{algo:nupi} can affect the convergence of the SVM task with different choices of $\ki$ and $\kp$. \cref{fig:svm_heatmap_varying_u} shows how \nuPI behaves when choosing a negative, zero and positive value of $\nu$. While $\nu = -0.5$ can lead to a converging algorithm for some step-sizes, $\nu = 0.0$ offers a wider range of converging step-sizes. There is no choice of step-size for which the \nuPI algorithm with a positive value of $\nu = 0.9$ converges to $\lambda^*$.

\textbf{Relationship between momentum and \nuPI algorithms.}
\cref{thm:um_as_nupi} suggests that Polyak and Nesterov momentum algorithms can be instantiated using a specific choice of $\ki$ and $\kp$ in the \cref{algo:nupi}. In \cref{fig:svm_heatmap_varying_u} we show where \algo{Polyak}$(\alpha, \beta)$ and \algo{Nesterov}$(\alpha, \beta)$ lie for a choice of $\alpha$'s and with $\beta = \nu$. For each pair of ($\alpha_i$, $\beta$), we calculate the value of $\ki$ and $\kp$ that recover \algo{Polyak}$(\alpha_i, \beta)$ and \algo{Nesterov}$(\alpha_i, \beta)$ according to \cref{thm:um_as_nupi}.
\begin{itemize}
    \item When $\beta = \nu = 0$ there is no momentum and both \Polyak~and \Nesterov~reduce to gradient ascent. Therefore, all of the dots indicating momentum methods lie in the $\kp = 0$ line for the middle plot of \cref{fig:svm_heatmap_varying_u}.
    \item A positive $\beta = \nu = 0.9$ (\cref{fig:svm_heatmap_varying_u}, right) corresponds to a common momentum choice in minimization problems. We can see how there is no step-size value for which \Polyak~or \Nesterov~converge in this task. This observation supports the claim of  \citet{gidel2019negative} on the ineffectiveness of positive momentum for convergence in games.
    \item A negative $\beta = \nu = -0.5$ leads to convergence for some choices of step-size. However, these do not correspond to what \Polyak~and \Nesterov~can achieve. This is consistent with our observation in \cref{fig:svm_sensitivity_analysis} (left), indicating the necessity of adding a (positive) $\kp$ term to the optimizer in order to achieve convergence. This highlights the benefits of the increased generality of \nuPI.

Moreover, we observe that \Polyak~with negative momentum achieves a positive $\kp$ while all momentum choices for \Nesterov~lead to negative $\kp$ values. This further supports the experimental results of \citet{gidel2019negative}, where \Polyak~is used when they want to experiment with negative momentum. Our hypothesis is that negative momentum with Polyak is successful in games because it can induce a positive $\kp$.
\end{itemize}

\subsection{Sparsity experiments}
\label{app:details:sparsity}

\textbf{Background.} \citet{louizos2017learning} propose a re-parameterization of models that allows applying $L_0$-norm regularization on their weights. They propose the use of stochastic gates $\vz$ that indicate whether each parameter is active or not, where $\vz$ follows a hard-concrete distribution parameterized by $\vphi$. Employing the re-parameterization trick allows the computation of gradients of the $L_0$-norm of the model with respect to $\vphi$. \citet{gallego2022controlled} formulate a constrained optimization problem that prescribes the desired sparsity of the model as a constraint.
\begin{equation}
    \label{eq:sparsity_cmp_app}
    \min_{\vw, \vphi \in \mathbb{R}^d} \mathbb{E}_{\vz | \vphi} \left[ L(\vw \odot \vz \,|\, \mathcal{D}) \right] \quad \text{s.t.} \quad \frac{\mathbb{E}_{\vz | \vphi} [\|\vz\|_0]}{\#(\vw)} \leq \epsilon,
\end{equation}
where $\vx$ are the parameters of the model, $L$ is an ERM objective, and $\mathcal{D}$ is a dataset. The constraint is normalized with the total number of parameters of the model $\#(\cdot)$, so that the constraint level $\epsilon \in [0, 1]$ corresponds to the maximum allowed \textit{proportion} of active parameters.
For details on the re-parameterization, and a closed form expression for $\mathbb{E}_{\vz | \vphi} [\|\vz\|_0]$, see \citet{louizos2017learning,gallego2022controlled}.

\textbf{Hard-concrete distribution.} The $L_0$-norm re-parameterization proposed by \citet{louizos2017learning} considers a hard-concrete distribution for the stochastic gates of the model. The hard-concrete distribution is based on a stretched and clamped concrete distribution \citep{maddison2016concrete}. Similar to \citet{louizos2017learning,gallego2022controlled}, we choose a temperature of $2/3$ for the concrete distribution, and a stretching interval of $[-0.1, 1.1]$.

\textbf{Architecture.} We consider ResNet-18 \citep{he2016deep} models with basic residual blocks for our sparsity experiments, which have a total of approximately $11.2$ million parameters. 
Following \citet{louizos2017learning} and \citet{gallego2022controlled}, we employ output feature map sparsity on the first convolutional layer of each residual block, whereas the following convolutional layer and the residual connection
are kept to be fully dense. The first convolutional layer of the model and the linear output layer are also kept fully dense. This model counts with 8 sparsifiable convolutional layers.

\textbf{Choice of sparsity levels.} Although \citet{gallego2022controlled} consider up to $80\%$ structured sparsity ($20\%$ density) for ResNet-18 models, \citet{gale2019state} indicate that it is possible to train ResNet-50 models with structured sparsity of up to $95\%$ ($5\%$ density or less), without incurring on a catastrophic loss on model accuracy. Therefore, we consider sparsity levels of between $30\%$ and $90\%$ ($70\%$ and $10\%$ density, respectively). 

\textbf{Primal optimization.} We consider an optimization pipeline for the model that incorporates standard techniques used to train $L_0$-sparse ResNet-18 models on CIFAR10. For the weights of the model, we use SGD with a momentum of $0.9$, an initial learning rate of $0.01$, and a cosine annealing learning rate scheduler \citep{loshchilov2016sgdr}. 

We initialize the gates with a \textit{droprate init} of $0.01$, effectively yielding a fully dense model at initialization. 
Akin to \citet{gallego2022controlled}, we use \Adam~\citep{kingma2014adam} with a step-size of $8 \cdot 10^{-4}$ to optimize the $\vphi$ parameters of the stochastic gates. 
When applying $L_2$-norm regularization on the parameters, we detach the contribution of the gates as recommended by \citet{gallego2022controlled}.

\textbf{Dual optimization.} For sparsity experiments, we consider $\nu = 0$. Since the constraint is deterministic given the state of the model (it does not need to be estimated from mini-batches), we consider the use of an EMA to not be crucial for this task.
Unless otherwise stated, we use a dual step-size of $8 \cdot 10^{-4}$ for all dual optimizer choices (as was provided by \citet{gallego2022controlled}). We decide against tuning the dual step-size separately for each optimizer to highlight the flexibility of \nuPI: given a step-size that was tuned to yield good results for \GA, \nuPI may produce better-behaved dynamics. 

All of our sparsity experiments use a batch size of 128 and are over 200 epochs.

\subsection{Fairness experiments}

\textbf{Dataset.} In this experiment we consider the adult dataset \cite{misc_adult_2}, pre-processed following \citet{zafar2019fairness}. The raw data comprises eight categorical and six continuous attributes. After processing, the data is comprised of 50-dimensional sparse feature vectors. The train and test sets consist of 30,162 and 15,060 samples, respectively.

\textbf{Background.} We consider a fairness task under the disparate impact constraint \cite{zafar2019fairness} shown in \cref{eq:fairness_problem}. 
This constraint is also known as statistical parity and demographic parity \cite{10.1145/3097983.3098095,10.1145/2090236.2090255}. We consider two sensitive attributes in the adult dataset: sex, denoted as $A_1 = \{\textit{male},\,  \textit{female}\}$, and race, denoted as $A_2 = \{\textit{White},\,\textit{Black},\, \textit{Asian-Pac-Islander},\, \textit{Amer-Indian-Eskimo}, \,\textit{Other}\}$. \Cref{eq:fairness_problem} entails the intersection of both attributes, leading to $|A_1| \times |A_2| = 10$ constraints.

\textbf{Architecture and primal optimization.} We train a 2-hidden-layer neural network with hidden sizes of (100, 100) similar to the experimental setup of \citet{cotter2019proxy}.
In order to choose the primal optimizer hyperparamters, we trained the unconstrained problem and chose the parameters of the run with the highest training accuracy. We fixed this primal optimizer across our constrained experiments to be \Adam~($\alpha = 10^{-2}$). 

\textbf{Dual optimization.} We chose  the best step-size for \GA~aimed at minimizing training accuracy, while ensuring that the maximum violation achieves the lowest possible value. This lead to a dual step-size of $\alpha = 0.03$. We then fixed this value as the $\ki$ parameter of \nuPI and ran a grid search to find the best $\kp$. The grid search for $\kp$ considered (logarithmically spaced) values in $[0.01, 100] $. The best results were found with $\kp=5$.

Due to the noise in the constraints, we also experimented with the effect of $\nu$ on the optimization dynamics. We tried $\nu$ values of 0.0, 0.5, 0.9, 0.95, and 0.99. We noticed that higher values of $\nu$ can improve the learning dynamics, with the best results achieved at 0.99. Setting $\nu = 0$ results in noisy Lagrange multipliers, which lead to unstable optimization. This is illustrated in \cref{fig:fairness_plot}.

\section{Comprehensive results on the sparsity task}

In this section we provide extensive experiment results for our sparsity experiments, complementing \S\ref{sec:sparsity}. We conducted experiments with global and layer-wise sparsity targets, at $\epsilon = 70\%,\, 50\%,\, 30\%,\, 10\%$ density levels. The shaded region of our plots corresponds to the feasible set. ``Relative violations'' are computed by dividing the absolute constraint violations by the target density.

\subsection{Global}
\label{app:experiments:sparsity}

For global sparsity experiments (\cref{fig:icml2024:icml_global_70,fig:icml2024:icml_global_50,fig:icml2024:icml_global_30,fig:icml2024:icml_global_10,table*:global_density_70_icml,table*:global_density_50_icml,table*:global_density_30_icml,table*:global_density_10_icml}), we observe a general trend for models that overshoot into becoming excessively sparse to achieve a lower training performance. This insight is also generally true for validation accuracy. In particular, gradient ascent and momentum methods consistently exhibit overshoot, whereas \nuPI and gradient ascent with dual restarts do not overshoot and achieve good final performance. Dual restarts generally produce (slightly) infeasible solutions.
Note that at $\epsilon = 10\%$, negative momentum runs do not overshoot, but positive momentum runs do.

\begin{figure}[h!]
    \centering
    \includegraphics[scale=1]{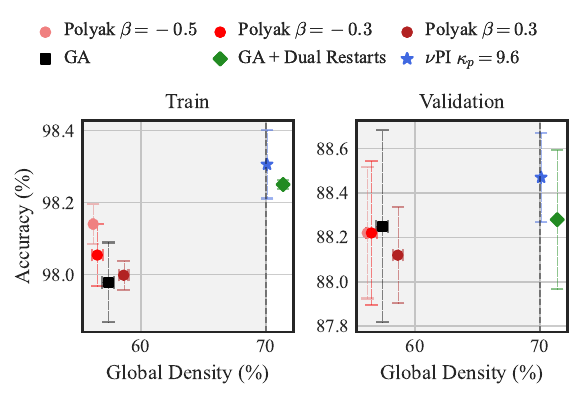}
    \caption[CIFAR10 trade-off plot for \textit{global} sparsity under a 70\% density target.]{CIFAR10 trade-off plot for \textit{global} sparsity under a 70\% density target. \textbf{\nuPI successfully achieves the desired sparsity while achieving the highest train accuracy.}
    \captioncomment{The shaded region is the feasible set. As higher density correlates to higher train accuracy, overshooting to a lower density is undesirable. All optimizers use the same step-size. \textit{This figure is the same as \cref{fig:main_tradeoff}. We repeat it here for the reader's convenience.}}
    }
    \label{fig:icml2024:icml_global_70}
\end{figure}

\begin{table*}[h!]
\centering
\caption[CIFAR10 results for \textit{global} sparsity under a $70\%$ density target.]{CIFAR10 results for \textit{global} sparsity under a $70\%$ density target. \textbf{\nuPI successfully achieves the desired sparsity while achieving the highest train accuracy.} \captioncomment{The results in this table correspond to those in \cref{fig:icml2024:icml_global_70}. As higher density correlates to higher train accuracy, overshooting to a lower density is undesirable. All optimizers use the same step-size.}}
\label{table*:global_density_70_icml}
\begin{adjustbox}{max width=0.8\textwidth}
\begin{tabular}{l|cc|cc}
\toprule
Method & Train Acc. & Test Acc. & Violation & Relative Violation \\
\midrule
Polyak $\beta = -0.5$ & 98.1\color{gray}{\small{ $\pm$ 0.06}} & 88.2\color{gray}{\small{ $\pm$ 0.30}} & -13.8\color{gray}{\small{ $\pm$ 0.09}} & -19.7\color{gray}{\small{ $\pm$ 0.13}} \\
Polyak $\beta = -0.3$ & 98.1\color{gray}{\small{ $\pm$ 0.09}} & 88.2\color{gray}{\small{ $\pm$ 0.32}} & -13.5\color{gray}{\small{ $\pm$ 0.43}} & -19.3\color{gray}{\small{ $\pm$ 0.61}} \\
Polyak $\beta = 0.3$ & 98.0\color{gray}{\small{ $\pm$ 0.04}} & 88.1\color{gray}{\small{ $\pm$ 0.22}} & -11.4\color{gray}{\small{ $\pm$ 0.39}} & -16.3\color{gray}{\small{ $\pm$ 0.55}} \\
GA & 98.0\color{gray}{\small{ $\pm$ 0.11}} & 88.2\color{gray}{\small{ $\pm$ 0.43}} & -12.6\color{gray}{\small{ $\pm$ 0.48}} & -18.0\color{gray}{\small{ $\pm$ 0.69}} \\
GA + Dual Restarts & 98.2\color{gray}{\small{ $\pm$ 0.01}} & 88.3\color{gray}{\small{ $\pm$ 0.31}} & 1.4\color{gray}{\small{ $\pm$ 0.07}} & 2.0\color{gray}{\small{ $\pm$ 0.10}} \\
\textit{Ours} - $\nu$PI $\kappa_p = 9.6$ & 98.3\color{gray}{\small{ $\pm$ 0.09}} & 88.5\color{gray}{\small{ $\pm$ 0.20}} & 0.1\color{gray}{\small{ $\pm$ 0.01}} & 0.1\color{gray}{\small{ $\pm$ 0.02}} \\
\bottomrule
\end{tabular}
\end{adjustbox}
\end{table*}

\begin{figure}[h!]
    \centering
    \includegraphics[scale=1]{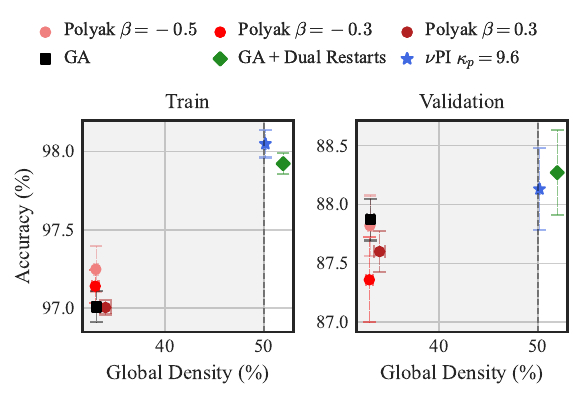}
    \caption{CIFAR10 trade-off plot for \textit{global} sparsity under a 50\% density target.}
    \label{fig:icml2024:icml_global_50}
\end{figure}

\begin{table*}[h!]
\centering
\caption[CIFAR10 results for \textit{global} sparsity under a $50\%$ density target.]{CIFAR10 results for \textit{global} sparsity under a $50\%$ density target. \captioncomment{The results in this table correspond to those in \cref{fig:icml2024:icml_global_50}.}}
\label{table*:global_density_50_icml}
\begin{adjustbox}{max width=0.8\textwidth}
\begin{tabular}{l|cc|cc}
\toprule
Method & Train Acc. & Test Acc. & Violation & Relative Violation \\
\midrule
Polyak $\beta = -0.5$ & 97.2\color{gray}{\small{ $\pm$ 0.15}} & 87.8\color{gray}{\small{ $\pm$ 0.26}} & -17.0\color{gray}{\small{ $\pm$ 0.17}} & -34.0\color{gray}{\small{ $\pm$ 0.34}} \\
Polyak $\beta = -0.3$ & 97.1\color{gray}{\small{ $\pm$ 0.11}} & 87.4\color{gray}{\small{ $\pm$ 0.36}} & -17.1\color{gray}{\small{ $\pm$ 0.33}} & -34.2\color{gray}{\small{ $\pm$ 0.66}} \\
Polyak $\beta = 0.3$ & 97.0\color{gray}{\small{ $\pm$ 0.04}} & 87.6\color{gray}{\small{ $\pm$ 0.17}} & -16.0\color{gray}{\small{ $\pm$ 0.59}} & -32.1\color{gray}{\small{ $\pm$ 1.18}} \\
GA & 97.0\color{gray}{\small{ $\pm$ 0.10}} & 87.9\color{gray}{\small{ $\pm$ 0.18}} & -17.0\color{gray}{\small{ $\pm$ 0.36}} & -33.9\color{gray}{\small{ $\pm$ 0.72}} \\
GA + Dual Restarts & 97.9\color{gray}{\small{ $\pm$ 0.07}} & 88.3\color{gray}{\small{ $\pm$ 0.36}} & 2.0\color{gray}{\small{ $\pm$ 0.09}} & 3.9\color{gray}{\small{ $\pm$ 0.18}} \\
\textit{Ours} - $\nu$PI $\kappa_p = 9.6$ & 98.0\color{gray}{\small{ $\pm$ 0.09}} & 88.1\color{gray}{\small{ $\pm$ 0.35}} & 0.2\color{gray}{\small{ $\pm$ 0.03}} & 0.3\color{gray}{\small{ $\pm$ 0.05}} \\
\bottomrule
\end{tabular}
\end{adjustbox}
\end{table*}

\begin{figure}[h!]
    \centering
    \includegraphics[scale=1]{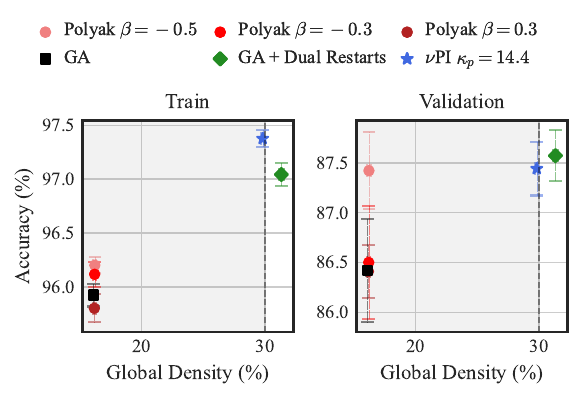}
    \caption{CIFAR10 trade-off plot for \textit{global} sparsity under a 30\% density target.}
    \label{fig:icml2024:icml_global_30}
\end{figure}

\begin{table*}[h!]
\centering
\caption[CIFAR10 results for \textit{global} sparsity under a $30\%$ density target.]{CIFAR10 results for \textit{global} sparsity under a $30\%$ density target. \captioncomment{The results in this table correspond to those in \cref{fig:icml2024:icml_global_30}.}}
\label{table*:global_density_30_icml}
\begin{adjustbox}{max width=0.8\textwidth}
\begin{tabular}{l|cc|cc}
\toprule
Method & Train Acc. & Test Acc. & Violation & Relative Violation \\
\midrule
Polyak $\beta = -0.5$ & 96.2\color{gray}{\small{ $\pm$ 0.07}} & 87.4\color{gray}{\small{ $\pm$ 0.39}} & -13.8\color{gray}{\small{ $\pm$ 0.13}} & -45.9\color{gray}{\small{ $\pm$ 0.43}} \\
Polyak $\beta = -0.3$ & 96.1\color{gray}{\small{ $\pm$ 0.11}} & 86.5\color{gray}{\small{ $\pm$ 0.57}} & -13.8\color{gray}{\small{ $\pm$ 0.11}} & -45.9\color{gray}{\small{ $\pm$ 0.36}} \\
Polyak $\beta = 0.3$ & 95.8\color{gray}{\small{ $\pm$ 0.13}} & 86.4\color{gray}{\small{ $\pm$ 0.26}} & -13.8\color{gray}{\small{ $\pm$ 0.09}} & -46.0\color{gray}{\small{ $\pm$ 0.31}} \\
GA & 95.9\color{gray}{\small{ $\pm$ 0.11}} & 86.4\color{gray}{\small{ $\pm$ 0.52}} & -13.9\color{gray}{\small{ $\pm$ 0.11}} & -46.3\color{gray}{\small{ $\pm$ 0.36}} \\
GA + Dual Restarts & 97.0\color{gray}{\small{ $\pm$ 0.11}} & 87.6\color{gray}{\small{ $\pm$ 0.26}} & 1.3\color{gray}{\small{ $\pm$ 0.22}} & 4.4\color{gray}{\small{ $\pm$ 0.73}} \\
\textit{Ours} - $\nu$PI $\kappa_p = 14.4$ & 97.4\color{gray}{\small{ $\pm$ 0.08}} & 87.4\color{gray}{\small{ $\pm$ 0.27}} & -0.2\color{gray}{\small{ $\pm$ 0.11}} & -0.7\color{gray}{\small{ $\pm$ 0.38}} \\
\bottomrule
\end{tabular}
\end{adjustbox}
\end{table*}

\begin{figure}[h!]
    \centering
    \includegraphics[scale=1]{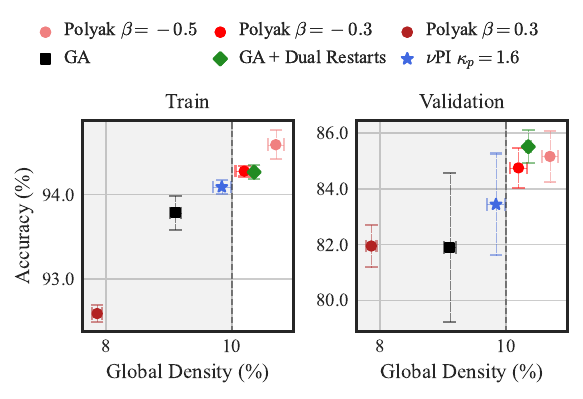}
    \caption{CIFAR10 trade-off plot for \textit{global} sparsity under a 10\% density target.}
    \label{fig:icml2024:icml_global_10}
\end{figure}

\begin{table*}[h!]
\centering
\caption[CIFAR10 results for \textit{global} sparsity under a $10\%$ density target.]{CIFAR10 results for \textit{global} sparsity under a $10\%$ density target. \captioncomment{The results in this table correspond to those in \cref{fig:icml2024:icml_global_10}.}}
\label{table*:global_density_10_icml}
\begin{adjustbox}{max width=0.8\textwidth}
\begin{tabular}{l|cc|cc}
\toprule
Method & Train Acc. & Test Acc. & Violation & Relative Violation \\
\midrule
Polyak $\beta = -0.5$ & 94.6\color{gray}{\small{ $\pm$ 0.17}} & 85.2\color{gray}{\small{ $\pm$ 0.93}} & 0.7\color{gray}{\small{ $\pm$ 0.13}} & 7.0\color{gray}{\small{ $\pm$ 1.31}} \\
Polyak $\beta = -0.3$ & 94.3\color{gray}{\small{ $\pm$ 0.06}} & 84.7\color{gray}{\small{ $\pm$ 0.71}} & 0.2\color{gray}{\small{ $\pm$ 0.14}} & 2.0\color{gray}{\small{ $\pm$ 1.39}} \\
Polyak $\beta = 0.3$ & 92.6\color{gray}{\small{ $\pm$ 0.10}} & 81.9\color{gray}{\small{ $\pm$ 0.77}} & -2.1\color{gray}{\small{ $\pm$ 0.08}} & -21.4\color{gray}{\small{ $\pm$ 0.82}} \\
GA & 93.8\color{gray}{\small{ $\pm$ 0.20}} & 81.9\color{gray}{\small{ $\pm$ 2.68}} & -0.9\color{gray}{\small{ $\pm$ 0.09}} & -9.0\color{gray}{\small{ $\pm$ 0.93}} \\
GA + Dual Restarts & 94.3\color{gray}{\small{ $\pm$ 0.08}} & 85.5\color{gray}{\small{ $\pm$ 0.59}} & 0.4\color{gray}{\small{ $\pm$ 0.03}} & 3.5\color{gray}{\small{ $\pm$ 0.35}} \\
\textit{Ours} - $\nu$PI $\kappa_p = 1.6$ & 94.1\color{gray}{\small{ $\pm$ 0.08}} & 83.4\color{gray}{\small{ $\pm$ 1.83}} & -0.2\color{gray}{\small{ $\pm$ 0.14}} & -1.6\color{gray}{\small{ $\pm$ 1.44}} \\
\bottomrule
\end{tabular}
\end{adjustbox}
\end{table*}

\subsection{Layer-wise}

We perform layer-wise sparsity experiments with $\epsilon=10\%,30\%,50\%,70\%$ density targets (\cref{table*:layer-wise_density_70_icml,table*:layer-wise_density_50_icml,table*:layer-wise_density_30_icml,table*:layer-wise_density_10_icml}). 
We observe a similar trend to global sparsity experiments: \GA~and momentum methods overshoot, while \nuPI and \GA~with dual restarts reliably achieve feasible solutions, with small levels of overshoot. Moreover, we observe that the violations of \nuPI and \GA~with dual restarts span a small range of values relative to other methods. 
This highlights the robustness of \nuPI since the $\kp$ coefficient did not need to be tuned independently for each constraint.

\begin{table*}[h!]
\centering
\caption[CIFAR10 results for \textit{layer-wise} sparsity under a $70\%$ density target.]{CIFAR10 results for \textit{layer-wise} sparsity under a $70\%$ density target. GA and momentum methods overshoot to \textit{different} values for each constraint. 
\textbf{\nuPI achieves the desired sparsity \textit{on all layers} while achieving the highest train accuracy.} \captioncomment{As higher density correlates to higher train accuracy, overshooting to a lower density is undesirable. All optimizers use the same step-size. \textit{This table is the same as \cref{table*:layerwise_sparsity}. We repeat it here for the reader's convenience.} }}
\label{table*:layer-wise_density_70_icml}
\hspace*{-7mm}
\begin{adjustbox}{max width=1.\textwidth}
\begin{tabular}{l|rr|rrr|rrr}
\toprule
\multirow{2}{*}{Method} & \multicolumn{2}{|c}{Accuracy} & \multicolumn{3}{|c}{Violation} & \multicolumn{3}{|c}{Relative Violation} \\
 & \multicolumn{1}{|c}{Train} & \multicolumn{1}{c|}{Test} & \multicolumn{1}{|c}{Min} & \multicolumn{1}{c}{Max} & \multicolumn{1}{c}{Range} & \multicolumn{1}{|c}{Min} & \multicolumn{1}{c}{Max} & \multicolumn{1}{c}{Range} \\
\midrule
Polyak $\beta = -0.5$ & 91.9\color{gray}{\small{ $\pm$ 0.18}} & 83.6\color{gray}{\small{ $\pm$ 1.40}} & -26.5\color{gray}{\small{ $\pm$ 0.81}} & -7.9\color{gray}{\small{ $\pm$ 0.86}} & 18.9\color{gray}{\small{ $\pm$ 1.31}} & -37.8\color{gray}{\small{ $\pm$ 1.15}} & -11.4\color{gray}{\small{ $\pm$ 1.22}} & 27.0\color{gray}{\small{ $\pm$ 1.87}} \\
Polyak $\beta = -0.3$ & 92.1\color{gray}{\small{ $\pm$ 0.07}} & 83.4\color{gray}{\small{ $\pm$ 1.44}} & -27.1\color{gray}{\small{ $\pm$ 0.73}} & -6.7\color{gray}{\small{ $\pm$ 0.38}} & 20.6\color{gray}{\small{ $\pm$ 0.92}} & -38.8\color{gray}{\small{ $\pm$ 1.05}} & -9.6\color{gray}{\small{ $\pm$ 0.55}} & 29.4\color{gray}{\small{ $\pm$ 1.31}} \\
Polyak $\beta = 0.3$ & 91.9\color{gray}{\small{ $\pm$ 0.20}} & 82.5\color{gray}{\small{ $\pm$ 1.50}} & -26.3\color{gray}{\small{ $\pm$ 0.82}} & -2.3\color{gray}{\small{ $\pm$ 0.69}} & 24.0\color{gray}{\small{ $\pm$ 0.88}} & -37.5\color{gray}{\small{ $\pm$ 1.17}} & -3.2\color{gray}{\small{ $\pm$ 0.99}} & 34.3\color{gray}{\small{ $\pm$ 1.26}} \\
GA & 92.0\color{gray}{\small{ $\pm$ 0.08}} & 84.1\color{gray}{\small{ $\pm$ 1.97}} & -27.8\color{gray}{\small{ $\pm$ 0.49}} & -5.2\color{gray}{\small{ $\pm$ 0.39}} & 22.0\color{gray}{\small{ $\pm$ 0.56}} & -39.6\color{gray}{\small{ $\pm$ 0.70}} & -7.4\color{gray}{\small{ $\pm$ 0.55}} & 31.4\color{gray}{\small{ $\pm$ 0.80}} \\
GA + Dual Restarts & 95.0\color{gray}{\small{ $\pm$ 0.22}} & 85.3\color{gray}{\small{ $\pm$ 0.61}} & -0.0\color{gray}{\small{ $\pm$ 0.00}} & 1.2\color{gray}{\small{ $\pm$ 0.28}} & 1.2\color{gray}{\small{ $\pm$ 0.28}} & -0.0\color{gray}{\small{ $\pm$ 0.00}} & 1.8\color{gray}{\small{ $\pm$ 0.40}} & 1.8\color{gray}{\small{ $\pm$ 0.40}} \\
\textit{Ours} - $\nu$PI $\kappa_p = 8.0$ & 95.1\color{gray}{\small{ $\pm$ 0.06}} & 86.2\color{gray}{\small{ $\pm$ 0.46}} & -1.7\color{gray}{\small{ $\pm$ 0.27}} & 0.1\color{gray}{\small{ $\pm$ 0.04}} & 1.8\color{gray}{\small{ $\pm$ 0.29}} & -2.4\color{gray}{\small{ $\pm$ 0.38}} & 0.2\color{gray}{\small{ $\pm$ 0.06}} & 2.5\color{gray}{\small{ $\pm$ 0.42}} \\
\bottomrule
\end{tabular}
\end{adjustbox}
\end{table*}

\begin{table*}[h!]
\centering
\caption[CIFAR10 results for \textit{layer-wise} sparsity under a $50\%$ density target.]{CIFAR10 results for \textit{layer-wise} sparsity under a $50\%$ density target. \captioncomment{As higher density correlates to higher train accuracy, overshooting to a lower density is undesirable. All optimizers use the same step-size.}}
\label{table*:layer-wise_density_50_icml}
\hspace*{-7mm}
\begin{adjustbox}{max width=1.\textwidth}
\begin{tabular}{l|rr|rrr|rrr}
\toprule
\multirow{2}{*}{Method} & \multicolumn{2}{|c}{Accuracy} & \multicolumn{3}{|c}{Violation} & \multicolumn{3}{|c}{Relative Violation} \\
 & \multicolumn{1}{|c}{Train} & \multicolumn{1}{c|}{Test} & \multicolumn{1}{|c}{Min} & \multicolumn{1}{c}{Max} & \multicolumn{1}{c}{Range} & \multicolumn{1}{|c}{Min} & \multicolumn{1}{c}{Max} & \multicolumn{1}{c}{Range} \\
\midrule
Polyak $\beta = -0.5$ & 87.5\color{gray}{\small{ $\pm$ 0.17}} & 80.2\color{gray}{\small{ $\pm$ 2.65}} & -32.6\color{gray}{\small{ $\pm$ 0.95}} & -15.9\color{gray}{\small{ $\pm$ 0.80}} & 16.4\color{gray}{\small{ $\pm$ 1.31}} & -65.1\color{gray}{\small{ $\pm$ 1.89}} & -31.7\color{gray}{\small{ $\pm$ 1.59}} & 32.9\color{gray}{\small{ $\pm$ 2.61}} \\
Polyak $\beta = -0.3$ & 87.7\color{gray}{\small{ $\pm$ 0.21}} & 80.3\color{gray}{\small{ $\pm$ 2.13}} & -29.5\color{gray}{\small{ $\pm$ 0.69}} & -15.4\color{gray}{\small{ $\pm$ 0.81}} & 13.6\color{gray}{\small{ $\pm$ 1.23}} & -59.1\color{gray}{\small{ $\pm$ 1.38}} & -30.8\color{gray}{\small{ $\pm$ 1.62}} & 27.2\color{gray}{\small{ $\pm$ 2.46}} \\
Polyak $\beta = 0.3$ & 87.4\color{gray}{\small{ $\pm$ 0.21}} & 79.7\color{gray}{\small{ $\pm$ 3.18}} & -30.9\color{gray}{\small{ $\pm$ 0.66}} & -14.1\color{gray}{\small{ $\pm$ 0.25}} & 16.9\color{gray}{\small{ $\pm$ 0.70}} & -61.8\color{gray}{\small{ $\pm$ 1.33}} & -28.3\color{gray}{\small{ $\pm$ 0.49}} & 33.9\color{gray}{\small{ $\pm$ 1.40}} \\
GA & 87.6\color{gray}{\small{ $\pm$ 0.12}} & 77.5\color{gray}{\small{ $\pm$ 3.14}} & -29.7\color{gray}{\small{ $\pm$ 1.02}} & -14.2\color{gray}{\small{ $\pm$ 0.60}} & 14.7\color{gray}{\small{ $\pm$ 1.15}} & -59.4\color{gray}{\small{ $\pm$ 2.04}} & -28.3\color{gray}{\small{ $\pm$ 1.20}} & 29.4\color{gray}{\small{ $\pm$ 2.30}} \\
GA + Dual Restarts & 92.8\color{gray}{\small{ $\pm$ 0.07}} & 83.5\color{gray}{\small{ $\pm$ 0.81}} & -0.0\color{gray}{\small{ $\pm$ 0.01}} & 1.0\color{gray}{\small{ $\pm$ 0.46}} & 1.0\color{gray}{\small{ $\pm$ 0.46}} & -0.0\color{gray}{\small{ $\pm$ 0.02}} & 1.9\color{gray}{\small{ $\pm$ 0.92}} & 1.9\color{gray}{\small{ $\pm$ 0.93}} \\
\textit{Ours} - $\nu$PI $\kappa_p = 8.0$ & 93.2\color{gray}{\small{ $\pm$ 0.06}} & 83.6\color{gray}{\small{ $\pm$ 0.87}} & -1.5\color{gray}{\small{ $\pm$ 0.13}} & 0.1\color{gray}{\small{ $\pm$ 0.08}} & 1.6\color{gray}{\small{ $\pm$ 0.19}} & -2.9\color{gray}{\small{ $\pm$ 0.26}} & 0.2\color{gray}{\small{ $\pm$ 0.16}} & 3.2\color{gray}{\small{ $\pm$ 0.37}} \\
\bottomrule
\end{tabular}
\end{adjustbox}
\end{table*}

\begin{table*}[h!]
\centering
\caption[CIFAR10 results for \textit{layer-wise} sparsity under a $30\%$ density target.]{CIFAR10 results for \textit{layer-wise} sparsity under a $30\%$ density target. \captioncomment{As higher density correlates to higher train accuracy, overshooting to a lower density is undesirable. All optimizers use the same step-size.}}
\label{table*:layer-wise_density_30_icml}
\hspace*{-7mm}
\begin{adjustbox}{max width=1.\textwidth}
\begin{tabular}{l|rr|rrr|rrr}
\toprule
\multirow{2}{*}{Method} & \multicolumn{2}{|c}{Accuracy} & \multicolumn{3}{|c}{Violation} & \multicolumn{3}{|c}{Relative Violation} \\
 & \multicolumn{1}{|c}{Train} & \multicolumn{1}{c|}{Test} & \multicolumn{1}{|c}{Min} & \multicolumn{1}{c}{Max} & \multicolumn{1}{c}{Range} & \multicolumn{1}{|c}{Min} & \multicolumn{1}{c}{Max} & \multicolumn{1}{c}{Range} \\
\midrule
Polyak $\beta = -0.5$ & 81.8\color{gray}{\small{ $\pm$ 0.19}} & 63.5\color{gray}{\small{ $\pm$ 18.58}} & -25.2\color{gray}{\small{ $\pm$ 1.54}} & -17.0\color{gray}{\small{ $\pm$ 0.37}} & 8.4\color{gray}{\small{ $\pm$ 1.73}} & -84.1\color{gray}{\small{ $\pm$ 5.12}} & -56.8\color{gray}{\small{ $\pm$ 1.23}} & 28.0\color{gray}{\small{ $\pm$ 5.77}} \\
Polyak $\beta = -0.3$ & 82.1\color{gray}{\small{ $\pm$ 0.54}} & 63.3\color{gray}{\small{ $\pm$ 8.65}} & -25.1\color{gray}{\small{ $\pm$ 1.15}} & -16.4\color{gray}{\small{ $\pm$ 0.38}} & 8.7\color{gray}{\small{ $\pm$ 0.91}} & -83.5\color{gray}{\small{ $\pm$ 3.84}} & -54.7\color{gray}{\small{ $\pm$ 1.27}} & 29.0\color{gray}{\small{ $\pm$ 3.04}} \\
Polyak $\beta = 0.3$ & 81.8\color{gray}{\small{ $\pm$ 0.32}} & 72.7\color{gray}{\small{ $\pm$ 3.36}} & -25.1\color{gray}{\small{ $\pm$ 2.12}} & -17.5\color{gray}{\small{ $\pm$ 0.24}} & 7.4\color{gray}{\small{ $\pm$ 2.12}} & -83.6\color{gray}{\small{ $\pm$ 7.07}} & -58.5\color{gray}{\small{ $\pm$ 0.79}} & 24.8\color{gray}{\small{ $\pm$ 7.07}} \\
GA & 81.8\color{gray}{\small{ $\pm$ 0.44}} & 72.7\color{gray}{\small{ $\pm$ 4.40}} & -24.8\color{gray}{\small{ $\pm$ 1.11}} & -17.0\color{gray}{\small{ $\pm$ 0.60}} & 8.5\color{gray}{\small{ $\pm$ 1.22}} & -82.5\color{gray}{\small{ $\pm$ 3.69}} & -56.7\color{gray}{\small{ $\pm$ 1.99}} & 28.2\color{gray}{\small{ $\pm$ 4.07}} \\
GA + Dual Restarts & 89.7\color{gray}{\small{ $\pm$ 0.23}} & 82.9\color{gray}{\small{ $\pm$ 2.59}} & -0.0\color{gray}{\small{ $\pm$ 0.00}} & 0.9\color{gray}{\small{ $\pm$ 0.33}} & 0.9\color{gray}{\small{ $\pm$ 0.33}} & -0.0\color{gray}{\small{ $\pm$ 0.01}} & 3.0\color{gray}{\small{ $\pm$ 1.10}} & 3.0\color{gray}{\small{ $\pm$ 1.10}} \\
\textit{Ours} - $\nu$PI $\kappa_p = 12.0$ & 89.8\color{gray}{\small{ $\pm$ 0.11}} & 82.0\color{gray}{\small{ $\pm$ 2.45}} & -0.3\color{gray}{\small{ $\pm$ 0.13}} & 0.3\color{gray}{\small{ $\pm$ 0.03}} & 0.6\color{gray}{\small{ $\pm$ 0.12}} & -0.8\color{gray}{\small{ $\pm$ 0.42}} & 1.0\color{gray}{\small{ $\pm$ 0.11}} & 2.1\color{gray}{\small{ $\pm$ 0.39}} \\
\bottomrule
\end{tabular}
\end{adjustbox}
\end{table*}

\begin{table*}[h!]
\centering
\caption[CIFAR10 results for \textit{layer-wise} sparsity under a $10\%$ density target.]{CIFAR10 results for \textit{layer-wise} sparsity under a $10\%$ density target. As higher density correlates to higher train accuracy, overshooting to a lower density is undesirable.  \captioncomment{All optimizers use the same step-size.}}
\label{table*:layer-wise_density_10_icml}
\hspace*{-7mm}
\begin{adjustbox}{max width=1.\textwidth}
\begin{tabular}{l|rr|rrr|rrr}
\toprule
\multirow{2}{*}{Method} & \multicolumn{2}{|c}{Accuracy} & \multicolumn{3}{|c}{Violation} & \multicolumn{3}{|c}{Relative Violation} \\
 & \multicolumn{1}{|c}{Train} & \multicolumn{1}{c|}{Test} & \multicolumn{1}{|c}{Min} & \multicolumn{1}{c}{Max} & \multicolumn{1}{c}{Range} & \multicolumn{1}{|c}{Min} & \multicolumn{1}{c}{Max} & \multicolumn{1}{c}{Range} \\
\midrule
Polyak $\beta = -0.5$ & 71.3\color{gray}{\small{ $\pm$ 0.61}} & 61.0\color{gray}{\small{ $\pm$ 9.50}} & -10.0\color{gray}{\small{ $\pm$ 0.14}} & -5.9\color{gray}{\small{ $\pm$ 0.47}} & 4.0\color{gray}{\small{ $\pm$ 0.48}} & -100.0\color{gray}{\small{ $\pm$ 1.36}} & -58.7\color{gray}{\small{ $\pm$ 4.74}} & 40.5\color{gray}{\small{ $\pm$ 4.83}} \\
Polyak $\beta = -0.3$ & 70.9\color{gray}{\small{ $\pm$ 0.60}} & 49.5\color{gray}{\small{ $\pm$ 16.33}} & -10.0\color{gray}{\small{ $\pm$ 0.01}} & -5.9\color{gray}{\small{ $\pm$ 0.60}} & 4.1\color{gray}{\small{ $\pm$ 0.60}} & -100.0\color{gray}{\small{ $\pm$ 0.11}} & -58.9\color{gray}{\small{ $\pm$ 5.97}} & 41.1\color{gray}{\small{ $\pm$ 5.95}} \\
Polyak $\beta = 0.3$ & 69.2\color{gray}{\small{ $\pm$ 0.71}} & 56.3\color{gray}{\small{ $\pm$ 15.05}} & -10.0\color{gray}{\small{ $\pm$ 0.02}} & -6.7\color{gray}{\small{ $\pm$ 0.06}} & 3.3\color{gray}{\small{ $\pm$ 0.08}} & -100.0\color{gray}{\small{ $\pm$ 0.15}} & -67.3\color{gray}{\small{ $\pm$ 0.65}} & 32.7\color{gray}{\small{ $\pm$ 0.79}} \\
GA & 71.0\color{gray}{\small{ $\pm$ 0.32}} & 49.6\color{gray}{\small{ $\pm$ 11.1}} & -10.0\color{gray}{\small{ $\pm$ 0.19}} & -6.1\color{gray}{\small{ $\pm$ 0.25}} & 3.9\color{gray}{\small{ $\pm$ 0.42}} & -100.0\color{gray}{\small{ $\pm$ 1.91}} & -61.2\color{gray}{\small{ $\pm$ 2.54}} & 38.8\color{gray}{\small{ $\pm$ 4.24}} \\
GA + Dual Restarts & 83.1\color{gray}{\small{ $\pm$ 0.27}} & 73.1\color{gray}{\small{ $\pm$ 4.87}} & -0.0\color{gray}{\small{ $\pm$ 0.00}} & 1.6\color{gray}{\small{ $\pm$ 0.14}} & 1.6\color{gray}{\small{ $\pm$ 0.14}} & -0.0\color{gray}{\small{ $\pm$ 0.02}} & 16.1\color{gray}{\small{ $\pm$ 1.39}} & 16.1\color{gray}{\small{ $\pm$ 1.40}} \\
\textit{Ours} - $\nu$PI $\kappa_p = 12.0$ & 81.4\color{gray}{\small{ $\pm$ 0.39}} & 42.8\color{gray}{\small{ $\pm$ 14.54}} & -1.9\color{gray}{\small{ $\pm$ 0.34}} & 0.9\color{gray}{\small{ $\pm$ 0.46}} & 3.2\color{gray}{\small{ $\pm$ 0.72}} & -19.1\color{gray}{\small{ $\pm$ 3.41}} & 9.3\color{gray}{\small{ $\pm$ 4.62}} & 31.7\color{gray}{\small{ $\pm$ 7.17}} \\
\bottomrule
\end{tabular}
\end{adjustbox}
\end{table*}

\section{Additional Experiments}

In this section, we include additional experimental results on the sparsity-constrained task. We analyze the dynamics of the multiplier throughout training in \cref{app:dynamics}, and conduct ablation studies on $\kp$ for \nuPI, the momentum coefficients of \algo{Polyak} and \algo{Nesterov}, and the step-size of \algo{Adam}. 

\subsection{Dynamics}
\label{app:dynamics}

The dynamics shown in \cref{fig:icml2024:dynamics_comparison} illustrate the change of the constraint violation and multipliers throughout optimization. We observe that \GA, \Polyak, and \Adam~quickly lead to overshoot into the feasible region, leading to overly sparse models. As training progresses however, these methods move closer to the boundary of the feasible region, reversing the initial overshoot. This recovery is most notorious for \Adam, whose multiplier decreases quickly after feasibility. \GA~with dual restarts sets the value of the multiplier to zero as soon as feasibility is achieved, thus preventing an incursion into the feasible set. \nuPI produces well-behaved multipliers and successfully avoids constraint overshoot.

\begin{figure*}[h!]
    \centering
    \includegraphics[scale=0.8,trim={0 5mm 0 0},clip]{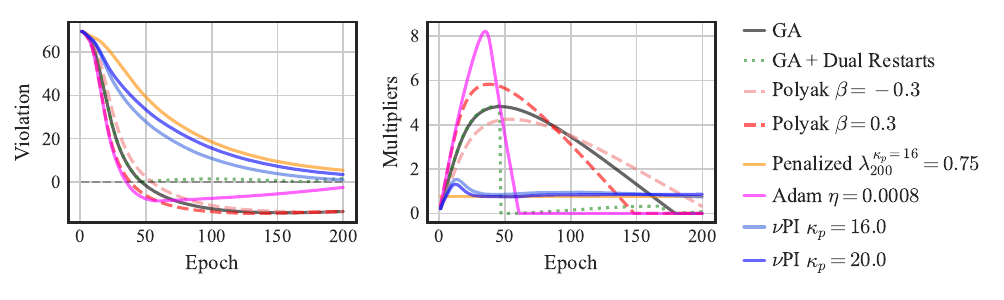}
    \caption{Dynamics plot for global sparsity under a 30\% density target.}
    \label{fig:icml2024:dynamics_comparison}
\end{figure*}

Note that for this sparsity task, it is reasonable to expect that the constraint is active at the constrained optimum since more model capacity correlates with better performance. However, note that \nuPI is the only method that provides a non-zero estimate of the Lagrange multiplier. The usefulness of Lagrange multiplier estimates is highlighted in \cref{fig:icml2024:dynamics_penalized}.

\begin{figure}[h!]
    \centering
    \includegraphics[scale=0.8,trim={0 0mm 0 0},clip]{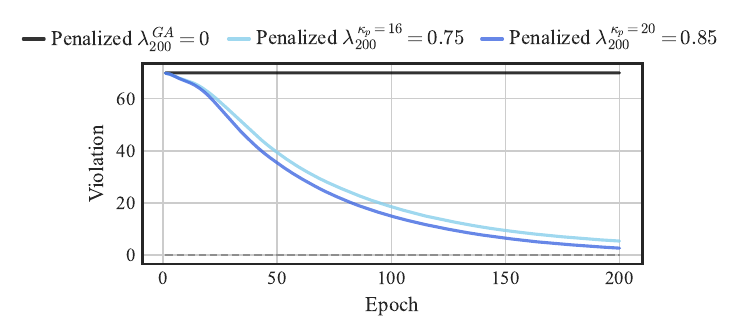}
    \vspace{-1ex}
    \caption{Dynamics plot for global sparsity under a 30\% density target.}
    \label{fig:icml2024:dynamics_penalized}
\end{figure}

\Cref{fig:icml2024:dynamics_penalized} considers unconstrained $L_0$-regularization experiments.
We use the final value of the multipliers corresponding to \nuPI$(\kp=16)$ and \nuPI$(\kp=20)$ runs as the (fixed) penalty coefficient for 200 epochs in the penalized formulation of the problem, akin to \citet{louizos2017learning}. We also include an experiment using the multiplier estimate from \GA (equal to zero).

\newpage
Unsurprisingly, the run with the multiplier of zero leads to 100\% density, since the sparsity penalty does not exert any influence during training. In contrast, the runs with the \nuPI multiplier estimates not only lead to sparse models but are also very close to the desired model density by the end of training.
This is remarkable since the problem we are solving is nonconvex, and optimal Lagrange multiplier values may not even exist.

\subsection{Ablation on the value of $\kp$}

In this section, we fix $\ki$ for \nuPI and ablate on the hyperparameter $\kp$ for two sparsity levels. The results are presented in  \cref{fig:icml2024:kp_ablation_global} and \cref{tab:icml2024:global_density_50_kp_ablation,tab:icml2024:global_density_30_kp_ablation}. 

\vspace{-1ex}

\begin{figure}[h!]
    \centering
    \includegraphics[scale=1]{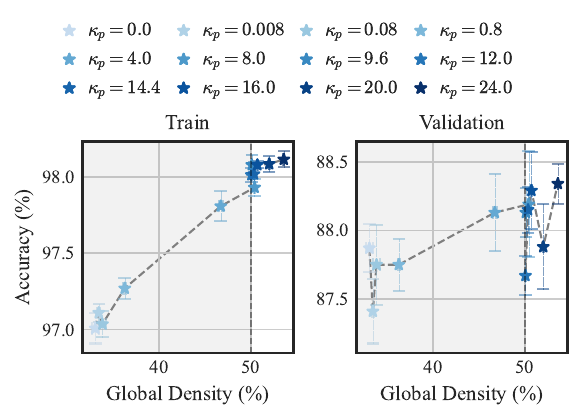}
    \includegraphics[scale=1,trim={0 0 0 18mm},clip]{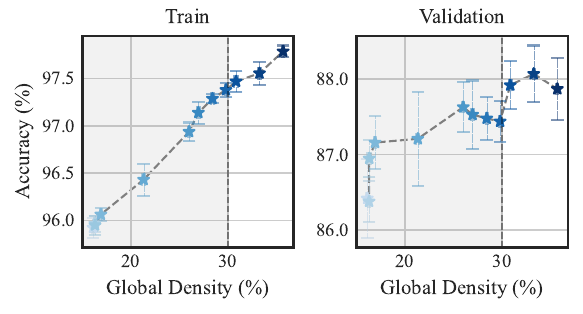}
    \vspace{-2ex}
    \caption{Ablation on trade-offs achievable by \nuPI under global density targets of 50\% (top) and 30\% (bottom).}
    \label{fig:icml2024:kp_ablation_global}
\end{figure}

We see that a larger $\kp$ leads to more damping and less overshoot. Note that there is a strong correlation between training accuracy and model density. Hence, it is important to be able to control overshoot in sparsity constraints and take advantage of the maximum allowed density for the sake of accuracy. There is a range of $\kp$ that can achieve such desired sparsity. The same trend roughly extends to validation accuracy (with some caveats due to generalization errors).

\begin{table*}[h!]
\centering
\caption{Ablation on the $\kp$ hyperparameter for a CIFAR10 task with a global density target of $\epsilon= 50\%$. \textbf{$\kp$ monotonically controls the degree of damping and constraint overshoot.}}
\label{tab:icml2024:global_density_50_kp_ablation}
\begin{adjustbox}{max width=0.8\textwidth}
\begin{tabular}{c|cc|cc}
\toprule
$\nu$PI  $\kappa_p$& Train Acc. & Test Acc. & Violation & Relative Violation \\
\midrule
0 & 97.0\color{gray}{\small{ $\pm$ 0.10}} & 87.9\color{gray}{\small{ $\pm$ 0.18}} & -17.0\color{gray}{\small{ $\pm$ 0.36}} & -33.9\color{gray}{\small{ $\pm$ 0.72}} \\
0.008 & 97.1\color{gray}{\small{ $\pm$ 0.06}} & 87.4\color{gray}{\small{ $\pm$ 0.23}} & -16.6\color{gray}{\small{ $\pm$ 0.14}} & -33.2\color{gray}{\small{ $\pm$ 0.28}} \\
0.08 & 97.0\color{gray}{\small{ $\pm$ 0.09}} & 87.7\color{gray}{\small{ $\pm$ 0.29}} & -16.2\color{gray}{\small{ $\pm$ 0.42}} & -32.4\color{gray}{\small{ $\pm$ 0.84}} \\
0.8 & 97.3\color{gray}{\small{ $\pm$ 0.07}} & 87.7\color{gray}{\small{ $\pm$ 0.19}} & -13.8\color{gray}{\small{ $\pm$ 0.13}} & -27.5\color{gray}{\small{ $\pm$ 0.27}} \\
4 & 97.8\color{gray}{\small{ $\pm$ 0.10}} & 88.1\color{gray}{\small{ $\pm$ 0.28}} & -3.3\color{gray}{\small{ $\pm$ 0.23}} & -6.5\color{gray}{\small{ $\pm$ 0.46}} \\
8 & 97.9\color{gray}{\small{ $\pm$ 0.06}} & 88.2\color{gray}{\small{ $\pm$ 0.39}} & 0.4\color{gray}{\small{ $\pm$ 0.03}} & 0.9\color{gray}{\small{ $\pm$ 0.06}} \\
9.6 & 98.1\color{gray}{\small{ $\pm$ 0.07}} & 88.1\color{gray}{\small{ $\pm$ 0.18}} & 0.1\color{gray}{\small{ $\pm$ 0.02}} & 0.2\color{gray}{\small{ $\pm$ 0.04}} \\
12 & 98.0\color{gray}{\small{ $\pm$ 0.05}} & 87.7\color{gray}{\small{ $\pm$ 0.14}} & 0.1\color{gray}{\small{ $\pm$ 0.01}} & 0.2\color{gray}{\small{ $\pm$ 0.03}} \\
14.4 & 98.0\color{gray}{\small{ $\pm$ 0.08}} & 88.2\color{gray}{\small{ $\pm$ 0.17}} & 0.4\color{gray}{\small{ $\pm$ 0.02}} & 0.7\color{gray}{\small{ $\pm$ 0.05}} \\
16 & 98.1\color{gray}{\small{ $\pm$ 0.02}} & 88.3\color{gray}{\small{ $\pm$ 0.28}} & 0.7\color{gray}{\small{ $\pm$ 0.02}} & 1.5\color{gray}{\small{ $\pm$ 0.04}} \\
20 & 98.1\color{gray}{\small{ $\pm$ 0.05}} & 87.9\color{gray}{\small{ $\pm$ 0.31}} & 2.0\color{gray}{\small{ $\pm$ 0.03}} & 4.0\color{gray}{\small{ $\pm$ 0.07}} \\
24 & 98.1\color{gray}{\small{ $\pm$ 0.05}} & 88.3\color{gray}{\small{ $\pm$ 0.15}} & 3.6\color{gray}{\small{ $\pm$ 0.03}} & 7.2\color{gray}{\small{ $\pm$ 0.06}} \\
\bottomrule
\end{tabular}
\end{adjustbox}
\end{table*}

\begin{table*}[h!]
\centering
\caption{Ablation on the $\kp$ hyperparameter for a CIFAR10 task with a global density target of $\epsilon= 30\%$.}
\label{tab:icml2024:global_density_30_kp_ablation}
\begin{adjustbox}{max width=0.8\textwidth}
\begin{tabular}{c|cc|cc}
\toprule
$\nu$PI  $\kappa_p$ & Train Acc. & Test Acc. & Violation & Relative Violation \\
\midrule
0 & 95.9\color{gray}{\small{ $\pm$ 0.11}} & 86.4\color{gray}{\small{ $\pm$ 0.52}} & -13.9\color{gray}{\small{ $\pm$ 0.11}} & -46.3\color{gray}{\small{ $\pm$ 0.36}} \\
0.008 & 96.0\color{gray}{\small{ $\pm$ 0.08}} & 86.4\color{gray}{\small{ $\pm$ 0.27}} & -13.7\color{gray}{\small{ $\pm$ 0.13}} & -45.7\color{gray}{\small{ $\pm$ 0.43}} \\
0.08 & 95.9\color{gray}{\small{ $\pm$ 0.10}} & 86.9\color{gray}{\small{ $\pm$ 0.25}} & -13.7\color{gray}{\small{ $\pm$ 0.18}} & -45.6\color{gray}{\small{ $\pm$ 0.59}} \\
0.8 & 96.1\color{gray}{\small{ $\pm$ 0.07}} & 87.2\color{gray}{\small{ $\pm$ 0.35}} & -13.1\color{gray}{\small{ $\pm$ 0.13}} & -43.6\color{gray}{\small{ $\pm$ 0.45}} \\
4 & 96.4\color{gray}{\small{ $\pm$ 0.17}} & 87.2\color{gray}{\small{ $\pm$ 0.62}} & -8.7\color{gray}{\small{ $\pm$ 0.16}} & -28.9\color{gray}{\small{ $\pm$ 0.54}} \\
8 & 96.9\color{gray}{\small{ $\pm$ 0.10}} & 87.6\color{gray}{\small{ $\pm$ 0.33}} & -4.0\color{gray}{\small{ $\pm$ 0.10}} & -13.3\color{gray}{\small{ $\pm$ 0.32}} \\
9.6 & 97.1\color{gray}{\small{ $\pm$ 0.12}} & 87.5\color{gray}{\small{ $\pm$ 0.45}} & -3.0\color{gray}{\small{ $\pm$ 0.18}} & -10.1\color{gray}{\small{ $\pm$ 0.60}} \\
12 & 97.3\color{gray}{\small{ $\pm$ 0.04}} & 87.5\color{gray}{\small{ $\pm$ 0.28}} & -1.6\color{gray}{\small{ $\pm$ 0.11}} & -5.3\color{gray}{\small{ $\pm$ 0.37}} \\
14.4 & 97.4\color{gray}{\small{ $\pm$ 0.08}} & 87.4\color{gray}{\small{ $\pm$ 0.27}} & -0.2\color{gray}{\small{ $\pm$ 0.11}} & -0.7\color{gray}{\small{ $\pm$ 0.38}} \\
16 & 97.5\color{gray}{\small{ $\pm$ 0.11}} & 87.9\color{gray}{\small{ $\pm$ 0.32}} & 0.8\color{gray}{\small{ $\pm$ 0.17}} & 2.8\color{gray}{\small{ $\pm$ 0.57}} \\
20 & 97.6\color{gray}{\small{ $\pm$ 0.12}} & 88.1\color{gray}{\small{ $\pm$ 0.38}} & 3.3\color{gray}{\small{ $\pm$ 0.11}} & 10.9\color{gray}{\small{ $\pm$ 0.36}} \\
24 & 97.8\color{gray}{\small{ $\pm$ 0.06}} & 87.9\color{gray}{\small{ $\pm$ 0.41}} & 5.7\color{gray}{\small{ $\pm$ 0.11}} & 19.0\color{gray}{\small{ $\pm$ 0.36}} \\
\bottomrule
\end{tabular}
\end{adjustbox}
\end{table*}

\subsection{\Adam}

We also experimented with a range of learning choices for \Adam~to explore their effect on constraint satisfaction and overshoot. The results are shown in \cref{fig:icml2024:adam_ablation_global}, and \cref{tab:icml2024:adam_ablation_global_50,tab:icml2024:adam_ablation_global_30}.

\begin{figure}[h!]
    \centering
    \includegraphics[scale=1]{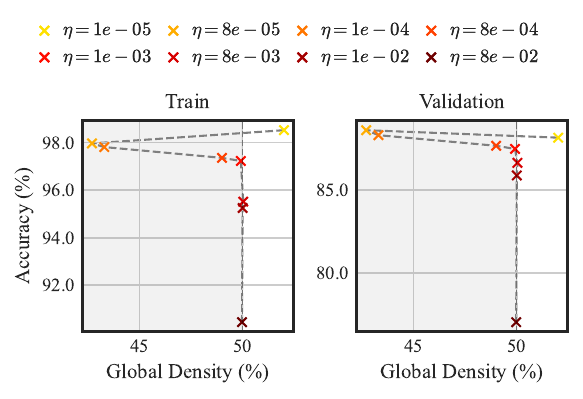}
    \includegraphics[scale=1,trim={0 0 0 15mm},clip]{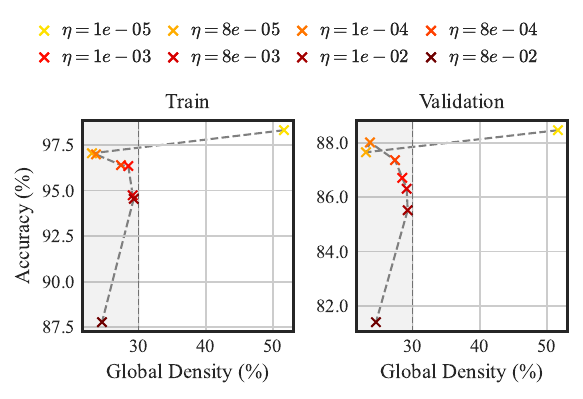}
    \caption{Ablation on the density-accuracy trade-offs achievable by \Adam~under global density targets of 50\% (top) and 30\% (bottom).}
    \label{fig:icml2024:adam_ablation_global}
\end{figure}

We observe that the influence of \Adam's learning on the constraint overshoot is not monotonic. When the step-size is small, \Adam~runs do not satisfy the constraint at the end of training. As the step-size increases, satisfaction is achieved together with varying degrees of overshoot into the feasible region. A range of larger step-sizes that lie at the sweet spot of almost exact constraint satisfaction. 

The sensitivity and non-monotonicity of the step-size make the tuning of the step-size hyperparameter for \Adam~challenging. 
Note that we restricted our experiments to the default EMA coefficients for \Adam~following PyTorch: $\beta_1=0.9$ and $\beta_2= 0.999$.

\begin{table*}[h!]
\centering
\caption{Ablation on the step-size hyperparameter for \Adam~on a CIFAR10 task with a global density target of $\epsilon= 50\%$.}
\label{tab:icml2024:adam_ablation_global_50}
\begin{adjustbox}{max width=1\textwidth}
\begin{tabular}{c|cc|cc}
\toprule
Adam $\eta$ & Train Acc. & Test Acc. & Violation \\
\midrule
$1\cdot 10^{-5}$ & 98.52 & 88.17 & 2.02 \\
$8\cdot 10^{-5}$ & 97.97 & 88.62 & -7.30 \\
$1\cdot 10^{-4}$ & 97.81 & 88.32 & -6.70 \\
$8\cdot 10^{-4}$ & 97.36 & 87.68 & -0.99 \\
$1\cdot 10^{-3}$ & 97.23 & 87.49 & -0.08 \\
$8\cdot 10^{-3}$ & 95.52 & 86.65 & 0.04 \\
$1\cdot 10^{-2}$ & 95.25 & 85.89 & 0.01 \\
$8\cdot 10^{-2}$ & 90.45 & 77.04 & -0.02 \\
\bottomrule
\end{tabular}
\end{adjustbox}
\end{table*}

\begin{table*}[h!]
\centering
\caption{Ablation on the step-size hyperparameter for \Adam~on a CIFAR10 task with a global density target of $\epsilon= 30\%$.}
\label{tab:icml2024:adam_ablation_global_30}
\begin{adjustbox}{max width=1\textwidth}
\begin{tabular}{c|cc|cc}
\toprule
Adam $\eta$ & Train Acc. & Test Acc. & Violation \\
\midrule
$1\cdot 10^{-5}$ & 98.32 & 88.46 & 21.66 \\
$8\cdot 10^{-5}$ & 97.05 & 87.65 & -6.94 \\
$1\cdot 10^{-4}$ & 96.99 & 88.01 & -6.34 \\
$8\cdot 10^{-4}$ & 96.40 & 87.36 & -2.61 \\
$1\cdot 10^{-3}$ & 96.35 & 86.71 & -1.53 \\
$8\cdot 10^{-3}$ & 94.74 & 86.31 & -0.88 \\
$1\cdot 10^{-2}$ & 94.55 & 85.52 & -0.73 \\
$8\cdot 10^{-2}$ & 87.78 & 81.41 & -5.45 \\
\bottomrule
\end{tabular}
\end{adjustbox}
\end{table*}

\subsection{Momentum}

We carried out similar ablations on the momentum coefficient of \Polyak~and \Nesterov using both positive and negative values. The results are shown in \cref{fig:icml2024:momentum_ablation_global_30}, and \cref{tab:icml2024:nesterov_ablation_global_30,tab:icml2024:polyak_ablation_global_30}.
We observe significant overshoot into the feasible region for all attempted values, compared to the desired target density of 30\%.

\begin{figure}[h!]
    \centering
    \includegraphics[scale=1]{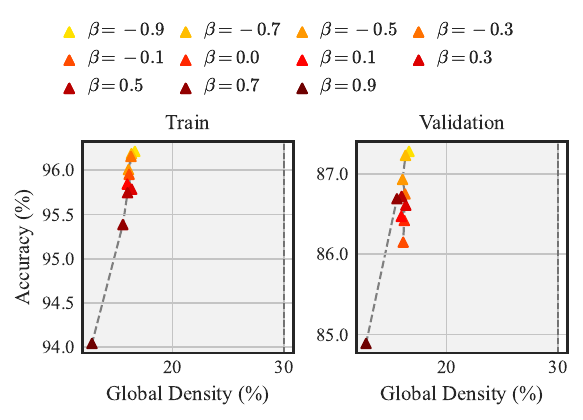}
    \includegraphics[scale=1,trim={0 0 0 18mm},clip]{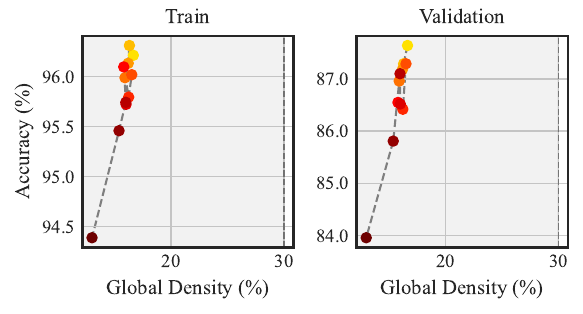}
    \caption{Trade-off plot under a 30\% global density target for \Nesterov~(top) and \Polyak~(bottom) momentum.}
    \label{fig:icml2024:momentum_ablation_global_30}
\end{figure}

\begin{table*}[h!]
\centering
\caption{Ablation on the momentum hyperparameter for \Nesterov~on a CIFAR10 task with a 30\% global density target.}
\label{tab:icml2024:nesterov_ablation_global_30}
\begin{adjustbox}{max width=1\textwidth}
\begin{tabular}{c|cc|cc}
\toprule
Nesterov $\beta$ & Train Acc. & Test Acc. & Violation \\
\midrule
-0.9 & 96.21 & 87.28 & -13.36 \\
-0.7 & 96.19 & 87.23 & -13.67 \\
-0.5 & 96.01 & 86.93 & -13.93 \\
-0.3 & 96.16 & 86.75 & -13.71 \\
-0.1 & 95.95 & 86.15 & -13.88 \\
0.0 & 95.79 & 86.42 & -13.78 \\
0.1 & 95.84 & 86.47 & -14.05 \\
0.3 & 95.79 & 86.61 & -13.64 \\
0.5 & 95.75 & 86.72 & -14.02 \\
0.7 & 95.39 & 86.69 & -14.44 \\
0.9 & 94.04 & 84.89 & -17.20 \\
\bottomrule
\end{tabular}
\end{adjustbox}
\end{table*}

\begin{table*}[h!]
\centering
\caption{Ablation on the momentum hyperparameter for \Polyak~on a CIFAR10 task with a 30\% global density target.}
\label{tab:icml2024:polyak_ablation_global_30}
\begin{adjustbox}{max width=1\textwidth}
\begin{tabular}{c|cc|cc}
\toprule
Polyak $\beta$  & Train Acc. & Test Acc. & Violation \\
\midrule
-0.9 & 96.21 & 87.64 & -13.36 \\
-0.7 & 96.31 & 87.29 & -13.71 \\
-0.5 & 96.13 & 87.18 & -13.82 \\
-0.3 & 95.99 & 86.96 & -14.10 \\
-0.1 & 96.02 & 87.29 & -13.50 \\
0.0 & 95.79 & 86.42 & -13.78 \\
0.1 & 96.10 & 86.55 & -14.21 \\
0.3 & 95.72 & 86.52 & -14.00 \\
0.5 & 95.74 & 87.10 & -14.03 \\
0.7 & 95.46 & 85.81 & -14.63 \\
0.9 & 94.39 & 83.96 & -17.02 \\
\bottomrule
\end{tabular}
\end{adjustbox}
\end{table*}

\end{document}